\begin{document}
\newcommand{\method}{\textsc{TimeDistill}}

\clearpage
\newcommand{\titlelogo}{%
  \raisebox{-0.25\height}{\includegraphics[height=1.4em]{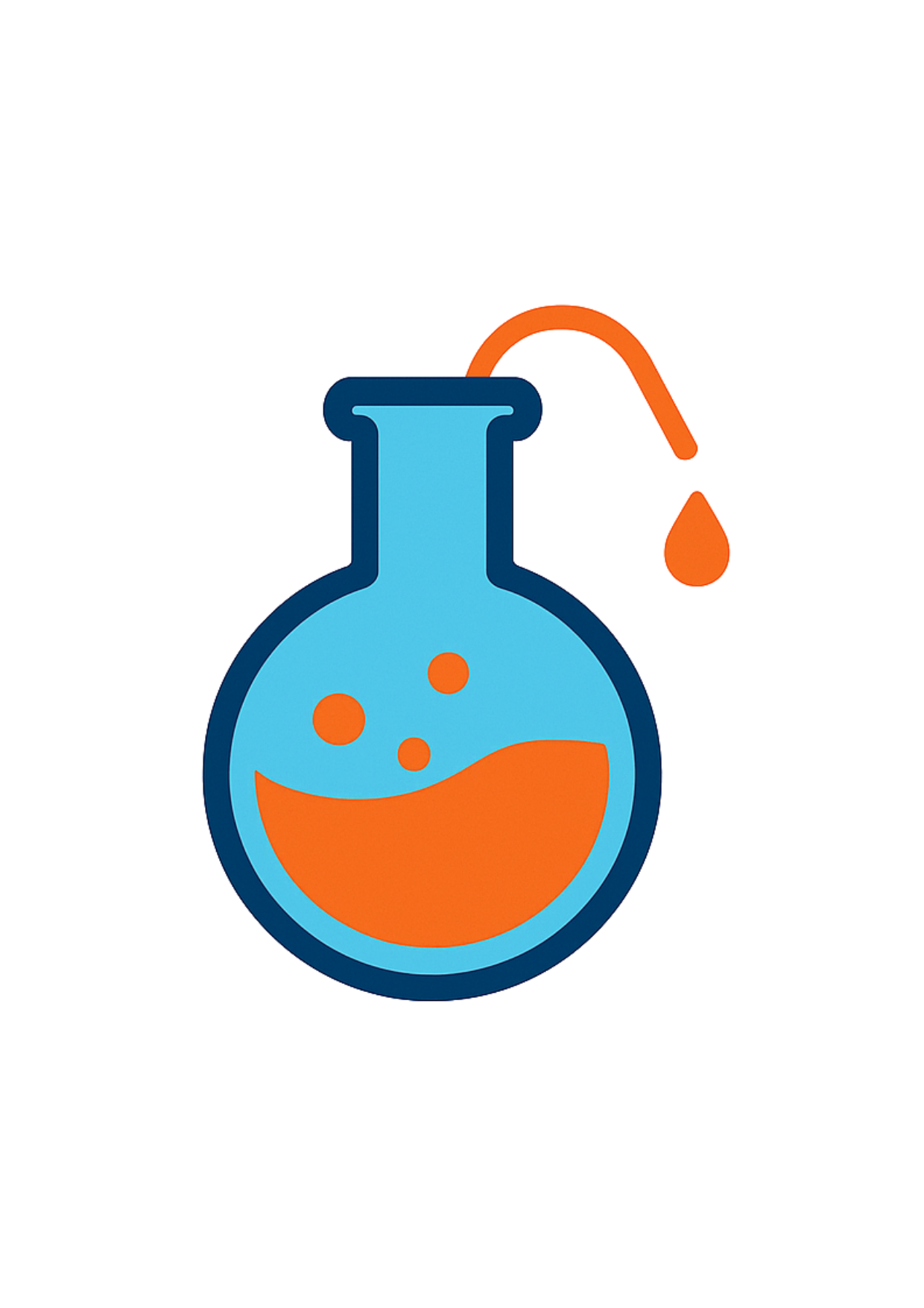}}%
}

\title{%
  \titlelogo
  \method{}: Efficient Long-Term Time Series Forecasting with MLP via Cross-Architecture Distillation%
}

\author{Juntong Ni}
\affiliation{%
  \institution{Emory University}
  \city{Atlanta}
  \country{United States}}
\email{juntong.ni@emory.edu}

\author{Zewen Liu}
\affiliation{%
  \institution{Emory University}
  \city{Atlanta}
  \country{United States}}
\email{zewen.liu@emory.edu}

\author{Shiyu Wang*}
\affiliation{%
  \institution{}
  \city{}
  \country{}
  }
\email{kwuking@gmail.com}

\author{Ming Jin}
\affiliation{%
  \institution{Griffith University}
  \city{Brisbane}
  \country{Australia}}
\email{mingjinedu@gmail.com}

\author{Wei Jin}
\affiliation{%
  \institution{Emory University}
  \city{Atlanta}
  \country{United States}}
\email{wei.jin@emory.edu}

\def\authors{Juntong Ni, Zewen Liu, Shiyu Wang, Ming Jin, and Wei Jin}

\renewcommand{\shortauthors}{Juntong Ni, Zewen Liu, Shiyu Wang, Ming Jin, and Wei Jin}


\thanks{* Corresponding author.}


\begin{abstract}
Transformer-based and CNN-based methods demonstrate strong performance in long-term time series forecasting. However, their high computational and storage requirements can hinder large-scale deployment. To address this limitation, we propose integrating lightweight MLP with advanced architectures using knowledge distillation (\textit{KD}). Our preliminary study reveals different models can capture complementary patterns, particularly multi-scale and multi-period patterns in the temporal and frequency domains.
Based on this observation, we introduce \method{}, a cross-architecture \textit{KD} framework that transfers these patterns from teacher models (e.g., Transformers, CNNs) to MLP. Additionally, we provide a theoretical analysis, demonstrating that our \textit{KD} approach can be interpreted as a specialized form of \textit{mixup} data augmentation.
\method{} improves MLP performance by up to 18.6\%, surpassing teacher models on eight datasets. It also achieves up to 7$\times$ faster inference and requires 130$\times$ fewer parameters. Furthermore, we conduct extensive evaluations to highlight the versatility and effectiveness of \method{}. The code is available at \href{https://github.com/LingFengGold/TimeDistill}{Github Code Repo}.

\end{abstract}

\begin{CCSXML}
<ccs2012>
   <concept>
       <concept_id>10002951.10002952.10002953.10010820.10010518</concept_id>
       <concept_desc>Information systems~Temporal data</concept_desc>
       <concept_significance>500</concept_significance>
       </concept>
   <concept>
       <concept_id>10002950.10003648.10003688.10003693</concept_id>
       <concept_desc>Mathematics of computing~Time series analysis</concept_desc>
       <concept_significance>500</concept_significance>
       </concept>
   <concept>
       <concept_id>10010147.10010257.10010293.10010294</concept_id>
       <concept_desc>Computing methodologies~Neural networks</concept_desc>
       <concept_significance>500</concept_significance>
       </concept>
 </ccs2012>
\end{CCSXML}

\ccsdesc[500]{Information systems~Temporal data}
\ccsdesc[500]{Mathematics of computing~Time series analysis}
\ccsdesc[500]{Computing methodologies~Neural networks}

\keywords{Time Series Forecasting, Knowledge Distillation}



\maketitle

\section{Introduction}

Forecasting is a notably critical problem in the time series analysis community, which aims to predict future time series based on historical time series records~\cite{wang2024deep}. It has broad practical applications such as climate modeling~\cite{wu2023interpretable}, traffic flow management~\cite{yin2021deep}, healthcare monitoring~\cite{kaushik2020ai} and finance analytics~\cite{granger2014forecasting}.

Recently, there has been an ongoing debate over which deep learning architecture best suits time series forecasting. The 
\setlength{\columnsep}{0.8em} 
\begin{wrapfigure}[14]{r!}{0.28\textwidth} 
    \centering
    \vspace{-1.5em}
    \includegraphics[width=1\linewidth]{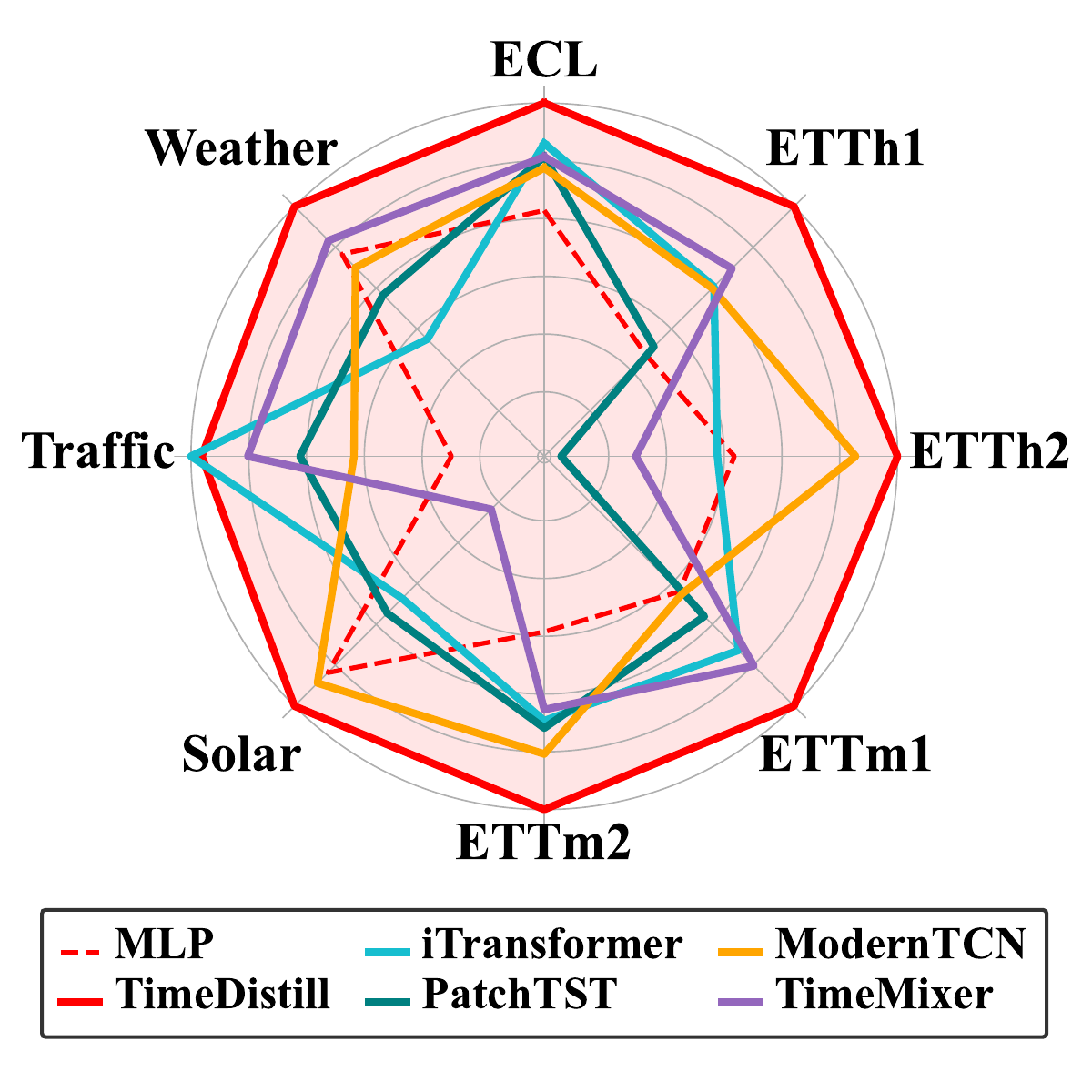}
    \vspace{-3em}
    \caption{Performance comparison.}
    \label{fig:result_rader}
\end{wrapfigure}
rise of Transformers in various domains~\cite{DBLP:journals/corr/abs-1810-04805,khan2022transformers} has led to their wide adoption in  time series forecasting~\cite{wen2022transformers, autoformer, fedformer, informer, patchtst, itransformer}, leveraging the strong capabilities of capturing pairwise dependencies and extracting multi-level representations within sequential data.
Similarly, CNN architectures have also proven effective by developing convolution blocks for time series~\cite{moderntcn, micn}.
However, despite the strong performance of Transformer-based and CNN-based models, they face significant challenges in large-scale industrial applications due to their relatively high computational demands, especially in latency-sensitive scenarios like financial prediction and healthcare monitoring~\cite{granger2014forecasting, kaushik2020ai}. In contrast, simpler linear or MLP models offer greater efficiency, although with lower performance~\cite{dlinear, sparsetsf}. These contrasting observations raises an intriguing question:


\begin{center}
\begin{tcolorbox}[colback=yellow!10!white, colframe=yellow!75!black, width=0.45\textwidth, boxrule=0.5mm, arc=5mm, auto outer arc]
\centering
\textit{\textbf{Can we combine MLP with other advanced architectures (e.g., Transformers and CNNs) to create a powerful yet efficient model?}}
\end{tcolorbox}
\end{center}

A promising approach to addressing this question is knowledge distillation (\textit{KD})~\cite{hinton2015distilling}, a technique that transfers knowledge from a larger and more complex model (\textit{teacher}) to a smaller and simpler one (\textit{student}) while maintaining comparable performance. In this work, we pioneer \textit{cross-architecture KD} in time series forecasting, with MLP as the \textit{student} and other advanced architectures (e.g., Transformers and CNNs) as the \textit{teacher}. However, designing such a framework is non-trivial, as it remains unclear what ``knowledge'' should be distilled into MLP.

To investigate this potential, we conduct a comparative analysis of prediction patterns between MLP and other time series models.  Our findings reveal that MLP still excels on some data subsets despite its overall lower performance (Sec.~\ref{sec:why_distill}), which highlights the value of harnessing the \textit{complementary capabilities} across different architectures. 
To further explore the specific properties to distill, we focus on two key time series patterns: multi-scale pattern in temporal domain and multi-period pattern in frequency domain, given that they are vital in capturing the complex structures typical of many time series.
\textbf{(1)~Multi-Scale Pattern:} Real-world 
 time series often show variations at multiple temporal scales. For example, hourly recorded traffic flow data capture changes within each day, while daily sampled data lose fine-grained details but reveal patterns related to holidays~\cite{timemixer}. We observe that models that perform well on the finest scale also perform accurately on coarser scales, while MLP fails on most scales (Sec.~\ref{sec:what2distill}). 
\textbf{(2) Multi-Period Pattern:} Time series often exhibit multiple periodicities. For instance, weather measurements may have both daily and yearly cycles, while electricity consumption data may show weekly and quarterly cycles~\cite{timesnet}. We find that models that perform well can capture periodicities similar to those in the ground truth, but MLP fails to capture these periodicities (Sec.~\ref{sec:what2distill}).
Therefore, enhancing MLP requires distilling and integrating these multi-scale and multi-period patterns from teacher models. 


Based on our observations, we propose a cross-architecture \textit{KD} framework named \method{} to bridge the performance and efficiency gap between complex teacher models and a simple MLP.  Instead of solely matching predictions in conventional \textit{KD}, \method{} focuses on aligning multi-scales and multi-period patterns between MLP and the teacher: we downsample the time series for temporal multi-scale alignment and apply Fast Fourier Transform (FFT) to align period distributions in the frequency domain. 
The \textit{KD} process can be conducted offline, shifting heavy computations from latency-critical \textit{inference phase}, where millisecond matter, to less time-sensitive \textit{training phase}, where longer processing time is acceptable. 
We validate effectiveness of \method{} both theoretically and empirically and summarize our contributions as follows:

\begin{figure}[t]
    \centering
    
    \includegraphics[width=0.37\textwidth]{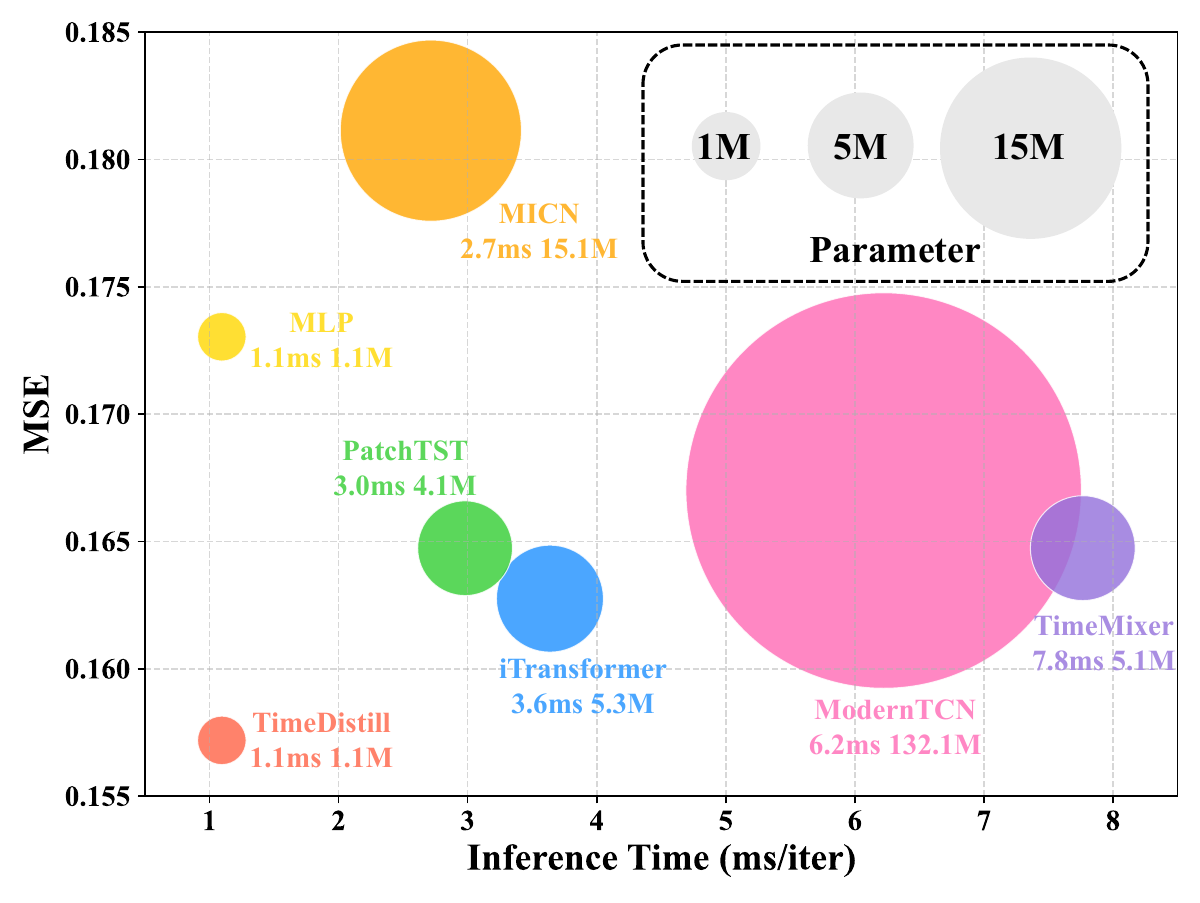}
    \vskip -1.5em
    \caption{Model efficiency comparison averaged across all prediction lengths (96, 192, 336, 720) for the ECL dataset. Full results on more datasets are listed in Appendix~\ref{app:efficiency}.}
    \label{fig:result_efficiency}
    \vskip -2em
\end{figure}


\begin{compactenum}[(a)] 

\item We present \textit{the first cross-architecture KD} framework \method{} tailored for efficient and effective time series forecasting via MLP, which is supported by our preliminary studies examining multi-scale and multi-period patterns in time series.


\item We provide theoretical insights into benefits of \method{}, illustrating that proposed distillation process can be viewed as a form of data augmentation through a special \textit{mixup} strategy.




\item We show that \method{} is both effective and efficient, consistently outperforming standalone MLP by up to \textbf{18.6\%} and surpassing teacher models in nearly all cases (see Figure~\ref{fig:result_rader}). Additionally, it achieves up to \textbf{7x} faster inference and requires up to \textbf{130×} fewer parameters compared to teachers (see Figure~\ref{fig:result_efficiency}).


\item We conduct deeper analyses of \method{}, exploring its adaptability across various teacher/student models and highlighting the distillation impacts it brings to the temporal and frequency domains.





\end{compactenum}

\begin{figure*}[ht!]
    \centering
    \begin{minipage}{0.32\textwidth}
        \centering
        \includegraphics[width=\textwidth]{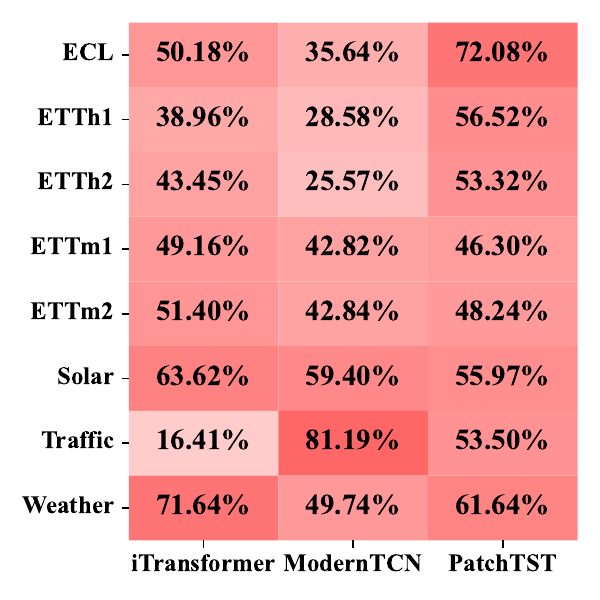}
        \vskip -1em
        \caption{Win ratio (\%) of MLP v.s. teacher models across datasets under input-720-predict-96 setting. The win ratio is generally large (average: 49.92\%, median: 49.96\%), indicating MLP and teacher models excel on different samples with minimal overlap.}
        \label{fig:result_cv}
    \end{minipage}
    \hfill
    \begin{minipage}{0.315\textwidth}
        \centering
        \includegraphics[width=\textwidth]{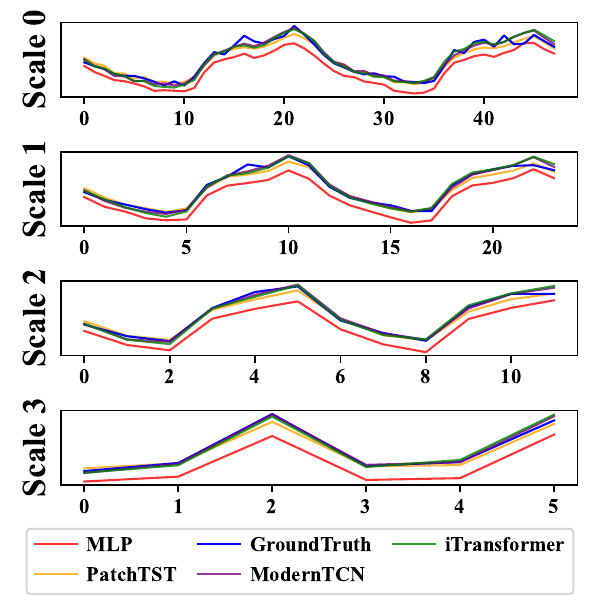}
        \vskip -1em
        \caption{Visualization of model predictions on different downsampled scales of ECL dataset. MLP consistently shows poor performance at multiple scales, while other models perform well, highlighting the importance of capturing multi-scale patterns.}
        \label{fig:result_multi_scale}
    \end{minipage}
    \hfill
    \begin{minipage}{0.32\textwidth}
        \centering
        \includegraphics[width=\textwidth]{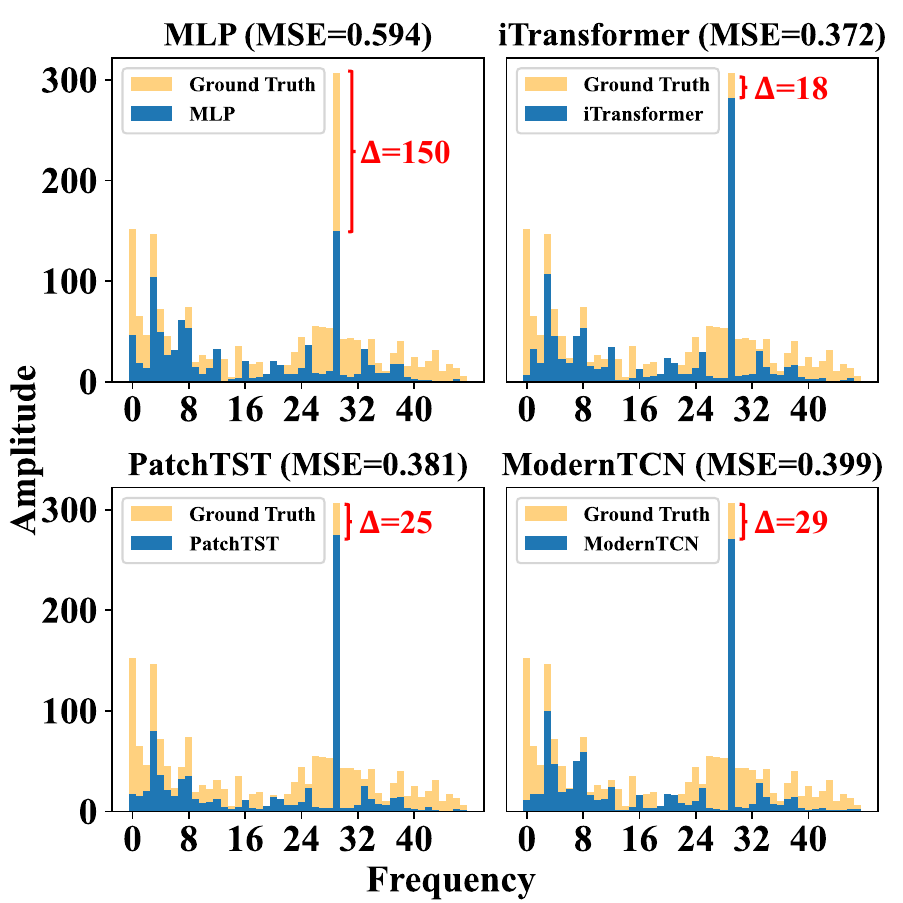}
        \vskip -1em
       \caption{Prediction spectrograms of various models on ECL dataset against the ground truth. MLP fails to match the amplitudes of several main frequencies in the ground truth, with red numbers indicating amplitude differences for the most significant frequency.}
        \label{fig:result_multi_period}
    \end{minipage}
    \vskip -1em
\end{figure*}

\section{Related Work}

\subsection{Debate in Long-Term Time Series Forecast}
The trade-off between performance and efficiency has prompted a long-standing debate between Transformer-based models and MLP in long-term time series forecasting~\cite{dlinear, lightts, sparsetsf}. Informer~\cite{informer}, Autoformer~\cite{autoformer}, and FEDformer~\cite{fedformer} were among the leading Transformer-based methods. However, recent findings show that a simple Linear or MLP model can achieve performance comparable to these complex Transformer models across various benchmarks while offering significantly better efficiency~\cite{dlinear}. This outcome has raised questions about the necessity of Transformers in time series forecasting. Following this, research has moved in two directions. One direction suggests that the issues with Transformer-based models arise from the way they are applied.  For example, PatchTST~\cite{patchtst} uses patching to preserve local information, and iTransformer~\cite{itransformer} focuses on capturing multivariate correlations. These approaches surpass the simple one-layer MLP and demonstrate that Transformers can still deliver strong results in time series forecasting if applied effectively. Meanwhile, CNN-based models have also shown strong performance similar to Transformer-based models. TimesNet~\cite{timesnet} transforms 1D time series into 2D variations and applies 2D CNN kernels, MICN~\cite{micn} adopt multi-scale convolution structures to capture local features and global correlations, and ModernTCN~\cite{moderntcn} proposes a framework with much larger receptive fields than prior CNN-based structures. Nevertheless, these powerful CNN-based models also face efficiency issues, which further broadens the scope of the debate between performance and efficiency. The other direction focuses on developing lightweight MLP, such as N-BEATS~\cite{nbeats}, N-hits~\cite{nhits}, LightTS~\cite{lightts}, FreTS~\cite{frets}, TSMixer~\cite{tsmixer}, TiDE~\cite{tide}, which offer improved efficiency. However, these models typically only match, rather than surpass, the performance of state-of-the-art Transformer-based methods and CNN-based methods. In summary, while Transformer-based and CNN-based models generally offer better performance, simple MLP is more efficient. Therefore, we managed to combine the performance of Transformer-based and CNN-based models with MLP to produce a powerful and efficient model by cross-architecture \textit{KD}.

\subsection{Knolwedge Distillation on Time Series}
Knowledge distillation (\textit{KD})~\cite{hinton2015distilling} transfers knowledge from a larger, more complex model (teacher) to a smaller, simpler model (student) while maintaining comparable performance. By aligning the output distributions of teacher and student models, \textit{KD} provides richer training signals than hard labels alone, enabling the student to capture subtle patterns that the teacher has learned. In the context of time series, CAKD~\cite{xu2022contrastive} introduces a two-stage distillation scheme that distills features using adversarial and contrastive learning and performs prediction-level distillation. LightTS~\cite{campos2023lightts} designs a \textit{KD} framework specifically for cases where the teacher is an ensemble classifier in time series classification tasks, which limits its applicability to teachers with other architectures. Both of these works do not incorporate time series-specific designs. In contrast, our framework emphasizes extracting essential time series patterns, including multi-scale and multi-period patterns, enabling more effective knowledge distillation. Additionally, we are the first to explore cross-architecture \textit{KD}. 
\section{Preliminary Studies}
\label{sec:preliminaries}

In this section, we explore the reasons behind adopting distillation (\textit{\textbf{What motivates distillation?}}) and investigate the specific time series information to distill into the MLP model (\textit{\textbf{What should we distill?}}). 
We first introduce key notations. 
For multivariate long-term time series forecasting, given an input time series \( \mathbf{X} \in \mathbb{R}^{T \times C} \), where \( T \) represents the length of the look-back window and \( C \) represents the number of variables, the goal is to predict the future \( S \) time steps \( \mathbf{Y} \in \mathbb{R}^{S \times C} \). 


\subsection{What Motivates Distillation?}
\label{sec:why_distill}

While MLP excel in efficiency, they often lag in performance compared to Transformer and CNN models or achieve only similar results (Figure~\ref{fig:result_rader}). However, even if MLP underperforms its teachers overall, it may excel on specific samples. To investigate this, we compare prediction errors of MLP's prediction $\mathbf{\hat{Y}}_s$ and teacher model's prediction $\mathbf{\hat{Y}}_t$ for $N$ samples, defined as 
$
e_s = \{\text{MSE}(\mathbf{\hat{Y}}_s^1, \mathbf{Y}^1), \cdots, \\
\text{MSE}(\mathbf{\hat{Y}}_s^N, \mathbf{Y}^N)\}
$
and 
$
e_t = \{\text{MSE}(\mathbf{\hat{Y}}_t^1, \mathbf{Y}^1), \cdots, \text{MSE}(\mathbf{\hat{Y}}_t^N, \mathbf{Y}^N)\}.
$
We calculate \textit{win ratio}, indicating where MLP outperforms teachers:
\begin{equation}
    \text{Win Ratio} = {\sum \mathbbm{1}(e_s < e_t)}/{N},
\end{equation}
where $\mathbbm{1}(\cdot)$ equals 1 if MLP outperforms the teacher, otherwise 0. As shown in Figure~\ref{fig:result_cv}, the win ratio is high (average: 49.92\%, median: 49.96\%), despite MLP underperforming teacher models overall. For example, on the Traffic dataset, MLP lags behind ModernTCN but wins on 81.19\% of the samples, indicating differing strengths on distinct subsets. \textbf{This highlights the potential of distilling complementary knowledge from the teacher model into MLP.}

An intuitive strategy of \textit{KD} for MLP is to align predictions with teachers, but this faces limitations. \textbf{First}, it risks overfitting noise in the teacher’s predictions, leading to less stable knowledge~\cite{chen2017learning, takamoto2020efficient, gajbhiye2021knowledge}. \textbf{Second}, MLP may struggle to replicate the complex patterns such as seasonality, trends, and periodicity in teacher predictions directly due to their limited capacity. \textbf{Third}, this approach overlooks valuable knowledge from intermediate features of teacher model. Therefore, the specific knowledge to distill into MLP requires further exploration.

\subsection{What Should We Distill?}
\label{sec:what2distill}


To investigate which complementary time series patterns to distill into MLP, we analyze prediction patterns in both temporal and frequency domains, considering real-world variations across temporal scales~\cite{timemixer} and periodicities~\cite{timesnet}. Thus, we conduct further analysis of these two patterns by presenting illustrative cases. The implementation is detailed in Appendix~\ref{app:implementation_preliminary}. 
\textbf{(1) Multi-Scale Pattern:}  
By downsampling predictions \(\mathbf{\hat{Y}}\) using convolutional operations, we obtain their multi-scale representations. Figure~\ref{fig:result_multi_scale} shows the analysis of multi-scale temporal patterns by downsampling the time series (\textit{Scale 0}) to coarser scales (\textit{Scales 1–3}). Models excelling at \textit{Scale 0} generally perform well across all scales. The teacher models align closely with the ground truth at \textit{Scale 3}, capturing underlying trends effectively. In contrast, MLP significantly deviates, showing its limitations in handling multi-scale patterns.
\textbf{(2) Multi-Period Pattern:} Periodicities in time series are visible in the frequency domain by transforming time series into spectrograms, with the x-axis representing frequency and periodicity calculated as the time series length $S$ divided by frequency. 
Figure~\ref{fig:result_multi_period} shows spectrograms of model predictions and the ground truth. Models with lower MSE capture periodicities more accurately, closely matching the ground truth at dominant frequencies. In contrast, MLP predictions show larger discrepancies, highlighting its inability to effectively capture periodic patterns.


From our observations, we conjecture that MLP underperforms on certain samples due to their inability to capture essential multi-scale and multi-period patterns, which are essential for effective time series forecasting. To improve MLP, it is crucial to incorporate these complementary patterns from teachers during distillation.

\begin{figure*}[ht]
    \centering
    \includegraphics[width=0.98\textwidth]{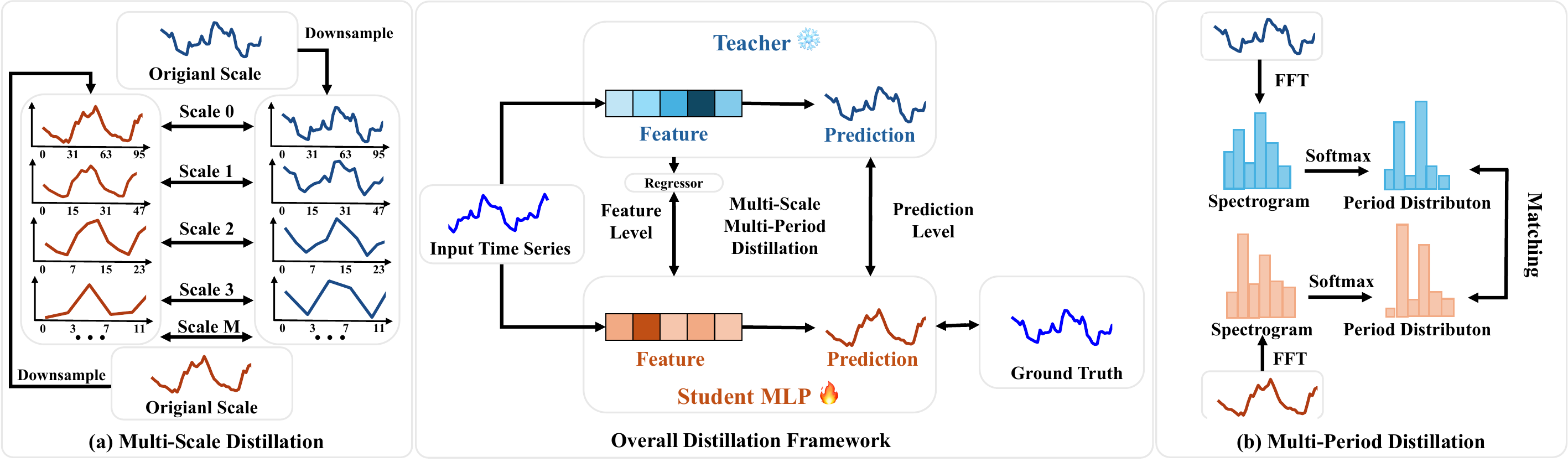}
    \vskip -1.5em
    \caption{Overall framework of \method{}, which distills knowledge from a teacher model to a student MLP using (a) Multi-Scale Distillation and (b) Multi-Period Distillation at both feature and prediction levels. (a) Multi-Scale Distillation involves downsampling the original time series into multiple coarser scales and aligning these scales between the student and teacher. (b) Multi-Period Distillation applies FFT to transform the time series into a spectrogram, followed by matching the period distributions after applying softmax.}
    \label{fig:method}
    \vskip -1em
\end{figure*}

\section{Methodology}




Motivated by our preliminary studies, we propose a novel \textit{KD} framework \method{} for time series to transfer the knowledge from a fixed, pretrained teacher model \(f_t\) to a student MLP model \(f_s\). The student produces predictions \(\mathbf{\hat{Y}}_s \in \mathbb{R}^{S \times C}\) and internal features \(\mathbf{H}_s\in \mathbb{R}^{D \times C}\). The teacher model produces predictions \(\mathbf{\hat{Y}}_t \in \mathbb{R}^{S \times C}\) and internal features \(\mathbf{H}_t\in \mathbb{R}^{D_t \times C}\). Our general objective is:
\begin{equation}\label{eq:kd_obj}
    \min\nolimits_{\theta_s} \mathcal{L}_{sup}(\mathbf{Y}, \mathbf{\hat{Y}}_s) + \mathcal{L}_{\mathrm{KD}}^\mathbf{Y}(\mathbf{\hat{Y}}_t, \mathbf{\hat{Y}}_s) + \mathcal{L}_{\mathrm{KD}}^\mathbf{H}(\mathbf{H}_t, \mathbf{H}_s),
\end{equation}
where \(\theta_s\) is the parameter of the student; \(\mathcal{L}_{sup}\) is the supervised loss (e.g., MSE) between predictions and ground truth; \(\mathcal{L}_{\mathrm{KD}}^\mathbf{Y}\) and \(\mathcal{L}_{\mathrm{KD}}^\mathbf{H}\) are the distillation loss terms that encourage student model to learn knowledge from teacher on both \textbf{prediction level}~\cite{hinton2015distilling} and \textbf{feature level}~\cite{romero2014fitnets}. Unlike conventional approaches that emphasize matching model predictions, \method{} integrates key time-series patterns including multi-scale and multi-period knowledge. The overall framework of \method{} is shown in Figure~\ref{fig:method}. 


\subsection{Multi-Scale Distillation}
One key component of \method{} is multi-scale distillation, where ``multi-scale'' refers to representing the same time series at different sampling rates. This enables MLP to effectively capture both coarse-grained and fine-grained patterns. By distilling at both the prediction level and the feature level, we ensure that MLP not only replicates the teacher's multi-scale predictions but also aligns with its internal representations from the intermediate layer.

\paragraph{Prediction Level.}
At the prediction level, we directly transfer multi-scale signals from the teacher’s outputs to guide the MLP’s predictions. We first produce multi-scale predictions by downsampling the original predictions from the teacher \(\mathbf{\hat{Y}}_t \in \mathbb{R}^{S \times C}\) and the MLP \(\mathbf{\hat{Y}}_s \in \mathbb{R}^{S \times C}\), where \(S\) is the prediction length and \(C\) is the number of variables. The predictions at \textit{Scale 0} are equal to the original predictions, that is, \(\mathbf{\hat{Y}}_t^0=\mathbf{\hat{Y}}_t\) and \(\mathbf{\hat{Y}}_s^0=\mathbf{\hat{Y}}_s\). We then downsample these predictions across \(M\) scales using convolutional operations with a stride of 2, generating multi-scale prediction sets \(\mathcal{Y}_t = \{\mathbf{\hat{Y}}_t^0, \mathbf{\hat{Y}}_t^1,\cdots,\mathbf{\hat{Y}}_t^M\}\) and \(\mathcal{Y}_s = \{\mathbf{\hat{Y}}_s^0, \mathbf{\hat{Y}}_s^1,\cdots,\mathbf{\hat{Y}}_s^M\}\), where \(\mathbf{\hat{Y}}_t^M, \mathbf{\hat{Y}}_s^M \in \mathbb{R}^{\lfloor S/2^M \rfloor \times C}\). The downsampling is defined as: 
\begin{equation}
    \mathbf{\hat{Y}}_x^m = \mathrm{Conv}(\mathbf{\hat{Y}}_x^{m-1}, \mathrm{stride}=2),
    \label{eq:multiscale_downsample}
\end{equation}
where \(x \in \{t, s\}\), \(m \in \{1, \cdots, M\}\), $\mathrm{Conv}$ denotes a 1D-convolutional layer with a temporal stride of 2. The predictions at the lowest level \(\mathbf{\hat{Y}}_x^0=\mathbf{\hat{Y}}_x\) maintain the original temporal resolution, while the highest-level predictions \(\mathbf{\hat{Y}}_x^M\) represent coarser patterns. We define the multi-scale distillation loss at the prediction level as:
\begin{equation}
    \mathcal{L}_{scale}^\mathbf{Y} = \textstyle\sum_{m=0}^M ||\mathbf{\hat{Y}}_t^m - \mathbf{\hat{Y}}_s^m||^2 /(M+1).
\end{equation}
Here we use MSE loss to match the MLP’s predictions to the teacher’s predictions at multiple scales.

\paragraph{Feature Level.} 
At the feature level, we align MLP’s intermediate features with teacher’s multi-scale representations, enabling MLP to form richer internal structures that support more accurate forecasts.
Let \(\mathbf{H}_s \in \mathbb{R}^{D \times C}\) and \(\mathbf{H}_t \in \mathbb{R}^{D_t \times C}\) denote MLP and teacher features with feature dimensions \(D\) and \(D_t\), respectively. As their dimensions can be different, we first use a parameterized regressor (e.g. MLP) to align their feature dimensions: 
\begin{equation}
    \mathbf{H}'_t = \text{Regressor}(\mathbf{H}_t),
\end{equation}
where \(\mathbf{H}'_t \in \mathbb{R}^{D \times C}\).  
Similar to the prediction level, we compute $\mathbf{H}_x^m$ by downsampling $\mathbf{H}_s$ and $\mathbf{H}'_t$ across multiple scales using the same approach as in Equation~\ref{eq:multiscale_downsample}. We define the multi-scale distillation loss at the feature level as:
\begin{equation}
    \mathcal{L}_{scale}^\mathbf{H} = \textstyle\sum_{m=0}^M ||\mathbf{H}_t^m - \mathbf{H}_s^m||^2 /(M+1).
\end{equation}

\subsection{Multi-Period Distillation}
{In addition to multi-scale distillation in the temporal domain, we further
propose multi-period distillation to help MLP learn complex periodic patterns in the frequency domain.} By aligning periodicity-related signals from the teacher model at both the prediction and feature levels, the MLP can learn richer frequency-domain representations and ultimately improve its forecasting performance.

\paragraph{Prediction Level.}
For the predictions from the teacher \(\mathbf{\hat{Y}}_t \in \mathbb{R}^{S \times C}\) and the MLP \(\mathbf{\hat{Y}}_s \in \mathbb{R}^{S \times C}\), we first identify their periodic patterns. We perform this in the frequency domain using the Fast Fourier Transform (FFT):
\begin{equation}
    \mathbf{A}_x = \text{Amp}(\text{FFT}(\mathbf{\hat{Y}}_x)),
    \label{eq:multiperiod_spectrograms}
\end{equation}
where \(x \in \{t, s\}\) and spectrograms \(\mathbf{A}_x \in \mathbb{R}^{\frac{S}{2} \times C}\). Here, \(\text{FFT}(\cdot)\) denotes the FFT operation and \(\text{Amp}(\cdot)\) calculates the amplitude. We remove the direct current (DC) component from \(\mathbf{A}_x\). For certain variable \(c\), the \(i\)-th value \(\mathbf{A}_x^{i,c}\) indicates the intensity of the frequency-\(i\) component, corresponding to a period length \(\lceil S/i\rceil\). Larger amplitude values indicate that the associated frequency (period) is more significant.

To reduce the influence of minor frequencies and avoid noise introduced by less meaningful frequencies~\cite{timesnet, fedformer}, we propose a distribution-based matching scheme. We use softmax function with a colder temperature to highlight the most significant frequencies:
\begin{equation}
    \mathbf{Q}_x^\mathbf{Y} = {\exp\bigl(\mathbf{A}_x^i / \tau\bigr)}/{\sum\nolimits_{j=1}^{S/2} \exp\bigl(\mathbf{A}_x^j /\tau\bigr)},
    \label{eq:multiperiod_distribution}
\end{equation}
where \(\mathbf{Q}_x^\mathbf{Y} \in \mathbb{R}^{\frac{S}{2} \times C}\) and \(\tau\) is a temperature parameter that controls the sharpness of the distribution. We set \(\tau=0.5\) by default. The period distribution \(\mathbf{Q}_x^\mathbf{Y}\) represents the multi-period pattern in the prediction time series, which we want the MLP to learn from the teacher. We use KL divergence to match these distributions~\cite{hinton2015distilling}. We define multi-period distillation loss at the prediction level as:
\begin{equation}
    \mathcal{L}_{period}^\mathbf{Y} = \text{KL}\bigl(\mathbf{Q}_t^\mathbf{Y}, \mathbf{Q}_s^\mathbf{Y}\bigr).
\end{equation}


\paragraph{Feature Level.}
Similar to the prediction level, we apply multi-period distillation at the feature level. For the features \(\mathbf{H}'_t \in \mathbb{R}^{D \times C}\) and \(\mathbf{H}_s \in \mathbb{R}^{D \times C}\), we compute the spectrograms and the corresponding period distributions \(\mathbf{Q}_x^\mathbf{H}\) using the same approach as in Equations~\ref{eq:multiperiod_spectrograms} and~\ref{eq:multiperiod_distribution}. Multi-period distillation loss at feature level is then defined as:
\begin{equation}
    \mathcal{L}_{period}^\mathbf{H} = \text{KL}\bigl(\mathbf{Q}_t^\mathbf{H}, \mathbf{Q}_s^\mathbf{H}\bigr).
\end{equation}

\subsection{Overall Optimization and Theoretical Analysis}
\label{sec:theoretical_analysis}
During the training of \method{}, we jointly optimize both the multi-scale and multi-period distillation losses at both the prediction and feature levels, together with the supervised ground-truth label loss:
\begin{equation}
    \mathcal{L}_{sup} = ||\mathbf{Y} - \mathbf{\hat{Y}}_s||^2,
\end{equation}
where \(\mathcal{L}_{sup}\) is the ground-truth loss (for example, MSE loss) used to train MLP directly. Thus, the overall training loss for the student MLP is defined as:
\begin{equation}
    \mathcal{L} = \mathcal{L}_{sup} + \alpha \cdot \bigl(\mathcal{L}_{scale}^\mathbf{Y} + \mathcal{L}_{period}^\mathbf{Y}\bigr) + \beta \cdot \bigl(\mathcal{L}_{scale}^\mathbf{H} + \mathcal{L}_{period}^\mathbf{H}\bigr),
    \label{eq:overall_optimization}
\end{equation}
where \(\alpha\) and \(\beta\) are hyper-parameters that control the contributions of the prediction-level and feature-level distillation loss terms, respectively. The teacher model is pretrained and remains frozen throughout the training process of MLP.

\underline{\textbf{Theoretical Interpretations.}} We provide a theoretical understanding of multi-scale and multi-period distillation loss from \textbf{a novel data augmentation perspective}. We further show that the proposed distillation loss can be interpreted as training with augmented samples derived from a special \textit{mixup}~\cite{mixup} strategy. The distillation process augments data by blending ground truth with teacher predictions, analogous to label smoothing in classification, and provides several benefits for time series forecasting:
\textit{\textbf{(1)} Enhanced Generalization:} It enhances generalization by exposing the student model to richer supervision signals from augmented samples, thus mitigating overfitting, especially with limited or noisy data.
{\textit{\textbf{(2)} Explicit Integration of Patterns:} The augmented supervision signals explicitly incorporate patterns across multiple scales and periods, offering insights that are not immediately evident in the raw ground truth.}
\textit{\textbf{(3}) Stabilized Training Dynamics:} The blending of targets softens the supervision signals, which diminishes the model’s sensitivity to noise and leads to more stable training phases. This will in turn support smoother optimization dynamics and fosters improved convergence. For clarity, our discussion is centered at the prediction level.  We present the following theorem:  
\begin{theorem} \label{thm:multiscale}
Let $(x, y)$ denote original input data pairs and $(x, y^t)$ represent corresponding teacher data pairs. Consider a data augmentation function $\mathcal{A}(\cdot)$ applied to $(x, y)$, generating augmented samples $(x', y')$. Define the training loss on these augmented samples as $\mathcal{L}_{aug} = \sum_{(x',y') \in \mathcal{A}(x,y)} |f_s(x') - y'|^2$. Then, the following inequality holds: 
$
   \mathcal{L}_{sup} + \eta \mathcal{L}_{scale} \geq \mathcal{L}_{aug},
$
when $\mathcal{A}(\cdot)$ is instantiated as a mixup function~\cite{mixup} that interpolates between the original input data $(x,y)$ and teacher data $(x,y^t)$ with a mixing coefficient $\lambda=\frac{1}{1+\eta}$, i.e. $y' = \lambda y + (1-\lambda) y^t$.
\end{theorem}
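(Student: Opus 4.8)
The plan is to exploit the key structural feature of the claimed \emph{mixup}: the two pairs being blended, $(x,y)$ and $(x,y^t)$, share the same input, so the augmentation acts purely in target space. Since the mixed input is $x' = \lambda x + (1-\lambda)x = x$, the only effect of $\mathcal{A}$ is to replace the target by the blended label $y' = \lambda y + (1-\lambda)y^t$ (this is exactly the sense in which the strategy is ``special'' and analogous to label smoothing). Hence each augmented pair reduces to $(x,\,\lambda y + (1-\lambda)y^t)$, and the augmented loss collapses to a single squared-error term $\mathcal{L}_{aug} = |f_s(x) - (\lambda y + (1-\lambda)y^t)|^2$ that I will bound above by a convex combination of the supervised and distillation residuals.

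The central step is convexity of the squared Euclidean norm. First I would rewrite the augmented residual as a convex combination of the two elementary residuals, using $\lambda + (1-\lambda) = 1$:
\begin{equation}
f_s(x) - y' = \lambda\bigl(f_s(x) - y\bigr) + (1-\lambda)\bigl(f_s(x) - y^t\bigr).
\end{equation}
Applying Jensen's inequality to the convex map $z \mapsto |z|^2$ then gives
\begin{equation}
\mathcal{L}_{aug} \le \lambda\,|f_s(x) - y|^2 + (1-\lambda)\,|f_s(x) - y^t|^2,
\end{equation}
where the first term is precisely $\mathcal{L}_{sup}$ and the second is the prediction-level teacher--student residual aggregated by $\mathcal{L}_{scale}$.

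To close the argument I would substitute the prescribed coefficient $\lambda = \tfrac{1}{1+\eta}$, so that $1-\lambda = \tfrac{\eta}{1+\eta}$, yielding $\mathcal{L}_{aug} \le \tfrac{1}{1+\eta}\bigl(\mathcal{L}_{sup} + \eta\,\mathcal{L}_{scale}\bigr)$. Because every term is a nonnegative squared norm and $\tfrac{1}{1+\eta} \le 1$ for $\eta \ge 0$, the leading factor can be dropped to obtain the stated bound $\mathcal{L}_{sup} + \eta\,\mathcal{L}_{scale} \ge \mathcal{L}_{aug}$; indeed the sharper inequality retaining the $\tfrac{1}{1+\eta}$ factor is available if one wants it. For the genuinely multi-scale loss I would run the same estimate scale by scale: since each coarser resolution $P_m$ is a strided convolution, hence a \emph{linear} operator, the scale-$m$ residual is again $\lambda P_m(f_s(x)-y) + (1-\lambda)P_m(f_s(x)-y^t)$, so Jensen applies termwise and the $1/(M+1)$ averaging passes through unchanged.

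The main obstacle is not the inequality itself, which is a one-line convexity estimate, but pinning down the identifications that make it meaningful. I would state explicitly, before any computation, that $\mathcal{A}(x,y)$ consists of the blended pair(s) $(x,\lambda y + (1-\lambda)y^t)$, that the distillation target $y^t$ plays the role of the second mixed sample, and that the post-Jensen teacher term matches exactly the residuals $|P_m(f_s(x)) - P_m(y^t)|^2$ defining $\mathcal{L}_{scale}$. Care is needed here to ensure the summation in $\mathcal{L}_{aug}$ and the $(M+1)$-weighting are bookkept consistently with how $\mathcal{L}_{sup}$ and $\mathcal{L}_{scale}$ are aggregated across scales, since this is where a loose reading could hide a mismatch; verifying that the downsampling commutes with the convex combination (by linearity) is what makes the per-scale decomposition exact and the whole identification clean.
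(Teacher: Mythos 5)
Your core step coincides with the paper's: both proofs rest on convexity of the squared loss (Jensen's inequality) applied to the mixup combination with $\lambda=\tfrac{1}{1+\eta}$, and your remark that the sharper bound $\mathcal{L}_{aug}\le\tfrac{1}{1+\eta}\bigl(\mathcal{L}_{sup}+\eta\mathcal{L}_{scale}\bigr)$ is available is accurate — the paper silently discards the same normalization factor $\sum_m\gamma_m=1+\eta$ when it invokes Jensen. Where you genuinely diverge is in how the $M+1$ scales are aggregated. The paper performs a single multi-point Jensen over the points $\{Y,\hat{Y}_t^{(0)},\dots,\hat{Y}_t^{(M)}\}$ with weights $1$ and $\eta/(M+1)$, so its augmented set $\mathcal{A}(x,y)$ is a \emph{single} sample whose target is $\lambda Y+(1-\lambda)\tfrac{1}{M+1}\sum_m\hat{Y}_t^{(m)}$, i.e.\ ground truth blended with the \emph{average across scales} of the teacher predictions; this lets $\mathcal{L}_{sup}$ enter exactly once and never requires a multi-scale version of the supervised term. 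You instead run a two-point Jensen at each scale, producing $M+1$ augmented samples with targets $\lambda P_m y+(1-\lambda)P_m y^t$; this is arguably more faithful to the actual definition of $\mathcal{L}_{scale}$ (which compares the \emph{downsampled} student to the downsampled teacher, whereas the paper's proof quietly replaces $\hat{Y}_s^m$ by $\hat{Y}_s$). The price is the bookkeeping issue you yourself flag: after averaging over scales, the ground-truth contribution on the right-hand side is $\tfrac{\lambda}{M+1}\sum_m|P_m(f_s(x)-y)|^2$, not $\lambda\,\mathcal{L}_{sup}$, and bounding the former by the latter requires each $P_m$ to be a contraction — true for averaging/downsampling kernels but not for arbitrary learned convolutions. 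So your route is correct modulo that one identification, which you should either close with an explicit operator-norm bound on the $P_m$ or sidestep by adopting the paper's all-at-once Jensen over the full collection of targets.
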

We provide proof of Theorem~\ref{thm:multiscale} in Appendix~\ref{app:theory}.  Theorem~\ref{thm:multiscale} suggests that optimizing multi-scale distillation loss \(\mathcal{L}_{\text{scale}}\) jointly with supervised loss \(\mathcal{L}_{\text{sup}}\) is equivalent to minimizing an upper bound on a special \textit{mixup} augmentation loss. In particular, we mix multi-scale teacher predictions \(\{\mathbf{\hat{Y}}_t^{(m)}\}_{m=0}^M\) with ground truth \(\mathbf{Y}\), allowing MLP to learn more informative time series temporal pattern. Similarly, we present a theorem for understanding $\mathcal{L}_{period}$.

\begin{theorem} \label{thm:multiperiod}
Define the training loss on the augmented samples using KL divergence as $\mathcal{L}_{aug} = \sum_{(x',y') \in \mathcal{A}(x,y)} \text{KL}\big(y', \mathcal{X}(f_s(x'))\big)$, where $\mathcal{X}(\cdot) = \text{Softmax}(\text{FFT}(\cdot))$. Then, the following inequality holds: 
$
   \mathcal{L}_{sup} + \eta\mathcal{L}_{period} \geq \mathcal{L}_{aug},
$
where $\mathcal{A}(\cdot)$ is instantiated as a mixup function that interpolates between the period distribution of original input data $(x,\mathcal{X}(y))$ and teacher data $(x,\mathcal{X}(y^t))$ with a mixing coefficient $\lambda=\eta$, i.e. $y' =  \mathcal{X}(y) + \lambda \mathcal{X}(y^t)$.
\end{theorem}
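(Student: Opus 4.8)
The plan is to mirror the strategy used for Theorem~\ref{thm:multiscale}, replacing the convexity of the squared loss with the convexity of the KL divergence in its first argument. First I would put the three loss terms in a common form. Since the stated augmentation interpolates between the pairs $(x,\mathcal{X}(y))$ and $(x,\mathcal{X}(y^t))$, which share the same input $x$, the mixup perturbs only the target (the period distribution) and not the input; hence I would set $x'=x$, so that the student's period distribution $q := \mathcal{X}(f_s(x)) = \mathcal{X}(f_s(x'))$ is the common second argument of every KL term. Writing $p_1 := \mathcal{X}(y)$ and $p_2 := \mathcal{X}(y^t)$ for the ground-truth and teacher period distributions, this identifies the supervised term (read in the frequency domain) as $\mathcal{L}_{sup} = \text{KL}(p_1, q)$ and the distillation term as $\mathcal{L}_{period} = \text{KL}(p_2, q)$, matching $\mathcal{L}_{period}^{\mathbf{Y}}$ from the multi-period section.

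The main step is a single application of Jensen's inequality. The map $p \mapsto \text{KL}(p,q)$ is convex for fixed $q$, since it is a separable sum of the convex functions $p_i\log p_i$ plus a term linear in $p$. Applying this to the convex combination with weights $\tfrac{1}{1+\eta}$ and $\tfrac{\eta}{1+\eta}$ gives
\begin{equation*}
\text{KL}\!\left(\tfrac{1}{1+\eta}\,p_1 + \tfrac{\eta}{1+\eta}\,p_2,\; q\right) \;\le\; \tfrac{1}{1+\eta}\,\text{KL}(p_1,q) + \tfrac{\eta}{1+\eta}\,\text{KL}(p_2,q) \;=\; \tfrac{1}{1+\eta}\bigl(\mathcal{L}_{sup} + \eta\,\mathcal{L}_{period}\bigr).
\end{equation*}
The left-hand side is exactly $\mathcal{L}_{aug}$ evaluated at the interpolated target. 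To finish, I would invoke $\tfrac{1}{1+\eta}\le 1$ together with the nonnegativity $\mathcal{L}_{sup} + \eta\mathcal{L}_{period}\ge 0$ to drop the prefactor, yielding $\mathcal{L}_{aug} \le \mathcal{L}_{sup} + \eta\mathcal{L}_{period}$, which is the claim. This last move is structurally identical to the corresponding step of Theorem~\ref{thm:multiscale}, where the same factor $\tfrac{1}{1+\eta}$ is absorbed.

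The hard part will be reconciling the normalization of the mixup target. As literally written, $y' = \mathcal{X}(y) + \eta\,\mathcal{X}(y^t)$ with $\lambda=\eta$ is an unnormalized vector summing to $1+\eta$, whereas the Jensen argument needs the properly normalized combination $\tfrac{1}{1+\eta}\bigl(\mathcal{X}(y) + \eta\,\mathcal{X}(y^t)\bigr)$. I would resolve this by observing that the two differ only by the scalar $(1+\eta)$, so the stated $y'$ is exactly $(1+\eta)$ times the combination used above; renormalizing the mixed target to a distribution (as softmax outputs are) makes them coincide and the argument goes through verbatim. A normalization-free alternative is the log-sum inequality applied coordinatewise to the pairs $(p_{1,i},q_i)$ and $(\eta\,p_{2,i},\eta\,q_i)$; I would keep in mind, however, that this route leaves an extra $(1+\eta)\log(1+\eta)$ term whose sign must be tracked, which is precisely why the convexity-plus-normalization route is cleaner. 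Finally I would check that removing the DC component and applying the cold-temperature softmax keeps $p_1$, $p_2$, and $q$ genuine probability vectors, so that all KL terms are well defined and nonnegative.
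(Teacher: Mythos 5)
Your core inequality is the same one the paper uses: the paper applies the log-sum inequality coordinatewise to $\mathrm{KL}(\mathbf{Q}_y,\mathbf{Q}_s)+\eta\,\mathrm{KL}(\mathbf{Q}_t,\mathbf{Q}_s)$, which is exactly the joint-convexity/Jensen step you propose, and your treatment of the normalization of the mixed target is if anything more careful than the paper's (the paper's last line silently absorbs the factor $1+\eta$ into the second argument of the KL, i.e.\ it writes the denominator $\mathbf{Q}_s+\eta\mathbf{Q}_s$ and then relabels the result as $\mathrm{KL}(\mathbf{Q}_y+\eta\mathbf{Q}_t,\mathbf{Q}_s)$ --- the same bookkeeping issue you flag and resolve by renormalizing).

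The genuine gap is your very first identification, $\mathcal{L}_{sup}=\mathrm{KL}\bigl(\mathcal{X}(y),\mathcal{X}(f_s(x))\bigr)$. In the paper $\mathcal{L}_{sup}$ is the time-domain MSE $\|\mathbf{Y}-\hat{\mathbf{Y}}_s\|^2$, not a KL divergence between period distributions, so ``reading it in the frequency domain'' is not a definition but a claim that requires an argument. The paper bridges it in two steps: Parseval's theorem to rewrite the MSE as $\bigl(\mathrm{FFT}(\hat{Y}_s)-\mathrm{FFT}(Y)\bigr)^2$, and then the observation (citing Kim et al., 2021) that in the infinite-temperature limit of the softmax, minimizing the KL between the two softened spectra is equivalent to minimizing that MSE, so $\mathcal{L}_{sup}$ may be replaced by $\lim_{\tau\to\infty}\mathrm{KL}\bigl(\mathcal{X}(Y),\mathcal{X}(\hat{Y}_s)\bigr)$. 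Without some such step, your Jensen argument bounds $\mathrm{KL}(p_1,q)+\eta\,\mathrm{KL}(p_2,q)$ but says nothing about $\mathcal{L}_{sup}+\eta\,\mathcal{L}_{period}$. You should either reproduce that Parseval-plus-temperature-limit conversion (noting it is itself a heuristic equivalence rather than an exact identity) or state explicitly that the theorem is read with $\mathcal{L}_{sup}$ replaced by its frequency-domain KL surrogate; from that point on, your proposal goes through and matches the paper's derivation.
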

The proof can be found in Appendix~\ref{app:theory}. Theorem~\ref{thm:multiperiod} shows that optimizing the multi-period distillation loss \(\mathcal{L}_{\text{period}}\) jointly with the supervised loss \(\mathcal{L}_{\text{sup}}\) is equivalent to minimizing an upper bound on the KL divergence between the student period distribution \(\mathcal{X}(f_s(x'))\) (or \(\mathbf{Q}_s\)) and a \emph{mixed} period distribution \(y'\) (or \(\mathbf{Q}_y + \lambda\,\mathbf{Q}_t\)). 
\section{Experiment}
\begin{table*}[t]
\centering
\caption{Long-term time series forecasting results with prediction lengths $S~\in\{96, 192, 336, 720\}$. A lower MSE or MAE indicates a better prediction. For consistency, we maintain a fixed input length of 720 throughout all the experiments. Results are averaged from all prediction lengths. The best performance is highlighted in \textcolor{red}{\textbf{red}}, and the second-best is \textcolor{blue}{\underline{underlined}}. Full results are listed in Appendix~\ref{app:full_main_results}.}
\vskip -1.5em
\resizebox{\textwidth}{!}{%
\begin{tabular}{c lr|lr|lr|lr|lr|lr|lr|lr|lr}
\toprule
\multirow{2}{*}{Models} & \multicolumn{2}{c}{\textbf{\method{}}} & \multicolumn{2}{c}{iTranformer} & \multicolumn{2}{c}{ModernTCN} & \multicolumn{2}{c}{TimeMixer}& \multicolumn{2}{c}{PatchTST}  & \multicolumn{2}{c}{MICN} & \multicolumn{2}{c}{FEDformer} & \multicolumn{2}{c}{TimesNet} & \multicolumn{2}{c}{Autoformer}  \\ 
& \multicolumn{2}{c}{(\textbf{Ours})} & \multicolumn{2}{c}{(\citeyear{itransformer})} & \multicolumn{2}{c}{(\citeyear{moderntcn})} & \multicolumn{2}{c}{(\citeyear{timemixer})} &\multicolumn{2}{c}{(\citeyear{patchtst})} & \multicolumn{2}{c}{(\citeyear{micn})} & \multicolumn{2}{c}{(\citeyear{fedformer})} & \multicolumn{2}{c}{(\citeyear{timesnet})} &  \multicolumn{2}{c}{(\citeyear{autoformer})} \\
\cmidrule(lr){2-3} \cmidrule(lr){4-5} \cmidrule(lr){6-7} \cmidrule(lr){8-9} \cmidrule(lr){10-11} \cmidrule(lr){12-13} \cmidrule(lr){14-15} \cmidrule(lr){16-17} \cmidrule(lr){18-19}
Metric & \multicolumn{1}{|c}{MSE} & MAE & \multicolumn{1}{|c}{MSE} & MAE & \multicolumn{1}{|c}{MSE} & MAE & \multicolumn{1}{|c}{MSE} & MAE & \multicolumn{1}{|c}{MSE} & MAE & \multicolumn{1}{|c}{MSE} & MAE & \multicolumn{1}{|c}{MSE} & MAE & \multicolumn{1}{|c}{MSE} & MAE & \multicolumn{1}{|c}{MSE} & MAE \\
\midrule
ECL &\multicolumn{1}{|c}{\textcolor{red}{\textbf{0.157}}}	&\textcolor{red}{\textbf{0.254}}	&\textcolor{blue}{\underline{0.163}}	&0.259	&0.167	&0.262	&0.165	&\textcolor{blue}{\underline{0.259}} &0.165	&0.266	&0.181	&0.293	&0.274	&0.376	&0.250	&0.347	&0.238	&0.347 \\ \midrule
ETTh1 &\multicolumn{1}{|c}{\textcolor{red}{\textbf{0.429}}}	&\textcolor{red}{\textbf{0.441}}	&0.468	&0.476	&0.469	&0.465 &\textcolor{blue}{\underline{0.459}}	&\textcolor{blue}{\underline{0.465}}	&0.498	&0.490	&0.739	&0.631	&0.527	&0.524	&0.507	&0.500	&0.731	&0.659 \\ \midrule
ETTh2	&\multicolumn{1}{|c}{\textcolor{red}{\textbf{0.345}}}	&\textcolor{red}{\textbf{0.395}}	&0.398	&0.426	&\textcolor{blue}{\underline{0.357}}	&\textcolor{blue}{\underline{0.403}}	&0.422	&0.444 &0.444	&0.443	&1.078	&0.736	&0.456	&0.485	&0.419	&0.446	&1.594	&0.940 \\ \midrule
ETTm1	&\multicolumn{1}{|c}{\textcolor{red}{\textbf{0.348}}}	&\textcolor{red}{\textbf{0.379}}	&0.372	&0.402	&0.390	&0.410	&\textcolor{blue}{\underline{0.367}}	&\textcolor{blue}{\underline{0.388}} &0.383	&0.412	&0.439	&0.461	&0.423	&0.451	&0.398	&0.419	&0.570	&0.526 \\ \midrule
ETTm2	&\multicolumn{1}{|c}{\textcolor{red}{\textbf{0.244}}}	&\textcolor{red}{\textbf{0.311}}	&0.276	&0.337	&\textcolor{blue}{\underline{0.267}}	&\textcolor{blue}{\underline{0.330}}	&0.279	&0.339 &0.274	&0.335	&0.348	&0.404	&0.359	&0.401	&0.291	&0.349	&0.420	&0.448 \\ \midrule
Solar	&\multicolumn{1}{|c}{\textcolor{red}{\textbf{0.184}}}	&\textcolor{red}{\textbf{0.241}}	&0.214	&0.270	&\textcolor{blue}{\underline{0.191}}	&\textcolor{blue}{\underline{0.243}}	&0.238	&0.288 &0.210	&0.257	&0.213	&0.277	&0.300	&0.383	&0.196	&0.262	&1.037	&0.742 \\ \midrule
Traffic	&\multicolumn{1}{|c}{\textcolor{blue}{\underline{0.387}}}	&\textcolor{red}{\textbf{0.271}}	&\textcolor{red}{\textbf{0.379}}	&\textcolor{blue}{\underline{0.271}}	&0.413	&0.284	&0.391	&0.275 &0.402	&0.284	&0.500	&0.316	&0.629	&0.388	&0.693	&0.399	&0.696	&0.427 \\ \midrule
Weather	&\multicolumn{1}{|c}{\textcolor{red}{\textbf{0.220}}}	&\textcolor{red}{\textbf{0.269}}	&0.259	&0.290	&0.238	&0.277	&\textcolor{blue}{\underline{0.230}}	&\textcolor{blue}{\underline{0.271}} &0.246	&0.283	&0.240	&0.292	&0.355	&0.398	&0.257	&0.294	&0.471	&0.465 \\
\bottomrule

\end{tabular}%
}
\label{tab:main}
\end{table*}

\subsection{Experimental Setup}

\textbf{Datasets and Baselines.} We run experiments to evaluate the performance and efficiency of \method{} on 8 widely used benchmarks: Electricity (ECL), the ETT datasets (ETTh1, ETTh2, ETTm1, ETTm2), Solar, Traffic, and Weather, following~\cite{itransformer, timemixer, moderntcn}.  
To examine the effectiveness of our method across diverse tasks, we compare \method{} with 8 baseline models that cover a range of architectures. Specifically, we use \textit{Transformer-based models}: iTransformer~\cite{itransformer}, PatchTST~\cite{patchtst}, FEDformer~\cite{fedformer}, Autoformer~\cite{autoformer}; \textit{CNN-based models}: ModernTCN~\cite{moderntcn}, MICN~\cite{micn}, TimesNet~\cite{timesnet}; and an \textit{MLPs-based model}: TimeMixer~\cite{timemixer}.  More details can be found in Appendix~\ref{app:implementation_details}.


\subsection{Main Results}


Table~\ref{tab:main} presents the long-term time series forecasting performance of the proposed \method{} compared with previous state-of-the-art baselines. By default, \method{} uses ModernTCN as the teacher, though results with alternative teachers are provided in Sec.~\ref{sec:versatility}. Notably, \method{} outperforms the baselines on \textbf{7 out of 8 }datasets on MSE and \textbf{all} datasets on MAE. Furthermore, \method{} consistently exceeds the performance of its teacher (ModernTCN) by up to \textbf{5.37\%} and improves over vanilla MLP by up to \textbf{13.87\%}, as shown in Table~\ref{tab:different_teacher}. These results highlight the effectiveness of our multi-scale and multi-period distillation approach in transferring knowledge for enhanced forecasting performance.

\textbf{Efficiency Analysis.}
Beyond its strong predictive performance, another notable advantage of \method{} is its extremely lightweight architecture, as it is simply an MLP. Figure~\ref{fig:result_efficiency} in the Introduction section shows the trade-off between inference time, memory footprint, and performance. We can observe that \method{} can achieve up to \textbf{7×} speedup and up to \textbf{130×} fewer parameters compared with baselines. This property makes \method{} suitable for deployment on devices with limited computational resources and in latency-sensitive applications that require fast inference. Compared to previous Transformer-based method Autoformer, we achieve \textbf{196×} speedup as shown in Table~\ref{tab:app_efficiency}. We list full results of efficiency analysis in Appendix~\ref{app:efficiency}.

\subsection{Versatility of \method{}}
\label{sec:versatility}

\begin{table}[ht!]
\centering
\caption{Performance improvement by \textbf{\method{}} with \textbf{different teachers}. $\Delta_{MLP}$, $\Delta_{Teacher}$ indicate the improvement of \textbf{MLP+\method{}} over a trained MLP and Teacher, respectively. We report average MSE across all prediction lengths. Full results are in Appendix~\ref{app:full_different_teacher_results}.}
\vskip -1.5em
\label{tab:different_teacher}
\small
\resizebox{0.48\textwidth}{!}{%
\begin{tabular}{cc c|c|c|c}
\toprule
\multicolumn{2}{c}{\multirow{2}{*}{Teacher Models}} & \multicolumn{1}{c}{iTranformer} & \multicolumn{1}{c}{ModernTCN} & \multicolumn{1}{c}{TimeMixer} & \multicolumn{1}{c}{PatchTST} \\ 
& & (\citeyear{itransformer}) & (\citeyear{moderntcn}) & (\citeyear{timemixer}) & (\citeyear{patchtst}) \\
\midrule

\multirow{5}{*}{ECL} 
& Teacher         & 0.163 & 0.167 & 0.165 & 0.165 \\ 
& MLP             & 0.173 & 0.173 & 0.173 & 0.173 \\
& \textbf{+\method{} }   & \textbf{0.157} & \textbf{0.157} & \textbf{0.159} & \textbf{0.159} \\
\cmidrule(lr){2-6} 
& $\Delta_{Teacher}$  & 3.68\% & 5.61\% & 3.31\% & 3.64\% \\
& $\Delta_{MLP}$      & 9.27\% & 9.09\% & 7.94\% & 8.11\% \\
\midrule

\multirow{5}{*}{ETT(avg)} 
& Teacher         & 0.379 & 0.371 & 0.382 & 0.400 \\ 
& MLP             & 0.397 & 0.397 & 0.397 & 0.397 \\
& \textbf{+\method{}}    & \textbf{0.345} & \textbf{0.342} & \textbf{0.353} & \textbf{0.358} \\
\cmidrule(lr){2-6} 
& $\Delta_{Teacher}$  & 8.92\% & 7.94\% & 7.46\% & 10.38\% \\
& $\Delta_{MLP}$      & 13.06\% & 13.87\% & 10.91\% & 9.65\% \\
\midrule

\multirow{5}{*}{Solar} 
& Teacher         & 0.214 & 0.191 & 0.288 & 0.210 \\ 
& MLP             & 0.194 & 0.194 & 0.194 & \textbf{0.194} \\
& \textbf{+\method{}}    & \textbf{0.185} & \textbf{0.184} & \textbf{0.187} & 0.204 \\
\cmidrule(lr){2-6} 
& $\Delta_{Teacher}$  & 13.55\% & 3.60\% & 21.41\% & 2.86\% \\
& $\Delta_{MLP}$      & 4.80\% & 5.14\% & 3.58\% & -4.98\% \\
\midrule

\multirow{5}{*}{Traffic} 
& Teacher         & \textbf{0.379} & 0.413 & 0.391 & 0.402 \\ 
& MLP             & 0.434 & 0.434 & 0.434 & 0.434 \\
& \textbf{+\method{}}    & 0.389 & \textbf{0.387} & \textbf{0.391} & \textbf{0.390} \\
\cmidrule(lr){2-6} 
& $\Delta_{Teacher}$  & -2.64\% & 6.32\% & -0.04\% & 2.99\% \\
& $\Delta_{MLP}$      & 10.30\% & 10.70\% & 9.76\% & 10.07\% \\
\midrule

\multirow{5}{*}{Weather} 
& Teacher         & 0.259 & 0.238 & 0.230 & 0.246 \\ 
& MLP             & 0.234 & 0.234 & 0.234 & 0.234 \\
& \textbf{+\method{}}    & \textbf{0.220} & \textbf{0.220} & \textbf{0.219} & \textbf{0.220} \\
\cmidrule(lr){2-6} 
& $\Delta_{Teacher}$  & 15.06\% & 7.37\% & 4.82\% & 10.57\% \\
& $\Delta_{MLP}$      & 5.83\% & 5.66\% & 6.16\% & 5.83\% \\
\bottomrule

\end{tabular}%
}
\end{table}

\begin{figure}[t] \centering
\includegraphics[width=0.45\textwidth]{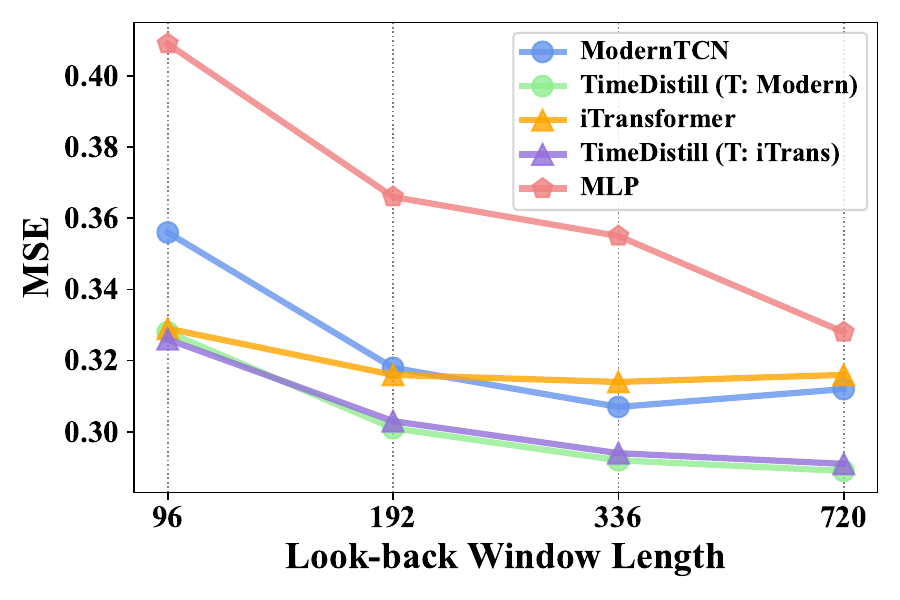}
\vskip -1em
    \caption{Forecasting performance is evaluated with different look-back lengths and the results are averaged across all prediction lengths $S \in \{96, 192, 336, 720\}$ on eight datasets. The complete results are provided in Appendix~\ref{app:full_diferent_lookback_window_results}.}
    \label{fig:result_look_back_window}
\end{figure}
In this subsection, we further demonstrate the versatility of the proposed \method{} by evaluating its performance under different configurations, including variations in teacher/student models and look-back window lengths.


\textbf{Different Teachers.}
We use ModernTCN as the teacher model for our main results. In Table~\ref{tab:different_teacher}, we present results using other teacher models, such as iTransformer, TimeMixer, and PatchTST. Results with additional teachers are provided in Appendix~\ref{app:full_different_teacher_results}. We observe that \method{} can effectively learn from various teachers, improving MLP by up to \textbf{13.87\%}. Furthermore, \method{} achieves significant performance improvements over the teachers themselves, with gains of up to \textbf{21.41\%}. We hypothesize two key reasons for these improvements. First, the student MLP model already demonstrates strong learning capabilities; for instance, on the Solar dataset, MLP outperforms most teachers. Second, the multi-scale and multi-period \textit{KD} approach delivers both temporal and frequency knowledge from teachers to MLP, offering additional valuable insights. Similar findings have been reported in recent \textit{KD} studies~\cite{allen2020towards, zhang2021graph, guo2023linkless}, which suggest that integrating diverse views from multiple models can enhance performance.

\textbf{Different Students.} 
To evaluate whether \method{} can enhance the performance of other lightweight models, we select two MLPs-based models, TSMixer~\cite{tsmixer}, LightTS~\cite{lightts}, as students. Unlike a simple MLP, these MLPs-based models consist of multiple MLPs and operate in a channel-dependent manner. Consequently, they are generally more complex than a standard MLP, which results in reduced efficiency. Besides, we select a Linear-based model FITS~\cite{fits}, which is extremely lightweight.
We use ModernTCN as the teacher model, consistent with other experiments. As shown in Table~\ref{tab:different_student}, \method{} consistently improves the performance of TSMixer, LightTS, and FITS, achieving remarkable MSE reductions of \textbf{6.26\%}, \textbf{8.02\%}, and \textbf{3.96\%}, respectively. These results demonstrate that \method{} is highly adaptable and can effectively enhance other lightweight methods. We also explore the influence of student MLP architecture (e.g. layer number and hidden dimension) on \method{} in Appendix~\ref{app:different_students}.

\begin{table}[ht!]
\centering
\vskip -0.5em
\caption{Performance improvement of \textbf{\method{}} with \textbf{different students} on ETTh1, averaged across all prediction lengths. $\Delta_{Student}$ represents the improvement of \textbf{Student+\method{}} over the original student. Additional results are in Appendix~\ref{app:different_students}.}

\label{tab:different_student}
\resizebox{0.49\textwidth}{!}{%
\begin{tabular}{c cc|cc|cc|cc}
\toprule
\multicolumn{1}{c}{\multirow{2}{*}{Student Models}} 
& \multicolumn{2}{c}{MLP}
& \multicolumn{2}{c}{LightTS} 
& \multicolumn{2}{c}{TSMixer} 
& \multicolumn{2}{c}{FITS}\\
& MSE & MAE & MSE & MAE & MSE & MAE & MSE & MAE \\
\midrule
Student
& 0.502 & 0.489
& 0.465 & 0.471
& 0.471 & 0.474 
& 0.428	& 0.443\\
\textbf{+\method{}} 
& \textbf{0.428} & \textbf{0.445}
& \textbf{0.436} & \textbf{ 0.445}
& \textbf{0.433} & \textbf{0.446} 
& \textbf{0.411} & \textbf{0.430}\\
\cmidrule{1-9}
$\Delta_{Student}$
& 14.74\% & 9.00\%
& 6.26\%  & 5.57\%
& 8.02\%  & 5.92\%
& 3.96\%  & 2.75\%\\
\bottomrule
\end{tabular}%
}
\end{table}

\textbf{Different Look-Back Window Lengths.}
The length of the look-back window significantly influences forecasting accuracy, as it determines how much historical information can be utilized for learning. Figure~\ref{fig:result_look_back_window} presents the average MSE results across all eight datasets. Overall, the performance of models in the figure, particularly MLP, generally improves as the look-back window size increases. Notably, \method{} consistently enhances the performance of MLP and outperforms the teacher models across all look-back window lengths.


\subsection{Deeper Analysis into Our Distillation Framework}
\label{sec:deeper_analysis}
\begin{table}[t]
\centering

\caption{Ablation study measured by MSE on different components of \method{} (Teacher: ModernTCN). The best performance is in \textcolor{red}{\textbf{red}}, and the second-best is \textcolor{blue}{\underline{underlined}}. More ablation study results are listed in Appendix~\ref{app:full_ablation_results}.}
\label{tab:ablation}
\vskip -1em
\resizebox{0.48\textwidth}{!}{
\begin{tabular}{l c c c c c}
\toprule
Method & ECL & ETT(avg) & Solar & Traffic & Weather \\
\midrule
Teacher            & 0.167 & 0.371 & 0.191 & 0.413 & 0.238 \\
MLP                 & 0.173 & 0.397 & 0.194 & 0.434 & 0.234 \\
\midrule
\method{}         & \textcolor{red}{\textbf{0.157}} & \textcolor{red}{\textbf{0.342}} & \textcolor{red}{\textbf{0.184}} & \textcolor{red}{\textbf{0.387}} & \textcolor{red}{\textbf{0.220}} \\
w/o prediction level & \textcolor{blue}{\underline{0.157}} & 0.373 & \textcolor{blue}{\underline{0.184}} & 0.392 & \textcolor{blue}{\underline{0.221}} \\
w/o feature level    & 0.161 & 0.349 & 0.188 & 0.393 & 0.224 \\
w/o multi-scale      & 0.162 & 0.377 & 0.187 & 0.393 & 0.224 \\
w/o multi-period     & 0.157 & \textcolor{blue}{\underline{0.342}} &0.184 & \textcolor{blue}{\underline{0.391}} & 0.221 \\
w/o sup              & 0.165 & 0.344 & 0.192 & 0.506 & 0.225 \\
\bottomrule
\end{tabular}
}
\vskip -1em
\end{table}

\textbf{Ablation study.} As our proposed \method{} incorporates two distillation strategies, \textit{multi-scale distillation} and \textit{multi-period distillation}, we assess their effectiveness by removing the corresponding losses, $\mathcal{L}_{\text{scale}}$ and $\mathcal{L}_{\text{period}}$, from \method{}. Additionally, we evaluate the impact of \textit{prediction-level} and \textit{feature-level distillation} by removing $\mathcal{L}^{\mathbf{H}}$ and $\mathcal{L}^{\mathbf{Y}}$, respectively. Furthermore, we test the model by removing the supervised loss $\mathcal{L}_{\text{sup}}$, using only the distillation losses as the overall loss for \method{}. 

Table~\ref{tab:ablation} presents the results of these ablations, compared with the full \method{}, a stand-alone MLP, and the teacher models. We draw the following observations: \textbf{First}, each loss term at both levels, when used individually, already outperforms the stand-alone MLP. \textbf{Second}, when combined, the losses complement each other, enabling \method{} to achieve the best performance, surpassing the teacher model. \textbf{Third}, notably, \method{} maintains superior performance over both the MLP and the teacher even without the supervised loss $\mathcal{L}_{\text{sup}}$. This can be attributed to two potential reasons: (1) ground truth may contain noise, making it more challenging to fit, whereas teacher provides simpler and more learnable knowledge; and (2) \textit{multi-scale} and \textit{multi-period} distillation processes effectively transfer complementary knowledge from teacher to MLP.

\begin{figure}[t]
    \centering    \includegraphics[width=0.45\textwidth]{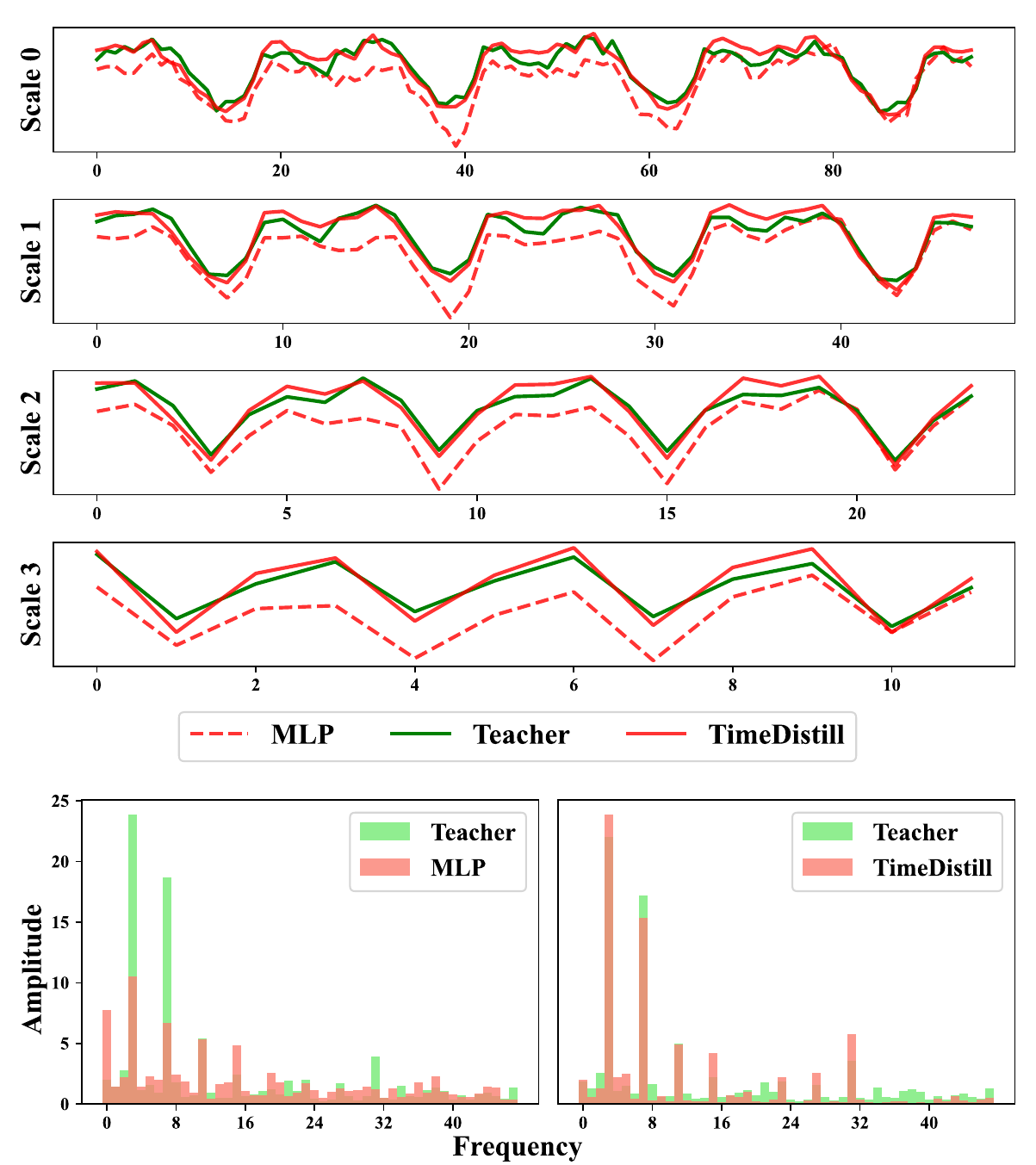}
    \vskip -1em
    \caption{Prediction comparison across temporal scales and spectrograms before and after distillation on ETTh1. MSE for MLP, Teacher (ModernTCN), and \method{} are 0.790, 0.365, and 0.366, showing \method{} bridges temporal and frequency domain gaps via multi-scale and multi-period distillation. More cases in Appendix~\ref{app:cases}.}
    \label{fig:vis_multiscale_multiperiod_ETTh1}
\end{figure}



\begin{table}[t]
\centering
\caption{Win ratio (\%) of MLP and \method{} vs. ModernTCN under input-720-predict-96 setting. \( U_{M} \) and \( U_{T} \) denote samples where MLP and \method{} outperform the teacher. \textit{Win Keep}, \( \frac{|U_{M} \cap U_{T}|}{|U_{M}|} \), shows \method{}’s retention of MLP’s wins.}
\label{tab:ratio_change}
\vskip -0.5em
\resizebox{0.4\textwidth}{!}{
\begin{tabular}{l c c c}
\toprule
Dataset & MLP & \method{} & \textit{Win Keep} \\
\midrule
ECL            & 35.64\% & 59.69\% & 95.44\% \\
ETTh1          & 28.58\% & 58.35\% & 79.27\% \\
ETTh2          & 25.57\% & 57.63\% & 69.10\% \\
ETTm1          & 42.82\% & 63.12\% & 84.61\% \\
ETTm2          & 42.84\% & 57.99\% & 80.14\% \\
Solar          & 59.40\% & 61.02\% & 91.77\% \\
Traffic        & 81.19\% & 91.24\% & 99.46\% \\
Weather        & 49.74\% & 56.43\% & 76.61\% \\
\bottomrule
\end{tabular}
}
\vskip -1em
\end{table}

\textbf{Does \method{} Truly Enhance MLP's Learning from the Teacher?} 
As shown in Table~\ref{tab:ratio_change}, \method{} improves the win ratio of MLP over the teacher by an average of 17.46\% across eight datasets. To determine whether this improvement is due to \method{} succeeding on samples MLP previously failed, we present the \textit{Win Keep} results. A higher \textit{Win Keep} indicates that \method{}'s improvements come from previously failed samples. \textit{Win Keep} remains consistently high across datasets (above 76.61\%, with an average of 84.55\%), indicating that \method{} not only retains MLP's success on samples where it already outperformed the teacher but also allows MLP to outperform the teacher on many samples it previously struggled with. This demonstrates that \method{} effectively transfers knowledge from the teacher to student MLP, enabling MLP to perform better on challenging samples while maintaining its existing strengths.

\textbf{Does \method{} Effectively Bridge the Gap Between Student and Teacher?} 
We present a case visualization from the ETTh1 dataset in Figure~\ref{fig:vis_multiscale_multiperiod_ETTh1}. The figure shows that \method{} reduces the difference between the teacher model (ModernTCN) and the student model (MLP) at multiple scales, and makes the fine-level series prediction more precise by transferring trend patterns at coarser scales from the teacher. In addition, for the same case, we visualize the spectrogram of the prediction and find that \method{} also helps reduce the difference between the teacher and MLP in the frequency domain, enabling the MLP to learn more structured multi-period patterns. 

\begin{figure}[ht!]
    \centering
    \includegraphics[width=0.45\textwidth]{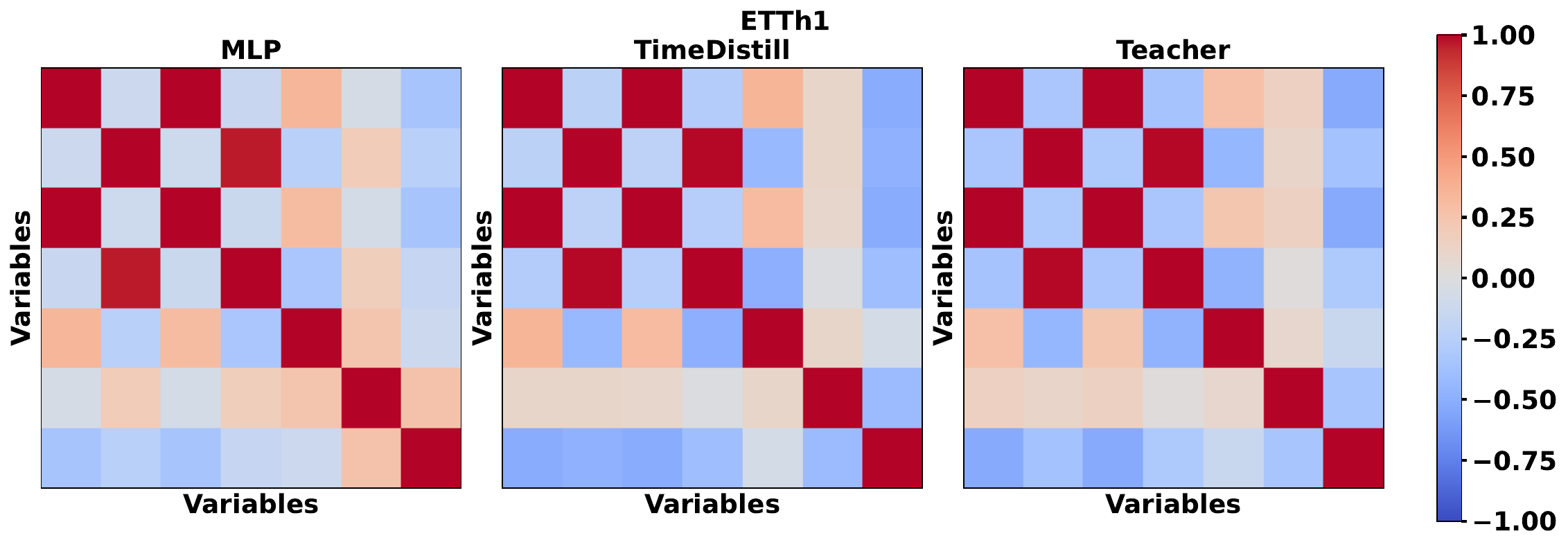} \\[1ex]
    \includegraphics[width=0.45\textwidth]{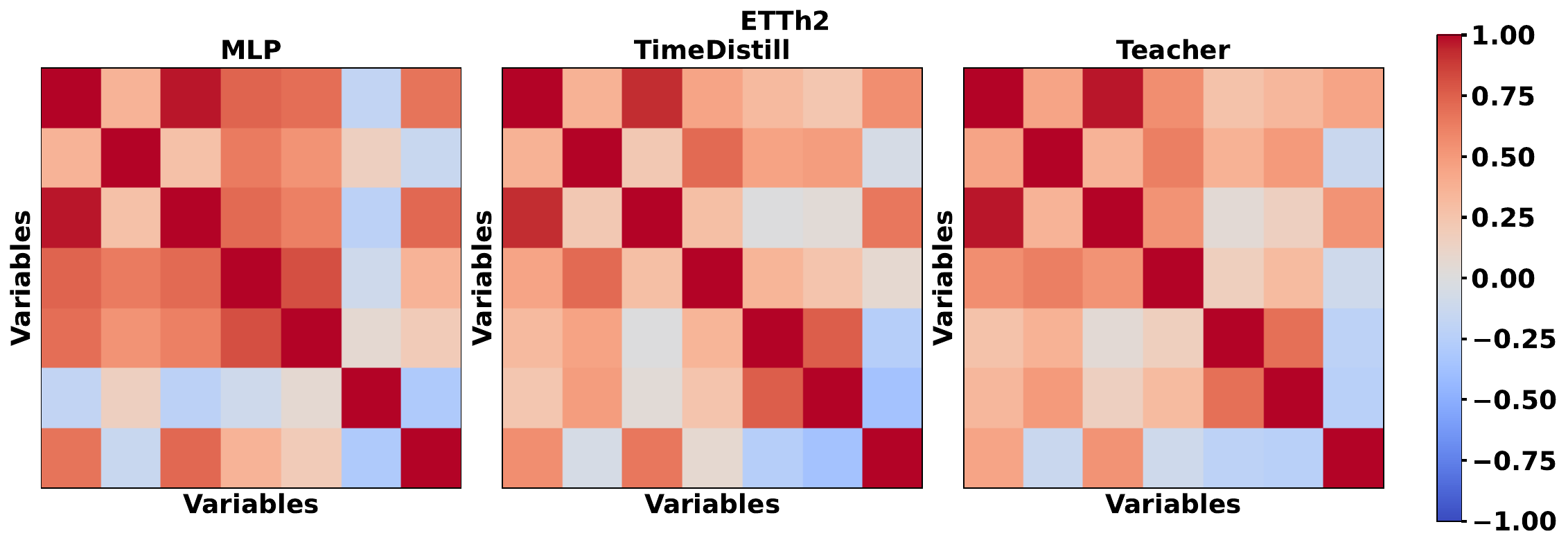}
    \vspace{-1em}
    \caption{Multi-Variate Person Correlation Visualization on ETTh1 and ETTh2 datasets.}
    \label{fig:correlation_heatmaps}
\end{figure}

\textbf{Does \method{} Learn Multi-Variate Correlation via Distillation?}
\label{app:multivariate}
Some teacher models, such as iTransformer and ModerTCN, are designed in a channel-dependent (CD) manner. While this approach can utilize multi-variate correlation effectively, it incurs a higher computational burden. In contrast, advanced methods like PatchTST, TimeMixer, and our method \method{} are employed in a channel-independent (CI) manner, which offers better computational efficiency. This leads to an intriguing question: \textit{Can \method{} enable MLP to learn multi-variate correlation from the teacher without an explicit distillation design?} To explore this, we present a preliminary study using Figure~\ref{fig:correlation_heatmaps}, which illustrates two cases from the ETTh1 and ETTh2 datasets, each containing seven variables. Our observations reveal that \method{}'s multi-variate correlation more closely resembles that of the teacher compared to the MLP model before distillation. This suggests that \method{} facilitates learning underlying structures among multi-variates through distillation even though the student MLP is a CI model. These findings highlight a potential benefit of distillation that merits further exploration. We leave this for further work.

\section{Conclusion and Future work}

We propose \method{}, a cross-architecture \textit{KD} framework enabling lightweight MLP to surpass complex teachers. By distilling multi-scale and multi-period patterns, \method{} efficiently transfers temporal and frequency-domain knowledge to student MLP. Theoretical interpretations and comprehensive experiments confirm its versatility and effectiveness. Future work includes distilling from advanced time series models, e.g. time series foundation models, and incorporating multivariate patterns.





\bibliographystyle{ACM-Reference-Format}
\balance
\bibliography{kdd}

\appendix
\clearpage

\section{Theoretical Understanding from Data Augmentation Perspective}
\label{app:theory}
In this section, we propose a theoretical viewpoint that \method{} unifies the idea of augmenting training data with \emph{blended} data that combines teacher outputs and ground truth.
This concept is closely related to \emph{mixup}~\cite{mixup}, a well-known data augmentation strategy.
\emph{Mixup} creates new training samples by \emph{mixing} two examples' inputs and corresponding targets according to a random interpolation coefficient. 
Analogously, in multi-scale or multi-period distillation, we create augmented samples that blend teacher predictions and ground truth signals in a manner that provides richer supervision signals.  Concretely, in a standard \emph{mixup} procedure, one obtains augmented samples \((\tilde{x}, \tilde{y})\) from two distinct training samples \((x_i, y_i)\) and \((x_j, y_j)\) by drawing a mixing ratio \(\lambda\) and computing:
\[
\tilde{x} = \lambda x_i + (1-\lambda) x_j,
\quad
\tilde{y} = \lambda y_i + (1-\lambda) y_j.
\]
In our distillation setting, instead of mixing two different input samples, we can think of \(\mathbf{\hat{Y}}_t\) (the teacher predictions) and \(\mathbf{Y}\) (the ground truth) as two sources to be mixed. Hence, these augmented targets become:
\[
\mathbf{\tilde{Y}} = \lambda \, \mathbf{Y} \;+\; (1-\lambda)\mathbf{\hat{Y}}_t,
\]
while keeping the input as it is. 
By treating \(\mathbf{\hat{Y}}_t\) as another ``view'' of the target, we are effectively performing a \emph{mixup-style} augmentation in the target space. 
This approach can be especially valuable in time series forecasting, where incorporating teacher signals from multiple horizons or multiple resolutions may help capture nuanced temporal structures. \emph{Next, we present two theorems that formalize multi-scale and multi-period loss as upper bound of matching mixup augmented prediction and student prediction.}

\begin{theorem} 
Let $(x, y)$ denote original input data pairs and $(x, y^t)$ represent corresponding teacher data pairs. Consider a data augmentation function $\mathcal{A}(\cdot)$ applied to $(x, y)$, generating augmented samples $(x', y')$. Define the training loss on these augmented samples as $\mathcal{L}_{aug} = \sum_{(x',y') \in \mathcal{A}(x,y)} |f_s(x') - y'|^2$. Then, the following inequality holds: 
\begin{equation}
   \mathcal{L}_{sup} + \eta \mathcal{L}_{scale} \geq \mathcal{L}_{aug}
\end{equation}
when $\mathcal{A}(\cdot)$ is instantiated as a mixup function~\cite{mixup} that interpolates between the original input data $(x,y)$ and teacher data $(x,y^t)$ with a mixing coefficient $\lambda=\frac{1}{1+\eta}$, i.e. $y' = \lambda y + (1-\lambda) y^t$.
\end{theorem}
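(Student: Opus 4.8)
The plan is to establish the inequality $\mathcal{L}_{sup} + \eta \mathcal{L}_{scale} \geq \mathcal{L}_{aug}$ by first unpacking what each side measures and then invoking convexity of the squared-error loss. First I would write out the three quantities explicitly under the mixup instantiation. The supervised term is $\mathcal{L}_{sup} = |f_s(x) - y|^2$, matching the student to the ground truth. The scale term (restricted to the single-scale prediction level for clarity) is $\mathcal{L}_{scale} = |f_s(x) - y^t|^2$, matching the student to the teacher output. The augmented loss, after substituting $y' = \lambda y + (1-\lambda)y^t$ with $\lambda = \frac{1}{1+\eta}$ so that $1-\lambda = \frac{\eta}{1+\eta}$, becomes $\mathcal{L}_{aug} = |f_s(x) - (\lambda y + (1-\lambda)y^t)|^2$.

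The core step is to recognize that $f_s(x) - y' = \lambda(f_s(x) - y) + (1-\lambda)(f_s(x) - y^t)$, since the two coefficients $\lambda$ and $1-\lambda$ sum to one. This is the identity that turns the augmented residual into a convex combination of the supervised residual and the teacher-matching residual. Applying convexity of the map $z \mapsto |z|^2$ (equivalently, Jensen's inequality) to this convex combination gives
\begin{equation}
|f_s(x) - y'|^2 \;\leq\; \lambda\,|f_s(x) - y|^2 + (1-\lambda)\,|f_s(x) - y^t|^2.
\end{equation}
Multiplying through by $(1+\eta)$ and using $\lambda(1+\eta) = 1$ together with $(1-\lambda)(1+\eta) = \eta$ converts the right-hand side into exactly $\mathcal{L}_{sup} + \eta\,\mathcal{L}_{scale}$, while the left-hand side scales to $(1+\eta)\mathcal{L}_{aug} \geq \mathcal{L}_{aug}$ (the final drop of the factor $(1+\eta) \geq 1$ being a harmless loosening, or absorbed into the summation over scales). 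Summing over the $M+1$ scales and noting the normalization constants are handled by the definition of $\eta$ completes the argument.

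The main obstacle I anticipate is bookkeeping rather than conceptual: reconciling the normalization factors. The definition of $\mathcal{L}_{scale}$ in the body carries a $1/(M+1)$ average and sums over scales $m=0,\dots,M$, so I would need to verify that the per-scale convexity bounds aggregate correctly under the summation and that the constant $\eta$ (left unspecified in the statement) is chosen consistently with $\lambda = \frac{1}{1+\eta}$ across all scales. I would also want to be explicit about whether the inequality is meant per-sample or in aggregate over the dataset, since $\mathcal{L}_{aug}$ is written as a sum over augmented samples $(x',y') \in \mathcal{A}(x,y)$; the convexity bound holds pointwise and therefore survives summation, so this should go through cleanly once the indexing is fixed. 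Finally, I would remark that the inequality is tight precisely when $f_s(x) - y$ and $f_s(x) - y^t$ are equal, i.e. when teacher and ground truth agree, which aligns with the intended interpretation that distillation loss upper-bounds the mixup augmentation loss.
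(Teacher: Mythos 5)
Your proposal is correct and follows essentially the same route as the paper: both arguments rest on writing $\mathcal{L}_{sup}+\eta\mathcal{L}_{scale}$ as a weighted sum of squared residuals, applying convexity of $z\mapsto |z|^2$ (Jensen's inequality) to the convex combination of targets with $\lambda=\frac{1}{1+\eta}$, and harmlessly dropping the factor $(1+\eta)\ge 1$. The only cosmetic difference is that the paper performs one weighted Jensen step over all $M+2$ targets at once (so the mixup partner is the average $\frac{1}{M+1}\sum_{m}\hat{Y}_t^{(m)}$ of the multi-scale teacher predictions), whereas you present the single-scale case and defer the aggregation — the bookkeeping you flag does go through exactly as you anticipate.
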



\begin{proof}
Let \(\gamma_0 = 1\) and \(\gamma_m = \frac{\eta}{M+1}\) for \(m=1,\dots,M\). Observe that these weights satisfy \(\gamma_0 + \sum_{m=1}^{M} \gamma_m = 1 + \eta\). Using these weights, we have:
\[
\begin{aligned}
&\quad \mathcal{L}_{sup} + \eta \mathcal{L}_{scale} \\
&=\bigl(\hat{Y}_s - Y\bigr)^2
+ \eta\sum_{m=0}^{M}\frac{\bigl(\hat{Y}_s - \hat{Y}_t^{(m)}\bigr)^2}{M+1}\\
&=
\sum_{m=0}^{M}\gamma_m \bigl(\hat{Y}_s - x_m\bigr)^2,
\end{aligned}
\]
where \(x_0 = Y\) and \(x_{m+1} = \hat{Y}_t^{(m)}\) for \(m=0,\dots,M\). By Jensen's inequality~\cite{jensen1906fonctions}, for a convex function \(f\), we have:
\[
\sum_{m=0}^{M} \gamma_m f(x_m) \geq f\Bigl(
\frac{\sum_{m=0}^{M} \gamma_m x_m}{\sum_{m=0}^{M} \gamma_m}
\Bigr).
\]
We use Jensen's inequality, which applies due to the convexity of the squared loss function \(f(x) = (\hat{Y}_s - x)^2\). We obtain:
\[
\sum_{m=0}^{M} \gamma_m \bigl(\hat{Y}_s - x_m\bigr)^2
\geq 
\Bigl(\hat{Y}_s - \frac{\sum_{m=0}^{M} \gamma_m x_m}{\sum_{m=0}^{M} \gamma_m}\Bigr)^2.
\]
Now, compute the weighted sum of \(x_m\) using the weights \(\gamma_m\) and we define $\lambda = \frac{1}{1+\eta}$:
\[
\begin{aligned}
&\frac{\sum_{m=0}^{M} \gamma_m x_m }{\sum_{m=0}^{M} \gamma_m}
= 
\frac{\gamma_0 x_0 + \sum_{m=1}^{M} \gamma_m x_m}{1+\eta}\\
&= \frac{1}{1+\eta}Y + \frac{\eta}{1+\eta} \sum_{m=0}^{M} \frac{\hat{Y}_t^{(m)}}{M+1}\\
&= \lambda Y + (1-\lambda) \sum_{m=0}^{M} \frac{\hat{Y}_t^{(m)}}{M+1}.
\end{aligned}
\]
Substituting this result back, we conclude that:
\[
\begin{aligned}
&\quad \mathcal{L}_{sup} + \eta \mathcal{L}_{scale} \\
&=\bigl(\hat{Y}_s - Y\bigr)^2
+ \eta\sum_{m=0}^{M}\frac{\bigl(\hat{Y}_s - \hat{Y}_t^{(m)}\bigr)^2}{M+1}\\
&\geq \bigl[\hat{Y}_s - [\lambda Y + (1-\lambda) \sum_{m=0}^{M} \frac{\hat{Y}_t^{(m)}}{M+1}]\bigr]^2\\
&= \mathcal{L}_{aug}
\end{aligned}
\]
This completes the proof.
\end{proof}


\begin{theorem} 
Define the training loss on the augmented samples using KL divergence as $\mathcal{L}_{aug} = \sum_{(x',y') \in \mathcal{A}(x,y)} \text{KL}\big(y', \mathcal{X}(f_s(x'))\big)$, where $\mathcal{X}(\cdot) = \text{Softmax}(\text{FFT}(\cdot))$. Then, the following inequality holds: 
\begin{equation}
   \mathcal{L}_{sup} + \eta\mathcal{L}_{period} \geq \mathcal{L}_{aug}
\end{equation}
where $\mathcal{A}(\cdot)$ is instantiated as a mixup function that interpolates between the period distribution of original input data $(x,\mathcal{X}(y))$ and teacher data $(x,\mathcal{X}(y^t))$ with a mixing coefficient $\lambda=\eta$, i.e. $y' =  \mathcal{X}(y) + \lambda \mathcal{X}(y^t)$.
\end{theorem}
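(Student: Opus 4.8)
The plan is to mirror the proof of Theorem~\ref{thm:multiscale}, swapping the convexity of the squared loss for the convexity of the KL divergence in its first argument. First I would make the three distributions explicit: write $q = \mathcal{X}(f_s(x))$ for the student's period distribution, $p_0 = \mathcal{X}(y)$ for the ground-truth period distribution, and $p_1 = \mathcal{X}(y^t)$ for the teacher's period distribution, where $\mathcal{X}(\cdot)=\text{Softmax}(\text{FFT}(\cdot))$. Reading $\mathcal{L}_{sup}$ here as the period-distribution supervised term, the two left-hand terms become $\mathcal{L}_{sup} = \text{KL}(p_0, q)$ and $\mathcal{L}_{period} = \text{KL}(p_1, q) = \text{KL}(\mathbf{Q}_t^{\mathbf{Y}}, \mathbf{Q}_s^{\mathbf{Y}})$, so that
\[
\mathcal{L}_{sup} + \eta\,\mathcal{L}_{period} = \gamma_0\,\text{KL}(p_0, q) + \gamma_1\,\text{KL}(p_1, q),
\]
with weights $\gamma_0 = 1$ and $\gamma_1 = \eta$ satisfying $\gamma_0 + \gamma_1 = 1 + \eta$, exactly paralleling the weight bookkeeping in the multi-scale argument.

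The key fact I would establish is that $p \mapsto \text{KL}(p, q) = \sum_i p_i \log(p_i/q_i)$ is convex in its first argument: the term $\sum_i p_i \log p_i$ is the convex negative entropy and the remaining $-\sum_i p_i \log q_i$ is linear in $p$, so the coordinatewise second derivative is $1/p_i > 0$. Applying the weighted Jensen inequality for this convex function then gives
\[
\gamma_0\,\text{KL}(p_0, q) + \gamma_1\,\text{KL}(p_1, q) \;\geq\; (1+\eta)\,\text{KL}\!\left(\frac{\gamma_0 p_0 + \gamma_1 p_1}{1+\eta},\; q\right),
\]
whose interior mixture is $\gamma_0 p_0 + \gamma_1 p_1 = \mathcal{X}(y) + \eta\,\mathcal{X}(y^t) = y'$, recovering the mixup target of the statement with $\lambda = \eta$. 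Since $1 + \eta \geq 1$ and the KL divergence is non-negative, I would then discard the prefactor to reach $\mathcal{L}_{sup} + \eta\,\mathcal{L}_{period} \geq \text{KL}(y', q) = \mathcal{L}_{aug}$, completing the argument in the same two-move pattern (Jensen, then drop the $\geq 1$ factor) used for $\mathcal{L}_{scale}$.

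I expect the main obstacle to be the normalization bookkeeping around $y'$. In the multi-scale theorem the Jensen interior is a genuine convex combination $\lambda y + (1-\lambda)(\cdots)$ with $\lambda = \tfrac{1}{1+\eta}$, so dropping the $(1+\eta)$ prefactor lands exactly on $\mathcal{L}_{aug}$; here the weighted Jensen step likewise produces the \emph{normalized} mixture $\tfrac{\mathcal{X}(y) + \eta\,\mathcal{X}(y^t)}{1+\eta}$ inside the KL, whereas the statement writes the \emph{unnormalized} $y' = \mathcal{X}(y) + \eta\,\mathcal{X}(y^t)$, which sums to $1+\eta$ rather than one. I would reconcile this by reading $\mathcal{L}_{aug}$ as the KL against the normalized mixup target (mixing weight $\tfrac{1}{1+\eta}$ on $\mathbf{Q}_y$ and $\tfrac{\eta}{1+\eta}$ on $\mathbf{Q}_t$, matching the multi-scale convention), so that discarding the $\geq 1$ factor yields the claim and the unnormalized $y'$ is understood up to this global constant; I would note that naively substituting the unnormalized $y'$ into $\text{KL}(\cdot,q)$ instead introduces an additive $(1+\eta)\log(1+\eta)\ge 0$ that pushes the bound the wrong way, which is why the normalized reading is the correct one. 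A final minor check is that $q = \mathcal{X}(f_s(x))$ has strictly positive entries so that the KL and its convexity are well defined, which holds automatically because $\mathcal{X}$ terminates in a softmax.
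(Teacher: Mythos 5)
Your central combining step is the same as the paper's, just under a different name. The paper invokes the log sum inequality
\[
\sum_{i} a_i\log\frac{a_i}{b_i}\;\geq\;\Bigl(\sum_{i} a_i\Bigr)\log\frac{\sum_{i} a_i}{\sum_{i} b_i},
\]
applied per frequency with the pairs $(\mathbf{Q}_y(x),\mathbf{Q}_s(x))$ and $(\eta\,\mathbf{Q}_t(x),\eta\,\mathbf{Q}_s(x))$; that is exactly the weighted Jensen inequality for the convex map $p\mapsto\mathrm{KL}(p,q)$ that you use, and both land on $\sum_x\bigl(\mathbf{Q}_y(x)+\eta\,\mathbf{Q}_t(x)\bigr)\log\frac{\mathbf{Q}_y(x)+\eta\,\mathbf{Q}_t(x)}{(1+\eta)\,\mathbf{Q}_s(x)}$. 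Your normalization discussion is also on target: the paper silently keeps the factor $1+\eta$ in the denominator (the $\eta\,\mathbf{Q}_s$ term) and then relabels the resulting generalized divergence as $\mathrm{KL}(\mathbf{Q}_y+\eta\,\mathbf{Q}_t,\mathbf{Q}_s)=\mathcal{L}_{aug}$, so the unnormalized $y'$ in the statement only makes sense under that reading; your observation that a naive $\mathrm{KL}(y',q)$ with unnormalized $y'$ picks up an additive $(1+\eta)\log(1+\eta)$ in the wrong direction is correct, and is if anything a fair criticism of the paper's own notation.

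The genuine gap is at the start. You ``read $\mathcal{L}_{sup}$ as the period-distribution supervised term'' $\mathrm{KL}(\mathcal{X}(y),q)$, but in the paper $\mathcal{L}_{sup}=\lVert\mathbf{Y}-\hat{\mathbf{Y}}_s\rVert^2$ is the time-domain MSE from the overall objective, so your argument bounds a reinterpreted left-hand side rather than the stated one. The paper's proof spends its first half bridging exactly this: it applies Parseval's theorem to rewrite $\lVert\hat{Y}_s-Y\rVert^2$ as the squared error between the FFTs, and then invokes the claim (citing Kim et al., 2021) that as the softmax temperature $\tau\to\infty$ the KL divergence between the softened spectra becomes equivalent to that MSE, yielding $\mathcal{L}_{sup}=\lim_{\tau\to\infty}\mathrm{KL}(\mathbf{Q}_y,\mathbf{Q}_s)$ before the log-sum step. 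That bridge is itself informal (it is an asymptotic equivalence of objectives, not an equality, and the method actually runs at $\tau=0.5$), but it is the one ingredient your proposal omits; without it, or an explicit substitute, the quantity you bound is not the $\mathcal{L}_{sup}+\eta\,\mathcal{L}_{period}$ of the theorem.
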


\begin{proof} 
We denote $\mathbf{Q}_y = \mathcal{X}(y)$, $\mathbf{Q}_s = \mathcal{X}(f_s(x))$, $\mathbf{Q}_t = \mathcal{X}(y_t)$, $\hat{Y}_s = f_s(x)$ According to Parseval's theorem, we rewrite $\mathcal{L}_{sup}$ as following:
\[
\begin{aligned}
&\quad \mathcal{L}_{sup} + \eta \mathcal{L}_{period}\\
&= \,\bigl(\hat{Y}_s - Y\bigr)^2
\;+\;
\eta\,\mathrm{KL}\bigl(\mathbf{Q}_t, \mathbf{Q}_s\bigr) \\
&= \,\bigl(FFT(\hat{Y}_s) - FFT(Y)\bigr)^2
\;+\;
\eta\,\mathrm{KL}\bigl(\mathbf{Q}_t, \mathbf{Q}_s\bigr) \\
\end{aligned}
\]
When the temperature $\tau$ goes to $\infty$, minimizing KL divergence between two distributions is equivalent to minimizing the MSE error~\cite{kim2021comparing}. Thus, we have
\[
\begin{aligned}
& \quad \,\bigl(FFT(\hat{Y}_s) - FFT(Y)\bigr)^2
\;+\;
\eta\,\mathrm{KL}\bigl(\mathbf{Q}_t, \mathbf{Q}_s\bigr)\\
&= \lim_{\tau \rightarrow \infty}\,\mathrm{KL}\bigl(\mathcal{X}(Y), \mathcal{X}(\hat{Y}_s)\bigr)\;+\;
\eta\,\mathrm{KL}\bigl(\mathbf{Q}_t, \mathbf{Q}_s\bigr) \\
&= \lim_{\tau \rightarrow \infty}\,\mathrm{KL}\bigl(\mathbf{Q}_y, \mathbf{Q}_s\bigr)
\;+\;
\eta\,\mathrm{KL}\bigl(\mathbf{Q}_t, \mathbf{Q}_s\bigr) 
\end{aligned}
\]
The KL divergence of $P$ from $Q$ is defined as 
\[\mathrm{KL}[P||Q]=\sum_{x\in\mathcal{X}}p(x)\cdot\log\frac{p(x)}{q(x)}\]
and the log sum inequality~\cite{thomas2006elements} states that
\[\sum_{i=1}^na_i\log\frac{a_i}{b_i}\geq\left(\sum_{i=1}^na_i\right)\log\frac{\sum_{i=1}^na_i}{\sum_{i=1}^nb_i},\]
where $a_1,\ldots,a_n$ and $b_1,\ldots,b_n$ are non-negative real numbers. For clarity, we omit $\lim_{\tau \rightarrow \infty}$ in the following derivation. Continuing with the main derivation, we have:
\[
\begin{aligned}
&\quad \,\mathrm{KL}\bigl(\mathbf{Q}_y, \mathbf{Q}_s\bigr)
\;+\;
\eta\,\mathrm{KL}\bigl(\mathbf{Q}_t, \mathbf{Q}_s\bigr) \\
&=\sum_{x\in\mathcal{X}}\mathbf{Q}_y(x)\cdot\log\frac{\mathbf{Q}_y(x)}{\mathbf{Q}_s(x)}\\
&\quad\quad\quad\quad\quad\quad+\eta\sum_{x\in\mathcal{X}}\mathbf{Q}_t(x)\cdot\log\frac{\mathbf{Q}_t(x)}{\mathbf{Q}_s(x)} \\
&=\sum_{x\in\mathcal{X}}[ \mathbf{Q}_y(x)\cdot\log\frac{ \mathbf{Q}_y(x)}{\mathbf{Q}_s(x)}\\
&\quad\quad\quad\quad\quad\quad+\eta\mathbf{Q}_t(x)\cdot\log\frac{\eta\mathbf{Q}_t(x)}{\eta\mathbf{Q}_s(x)}] \\
&\ge\sum_{x\in\mathcal{X}}[[ \mathbf{Q}_y(x)
+\eta\mathbf{Q}_t(x)]\\
&\quad\quad\quad\quad\quad\quad\cdot\log\frac{ \mathbf{Q}_y(x)+\eta\mathbf{Q}_t(x)}{ \mathbf{Q}_s(x)+\eta\mathbf{Q}_s(x)}] \\
&=
\mathrm{KL}\Bigl(\,\mathbf{Q}_y + \eta\,\mathbf{Q}_t
\;,\;\mathbf{Q}_s\Bigr)\\
\end{aligned}
\]
To align with the mixup formulation defined in our theorem. We denote $\lambda = \eta$, then we have
\[
\begin{aligned}
&\quad \mathrm{KL}\Bigl(\,\mathbf{Q}_y + \eta\,\mathbf{Q}_t
\;,\;\mathbf{Q}_s\Bigr)\\
&=\mathrm{KL}\Bigl(\,\mathbf{Q}_y + \lambda\,\mathbf{Q}_t
\;,\;\mathbf{Q}_s\Bigr)\\
&=\mathrm{KL}\Bigl(\,\mathcal{X}(Y) + \lambda\,\mathcal{X}(\hat{Y}_t)
\;,\;\mathcal{X}(\hat{Y}_s)\Bigr)\\
&=\mathcal{L}_{aug}\\
\end{aligned}
\]

This completes the proof.
\end{proof}

These two theorems show that the disllation process provides additional augmented samples that smooth the ground truth and the teacher prediction. This is analogous to label smoothing in classification, where labels are adjusted to be less extreme or certain.
This data augmentation perspective brings several notable benefits for time series forecasting:
\textbf{(1) Enhanced Generalization:} By blending teacher predictions (e.g., from a larger or more accurate model, or from multiple horizons/scales) with the ground truth, the student model is exposed to a richer supervision signal. This can mitigate overfitting, especially in scenarios with limited or noisy training data and the teacher predictions might encapsulate trends or patterns not immediately apparent from the ground truth alone, especially in cases of complex or high-dimensional regression tasks. 
\textbf{(2) Explicit Integration of Patterns:} In time series, important temporal structures often manifest at different resolutions or periods.  Incorporating teacher predictions at multiple scales can help the student model learn both short-term fluctuations and long-term trends, which might be overlooked when relying solely on the ground truth. \textbf{(3) Stabilized Training Dynamics:} Blending ground truth with teacher outputs naturally “softens” the targets, making the learning process less sensitive to noise and variance in the dataset.  This smoothness can also lead to gentler gradients, reducing abrupt directional changes during optimization.  Consequently, training is more stable and less prone to exploding or vanishing updates, facilitating better convergence.

\begin{table*}[ht!]
\centering
\caption{Dataset characteristics for time series long-term forecasting task.}
\resizebox{0.97\textwidth}{!}{%
\begin{tabular}{c|c|c|c|c|c|c}
\toprule
 Dataset & Dim & Series Length & Dataset Size     & Frequency & Forecastability* & Information \\ \midrule
  ECL      & 321          & \{96, 192, 336, 720\}  & (18317, 2633, 5261)      & Hourly             & 0.77                       & Electricity          \\ \midrule
  ETTh1            & 7            & \{96, 192, 336, 720\}  & (8545, 2881, 2881)        & 15min              & 0.38                       & Temperature          \\ \midrule
                       ETTh2            & 7            & \{96, 192, 336, 720\}  & (8545, 2881, 2881)        & 15min              & 0.45                       & Temperature          \\ \midrule
 ETTm1            & 7            & \{96, 192, 336, 720\}  & (34465, 11521, 11521)     & 15min              & 0.46                       & Temperature          \\ \midrule
                       ETTm2            & 7            & \{96, 192, 336, 720\}  & (34465, 11521, 11521)     & 15min              & 0.55                       & Temperature          \\ \midrule
                       
                       Solar     & 137          & \{96, 192, 336, 720\}  & (36601, 5161, 10417)     & 10min              & 0.33                       & Electricity
                       \\ \midrule
                       Traffic          & 862          & \{96, 192, 336, 720\}  & (12185, 1757, 3509)      & Hourly             & 0.68                       & Transportation       \\ \midrule
                       Weather          & 21           & \{96, 192, 336, 720\}  & (36792, 5271, 10540)     & 10min              & 0.75                       & Weather                        \\ \bottomrule
\end{tabular}
}
*The forecastability is borrowed from TimeMixer~\cite{timemixer}. A larger value indicates better predictability.
\label{tab:dataset}
\end{table*}

\section{Implementation Details}
\label{app:implementation_details}
\textbf{Datasets Details.}
We conduct experiments to evaluate the performance and efficiency of \method{} on eight widely used benchmarks: Electricity (ECL), the ETT datasets (ETTh1, ETTh2, ETTm1, ETTm2), Solar, Traffic, and Weather, following the pipeline in~\cite{itransformer, timemixer, moderntcn}. Detailed descriptions of these datasets are provided in Table~\ref{tab:dataset}.

\textbf{Metric Details.} We use Mean Square Error (MSE) and Mean Absolute Error (MAE) as our evaluation metrics, following~\cite{itransformer, moderntcn, timemixer, micn, timesnet, autoformer, informer, dlinear}:
\begin{equation}
    \text{MSE} = \frac{1}{S \times C}\sum_{i=1}^S\sum_{j=1}^C(Y_{ij}-\hat{Y}_{ij})^2,
\end{equation}
\begin{equation}
    \text{MAE} = \frac{1}{S \times C}\sum_{i=1}^S\sum_{j=1}^C|Y_{ij}-\hat{Y}_{ij}|.
\end{equation}
Here, $Y \in \mathbb{R}^{S \times C}$ represents the ground truth, and $\hat{Y} \in \mathbb{R}^{S \times C}$ represents the predictions. $S$ denotes the future prediction length, $C$ is the number of channels, and $Y_{ij}$ indicates the value at the $i$-th future time point for the $j$-th channel.

\begin{table}[h]
\centering
\caption{Experiment configuration of \method{}.}
\resizebox{0.45\textwidth}{!}{
\begin{tabular}{c c c c c c}
\hline
\multirow{2}{*}{Dataset} & \multicolumn{2}{c}{Model Hyper-parameter} & \multicolumn{3}{c}{Training Process} \\
 & $D$ & Norm & Epoch & $\alpha$ & $\beta$ \\
\hline
ECL     & 512   & non-stationary & 20 & 0.1 & 0.5 \\
ETTh1   & 512   & non-stationary & 20 & 2   & 2   \\
ETTh2   & 512   & non-stationary & 20 & 2   & 0.5 \\
ETTm1   & 512   & non-stationary & 20 & 2   & 0.1 \\
ETTm2   & 512   & non-stationary & 20 & 2   & 0.5 \\
Solar   & 512   & non-stationary & 20 & 0.1 & 2   \\
Traffic & 1024  & revin          & 10 & 0.1 & 0.1 \\
Weather & 512   & non-stationary & 20 & 0.5 & 2   \\
\hline
\end{tabular}
}
\label{tab:hyperparams}
\end{table}

\textbf{Experiment Details.} All experiments are implemented in PyTorch~\cite{paszke2019pytorch} and conducted on a cluster equipped with NVIDIA A100, V100, and K80 GPUs. The teacher models are trained using their default configurations as reported in their respective papers. For the student MLP model, we employ a combination of a decomposition scheme~\cite{dlinear, autoformer, fedformer} and a two-layer MLP with hidden dimension $D$ as the architecture.  When using \method{} for distillation, the teacher model is frozen, and only the student MLP is trained. The initial learning rate is set to 0.01, and the ADAM optimizer~\cite{kingma2014adam} is used with MSE loss for the training of the student model. We apply early stopping with a patience value of 5 epochs. The batch size is set to 32. The temperature for multi-period distillation is set to $\tau=0.5$. The number of scales is set to $M=3$. We use two types of normalization methods, i.e. non-stationary~\cite{non-stationary} and revin~\cite{revin}. We perform a hyperparameter search for $\alpha$ and $\beta$ within the range \{0.1, 0.5, 1, 2\}. We conduct hyperparameter sensitivity analysis in Appendix~\ref{app:hyperparameter_sensitivity}. For consistency, \method{} defaults to using ModernTCN as the teacher and fixes the look-back window length to 720. Additional experiments with other teacher models and different look-back window lengths are presented in Section~\ref{sec:versatility}. Additional detailed model configuration information is presented in Table~\ref{tab:hyperparams}. We also report the standard deviation of \method{} performance under five runs with different random seeds in Table~\ref{tab:standard_deviation}, which exhibits that the performance of \method{} is stable.

\textbf{Channel Independent MLP.} We denote the model as \( f \). The channel-independent strategy predicts each channel independently, defined as:$\hat{Y} = [f(x^1), \dots, f(x^C)]$, where $\hat{Y} = [\hat{y}^1, \dots, \hat{y}^C]$ denotes the model prediction. In contrast, the channel-dependent strategy considers the combination of all channels simultaneously, defined as: $\hat{Y} = f(x^1, \dots, x^C)$. In this paper, for MLP, we adopt the channel-independent strategy because it eliminates the need to model inter-channel relationships~\cite{han2024capacity}, which reduces model complexity and results in a more lightweight model, which is especially advantageous when dealing with a large number of variables. 

\begin{table*}[ht]
\centering
\caption{Robustness of \method{} performance. The results are obtained from five random seeds, averaged over four prediction lengths.}
\resizebox{0.99\textwidth}{!}{
\begin{tabular}{lcccccccc}
\toprule
\textbf{Dataset} & \textbf{ECL} & \textbf{ETTh1} & \textbf{ETTh2} & \textbf{ETTm1} & \textbf{ETTm2} & \textbf{Solar} & \textbf{Traffic} & \textbf{Weather} \\
\midrule
\textbf{MSE} & 0.157$\pm$0.002 & 0.429$\pm$0.004 & 0.345$\pm$0.002 & 0.348$\pm$0.002 & 0.244$\pm$0.001 & 0.184$\pm$0.000 & 0.387$\pm$0.001 & 0.220$\pm$0.000 \\
\textbf{MAE} & 0.254$\pm$0.003 & 0.441$\pm$0.003 & 0.395$\pm$0.002 & 0.379$\pm$0.001 & 0.311$\pm$0.001 & 0.241$\pm$0.000 & 0.271$\pm$0.001 & 0.269$\pm$0.001 \\
\bottomrule
\end{tabular}
}
\label{tab:standard_deviation}
\end{table*}

\section{Ensuring Fair Comparison}
\label{app:fair_comparison}

To ensure a fair comparison, we have taken the following measures.  
\textbf{First}, we treat normalization (either non-stationary normalization or REVIN) as a hyperparameter and follow the original implementation of each baseline to obtain its optimal performance.  
\textbf{Second}, we reproduce all baselines using the widely adopted GitHub repository “Time-Series-Library” and implement \method{} within the same framework to provide a unified and fair evaluation environment.  
\textbf{Third}, we set \textit{drop\_last=False} so that all methods are evaluated on the exact same test set.  
\textbf{Finally}, the only difference between the baseline MLP and \method{} is the inclusion of the distillation loss, which confirms that the observed performance improvements come solely from the distillation process.

\section{Further Illustration on Motivation}
\label{app:further_motivation}

\textbf{\method{} does not rely on the assumption that the student model (MLP) must be weaker than the teacher model.} While MLPs may sometimes match or even outperform teacher models, our method does not assume that MLPs are generally weaker. Instead, knowledge distillation is effective for two main reasons:

\textit{Empirical complementary strengths:} As outlined in Section~\ref{sec:preliminaries}, our analysis shows that MLPs benefit from distilling complementary knowledge from teacher models. As shown in Figure~\ref{fig:result_cv}, the win ratio is high, which indicates that teacher and student models have different strengths on different subsets of data. Importantly, the teacher model does not need to outperform the MLP; it is sufficient for the teacher to provide additional or diverse information that the MLP can exploit to improve its own performance.

\textit{Theoretical justification:} We provide a theoretical interpretation of \method{} in Section~\ref{sec:theoretical_analysis}, where we view it through a data augmentation perspective. This mixup-based data augmentation interpretation explains the generalization benefits of distillation, independent of the performance gap between teacher and student models.

\section{Implementation Details for Preliminary Study.} 
\label{app:implementation_preliminary}
For the multi-scale pattern, we downsample the predicted time series $\mathbf{\hat{Y}}$ using a 1D convolution layer with a temporal stride of 2 to generate time series at $M$ scales. The downsampling is defined as: 
\begin{equation}
    \mathbf{\hat{Y}}^m = \mathrm{Conv}(\mathbf{\hat{Y}}^{m-1}, \mathrm{stride}=2),
\end{equation}
where \(m \in \{1, \cdots, M\}\) and $\mathbf{\hat{Y}}^0 = \mathbf{\hat{Y}}$. We set the prediction length $S$ to 96 and display only the first 48 time steps due to space constraints, with $M$ set to 3. We compare the MLP's multi-scale prediction performance against two Transformer-based teacher models (i.e., iTransformer, PatchTST) and one CNN-based teacher model (i.e., ModernTCN), using the ground truth as a reference.

For multi-period distillation, we compute the spectrogram of $\mathbf{\hat{Y}}$ via Fast Fourier Transform (FFT), which provides $S/2$ frequencies and their corresponding amplitudes. We remove the DC component in the spectrogram. Each frequency corresponds to a period, calculated as $S / \text{frequency}$, with amplitude indicating the period's significance. Thus, the spectrogram can effectively show the multi-period pattern of a time series. The prediction length $S$ is set to 96. Similar to the multi-scale, we compare the MLP's multi-period prediction performance against iTransformer, PatchTST, and ModernTCN, using the ground truth as a reference.

\begin{table*}[h]
\centering
\caption{Performance promotion obtained by our \textbf{\method{}} framework with different teacher models. $\Delta_{MLP}$ ($\Delta_{Teacher}$) represents the performance promotion between \textbf{MLP+\method{}} and a trained MLP (Teacher). We report the average performance from all prediction lengths.}
\resizebox{\textwidth}{!}{%
\begin{tabular}{cc cc|cc|cc|cc|cc|cc|cc|cc}
\toprule
\multicolumn{2}{c}{\multirow{2}{*}{Teacher Models}} & \multicolumn{2}{c}{iTranformer} & \multicolumn{2}{c}{ModernTCN} & \multicolumn{2}{c}{TimeMixer} & \multicolumn{2}{c}{PatchTST}  & \multicolumn{2}{c}{MICN} & \multicolumn{2}{c}{FEDformer} & \multicolumn{2}{c}{TimesNet} & \multicolumn{2}{c}{Autoformer}  \\ 
& & \multicolumn{2}{c}{(\citeyear{itransformer})} & \multicolumn{2}{c}{(\citeyear{moderntcn})} & \multicolumn{2}{c}{(\citeyear{timemixer})} & \multicolumn{2}{c}{(\citeyear{patchtst})} & \multicolumn{2}{c}{(\citeyear{micn})} & \multicolumn{2}{c}{(\citeyear{fedformer})} & \multicolumn{2}{c}{(\citeyear{timesnet})} &  \multicolumn{2}{c}{(\citeyear{autoformer})} \\
\multicolumn{2}{c}{Metric} & \multicolumn{1}{|c}{MSE} & MAE & \multicolumn{1}{|c}{MSE} & MAE & \multicolumn{1}{|c}{MSE} & MAE & \multicolumn{1}{|c}{MSE} & MAE & \multicolumn{1}{|c}{MSE} & MAE & \multicolumn{1}{|c}{MSE} & MAE & \multicolumn{1}{|c}{MSE} & MAE & \multicolumn{1}{|c}{MSE} & MAE \\
\midrule
\multirow{5}{*}{ECL} & \multicolumn{1}{|c}{Teacher} &\multicolumn{1}{|c}{0.163}	&0.259	&0.167	&0.262	&0.165	&0.259 &0.165	&0.266	&0.181	&0.293	&0.274	&0.376	&0.250	&0.347	&0.238	&0.347 \\ 
&\multicolumn{1}{|c}{MLP} &\multicolumn{1}{|c}{0.173}	&0.276	&0.173		&0.276	&0.173		&0.276	&0.173		&0.276 &0.173		&0.276	&0.173		&0.276	&0.173		&0.276	&0.173		&0.276 \\
&\multicolumn{1}{|c}{\textbf{+\method{}}} &\multicolumn{1}{|c}{\textbf{0.157}}	&\textbf{0.254}	&\textbf{0.157}	&\textbf{0.254}	&\textbf{0.159}	&\textbf{0.256} &\textbf{0.159}	&\textbf{0.256}	&\textbf{0.158}	&\textbf{0.256}	&\textbf{0.157}	&\textbf{0.255}	&\textbf{0.162}	&\textbf{0.261}	&\textbf{0.158}	&\textbf{0.256} \\\cmidrule(lr){2-18} 
&\multicolumn{1}{|c}{$\Delta_{Teacher}$} &\multicolumn{1}{|c}{3.68\%}	&1.93\%	&5.61\%	&3.24\%	&3.31\%	&1.02\% & 3.64\%	&3.76\%	&12.78\%	&12.89\%	&42.70\%	&32.18\%	&35.08\%	&24.82\%	&33.61\%	&26.22\% 
\\
&\multicolumn{1}{|c}{$\Delta_{MLP}$} &\multicolumn{1}{|c}{9.27\%}	&7.93\%	&9.09\%	&8.06\%	&7.94\%	&7.18\%	&8.11\%	&7.21\%	&8.70\%	&7.34\%	&9.27\%	&7.57\%	&6.22\%	&5.39\%	&8.69\%	&7.21\%	 \\
\midrule
\multirow{5}{*}{ETTh1} & \multicolumn{1}{|c}{Teacher} &\multicolumn{1}{|c}{0.468}		&0.476	&0.469	&0.465	&0.459	&0.465 &0.498	&0.490	&0.739	&0.631	&0.527	&0.524	&0.507	&0.500	&0.731	&0.659 \\ 
&\multicolumn{1}{|c}{MLP} &\multicolumn{1}{|c}{0.502}	&0.489	&0.502	&0.489	&0.502	&0.489	&0.502	&0.489	&0.502	&0.489 &0.502	&0.489	&0.502	&0.489	&0.502	&0.489 \\
&\multicolumn{1}{|c}{\textbf{+\method{}}} &\multicolumn{1}{|c}{\textbf{0.428}}		&\textbf{0.445}	&\textbf{0.429}	&\textbf{0.441}	&\textbf{0.457}	&\textbf{0.464} &\textbf{0.443}	&\textbf{0.456}	&\textbf{0.496}	&\textbf{0.480}	&\textbf{0.438}	&\textbf{0.454}	&\textbf{0.441}	&\textbf{0.454}	&\textbf{0.475}	&\textbf{0.480} \\\cmidrule(lr){2-18} 
&\multicolumn{1}{|c}{$\Delta_{Teacher}$} &\multicolumn{1}{|c}{8.55\%}		&6.51\%	&8.42\%	&5.23\%	&0.42\%	&0.25\% &11.04\%	&6.94\%	&32.90\%	&23.97\%	&16.89\%	&13.36\%	&13.09\%	&9.22\%	&35.02\%	&27.16\%
\\
&\multicolumn{1}{|c}{$\Delta_{MLP}$} &\multicolumn{1}{|c}{14.74\%}		&9.00\%	&14.47\%	&9.85\%	&8.92\%	&5.13\% &11.75\%	&6.75\%	&1.20\%	&1.82\%	&12.75\%	&7.16\%	&12.14\%	&7.25\%	&5.38\%	&1.84\%	 \\
\midrule
\multirow{5}{*}{ETTh2} & \multicolumn{1}{|c}{Teacher} &\multicolumn{1}{|c}{0.398}	&0.426	&0.357	&0.403	&0.422	&0.444 &0.444	&0.443	&1.078	&0.736	&0.456	&0.485	&0.419	&0.446	&1.594	&0.940 \\ 
&\multicolumn{1}{|c}{MLP} &\multicolumn{1}{|c}{0.393}		&0.438	&0.393	&0.438	&0.393	&0.438	&0.393	&0.438 &0.393	&0.438	&0.393	&0.438	&0.393	&0.438	&0.393	&0.438 \\
&\multicolumn{1}{|c}{\textbf{+\method{}}} &\multicolumn{1}{|c}{\textbf{0.345}}		&\textbf{0.395}	&\textbf{0.345}	&\textbf{0.395}	&\textbf{0.357}	&\textbf{0.403} &\textbf{0.371}	&\textbf{0.416}	&\textbf{0.372}	&\textbf{0.418}	&\textbf{0.359}	&\textbf{0.408}	&\textbf{0.356}	&\textbf{0.404}	&\textbf{0.368}	&\textbf{0.419} \\\cmidrule(lr){2-18} 
&\multicolumn{1}{|c}{$\Delta_{Teacher}$} &\multicolumn{1}{|c}{13.32\%}	&7.28\%	&3.52\%	&2.03\%	&15.49\%	&9.17\% &16.44\%	&6.09\%	&65.50\%	&43.22\%	&21.27\%	&15.88\%	&14.90\%	&9.33\%	&76.91\%	&55.43\%
\\
&\multicolumn{1}{|c}{$\Delta_{MLP}$} &\multicolumn{1}{|c}{12.21\%}	&9.82\%	&12.27\%	&9.90\%	&9.17\%	&7.91\% &5.60\%	&5.02\%	&5.41\%	&4.60\%	&8.65\%	&6.85\%	&9.31\%	&7.69\%	&6.36\%	&4.34\%	 \\
\midrule
\multirow{5}{*}{ETTm1} & \multicolumn{1}{|c}{Teacher} &\multicolumn{1}{|c}{0.372}	&0.402	&0.390	&0.410	&0.367	&0.388 &0.383	&0.412	&0.439	&0.461	&0.423	&0.451	&0.398	&0.419	&0.570	&0.526 \\ 
&\multicolumn{1}{|c}{MLP} &\multicolumn{1}{|c}{0.391}		&0.413	&0.391	&0.413	&0.391	&0.413	&0.391	&0.413	&0.391	&0.413	&0.391	&0.413	&0.391	&0.413	&0.391	&0.413 \\
&\multicolumn{1}{|c}{\textbf{+\method{}}} &\multicolumn{1}{|c}{\textbf{0.354}}	&\textbf{0.390}	&\textbf{0.348}	&\textbf{0.379}	&\textbf{0.347}	&\textbf{0.383} &\textbf{0.358}	&\textbf{0.397}	&\textbf{0.375}	&\textbf{0.400}	&\textbf{0.354}	&\textbf{0.398}	&\textbf{0.356}	&\textbf{0.391}	&\textbf{0.376}	&\textbf{0.407} \\\cmidrule(lr){2-18} 
&\multicolumn{1}{|c}{$\Delta_{Teacher}$} &\multicolumn{1}{|c}{4.84\%}	&2.99\%	&10.94\%	&7.64\%	&5.34\%	&1.25\% &6.53\%	&3.64\%	&14.66\%	&13.11\%	&16.31\%	&11.75\%	&10.53\%	&6.64\%	&34.04\%	&22.62\%
\\
&\multicolumn{1}{|c}{$\Delta_{MLP}$} &\multicolumn{1}{|c}{9.42\%}		&5.65\%	&11.07\%	&8.43\%	&11.16\%	&7.34\%	&8.39\%	&3.96\%	&4.13\%	&3.17\%	&9.42\%	&3.72\%	&8.79\%	&5.40\%	&3.79\%	&1.54\%	 \\
\midrule
\multirow{5}{*}{ETTm2} & \multicolumn{1}{|c}{Teacher} &\multicolumn{1}{|c}{0.276}	&0.337	&0.267	&0.330	&0.279	&0.339 &0.274	&0.335	&0.348	&0.404	&0.359	&0.401	&0.291	&0.349	&0.420	&0.448 \\ 
&\multicolumn{1}{|c}{MLP} &\multicolumn{1}{|c}{0.300}		&0.373	&0.300	&0.373	&0.300	&0.373	&0.300	&0.373	&0.300	&0.373	&0.300	&0.373	&0.300	&0.373	&0.300	&0.373 \\
&\multicolumn{1}{|c}{\textbf{+\method{}}} &\multicolumn{1}{|c}{\textbf{0.252}}	&\textbf{0.316}	&\textbf{0.244}	&\textbf{0.311}	&\textbf{0.252}	&\textbf{0.317} &\textbf{0.261}	&\textbf{0.321}	&\textbf{0.262}	&\textbf{0.322}	&\textbf{0.262}	&\textbf{0.322}	&\textbf{0.258}	&\textbf{0.320}	&\textbf{0.265}	&\textbf{0.328} \\\cmidrule(lr){2-18} 
&\multicolumn{1}{|c}{$\Delta_{Teacher}$} &\multicolumn{1}{|c}{8.70\%}	&6.23\%	&8.63\%	&5.48\%	&9.67\%	&6.31\% &4.74\%	&4.18\%	&24.64\%	&20.28\%	&27.02\%	&19.70\%	&11.22\%	&8.23\%	&36.90\%	&26.79\%
\\
&\multicolumn{1}{|c}{$\Delta_{MLP}$} &\multicolumn{1}{|c}{16.07\%}		&15.28\%	&18.61\%	&16.50\%	&16.17\%	&14.91\%	&13.08\%	&13.94\%	&12.64\%	&13.75\%	&12.74\%	&13.67\%	&13.91\%	&14.12\%	&11.74\%	&12.06\%	 \\
\midrule
\multirow{5}{*}{ETT (avg)} & \multicolumn{1}{|c}{Teacher} &\multicolumn{1}{|c}{0.379}			&0.410	&0.371	&0.402	&0.382	&0.409	&0.400	&0.420	&0.651	&0.558	&0.441	&0.465	&0.404	&0.428	&0.829	&0.643 \\ 
&\multicolumn{1}{|c}{MLP} &\multicolumn{1}{|c}{0.397}		&0.428	&0.397	&0.428	&0.397	&0.428	&0.397	&0.428	&0.397	&0.428	&0.397	&0.428	&0.397	&0.428	&0.397	&0.428 \\
&\multicolumn{1}{|c}{\textbf{+\method{}}} &\multicolumn{1}{|c}{\textbf{0.345}}	&\textbf{0.387}	&\textbf{0.342}	&\textbf{0.381}	&\textbf{0.353}	&\textbf{0.392}	&\textbf{0.358}	&\textbf{0.398}	&\textbf{0.376}	&\textbf{0.405}	&\textbf{0.353}	&\textbf{0.396}	&\textbf{0.353}	&\textbf{0.392}	&\textbf{0.371}	&\textbf{0.409} \\\cmidrule(lr){2-18} 
&\multicolumn{1}{|c}{$\Delta_{Teacher}$} &\multicolumn{1}{|c}{8.92\%}			&5.79\%	&7.94\%	&5.09\%	&7.46\%	&4.16\%	&10.38\%	&5.36\%	&42.21\%	&27.41\%	&19.94\%	&14.99\%	&12.59\%	&8.42\%	&55.23\%	&36.49\%
\\
&\multicolumn{1}{|c}{$\Delta_{MLP}$} &\multicolumn{1}{|c}{13.06\%}		&9.77\%	&13.87\%	&10.97\%	&10.91\%	&8.50\%	&9.65\%	&7.20\%	&5.13\%	&5.46\%	&10.91\%	&7.67\%	&10.95\%	&8.41\%	&6.44\%	&4.63\%	 \\
\midrule
\multirow{5}{*}{Solar} & \multicolumn{1}{|c}{Teacher} &\multicolumn{1}{|c}{0.214}	&0.270	&0.191	&0.243	&0.288 &0.259	&0.210	&0.257	&0.213	&0.277	&0.300	&0.383	&0.196	&0.262	&1.037	&0.742 \\ 
&\multicolumn{1}{|c}{MLP} &\multicolumn{1}{|c}{0.194}		&0.255	&0.194	&0.255	&0.194	&0.255	&\textbf{0.194}	&\textbf{0.255}	&0.194	&0.255	&0.194	&0.255	&0.194	&0.255	&\textbf{0.194}	&\textbf{0.255} \\
&\multicolumn{1}{|c}{\textbf{+\method{}}} &\multicolumn{1}{|c}{\textbf{0.185}}	&\textbf{0.241}	&\textbf{0.184}	&\textbf{0.241}	&\textbf{0.187}	&\textbf{0.245} &0.204	&0.283	&\textbf{0.186}	&\textbf{0.242}	&\textbf{0.184}	&\textbf{0.241}	&\textbf{0.186}	&\textbf{0.243}	&0.237	&0.329 \\\cmidrule(lr){2-18} 
&\multicolumn{1}{|c}{$\Delta_{Teacher}$} &\multicolumn{1}{|c}{13.55\%}	&10.74\%	&3.60\%	& 0.71\%	&21.41\%	&14.98\% &2.86\%	&-10.12\%	&12.99\%	&12.47\%	&38.67\%	&37.08\%	&5.19\%	&6.97\%	&77.15\%	&55.66\%
\\
&\multicolumn{1}{|c}{$\Delta_{MLP}$} &\multicolumn{1}{|c}{4.80\%}		&5.42\%	&5.14\%	&5.24\%	&3.58\%	&3.86\%	&-4.98\%	&-11.07\%	&4.49\%	&4.83\%	&5.31\%	&5.42\%	&4.47\%	&4.49\%	&-21.97\%	&-29.12\%	 \\
\midrule
\multirow{5}{*}{Traffic} & \multicolumn{1}{|c}{Teacher} &\multicolumn{1}{|c}{\textbf{0.379}}	&0.271	&0.413	&0.284	&\textbf{0.391}	&0.275 &0.402	&0.284	&0.500	&0.316	&0.629	&0.388	&0.693	&0.399	&0.696	&0.427 \\ 
&\multicolumn{1}{|c}{MLP} &\multicolumn{1}{|c}{0.434}		&0.318	&0.434	&0.318	&0.434	&0.318	&0.434	&0.318	&0.434	&0.318	&0.434	&0.318	&0.434	&0.318	&0.434	&0.318 \\
&\multicolumn{1}{|c}{\textbf{+\method{}}} &\multicolumn{1}{|c}{0.389}	&\textbf{0.271}	&\textbf{0.387}	&\textbf{0.271} &0.391	 &\textbf{0.274}	&\textbf{0.390}	&\textbf{0.272}	&\textbf{0.391}	&\textbf{0.274}	&\textbf{0.395}	&\textbf{0.278}	&\textbf{0.400}	&\textbf{0.282}	&\textbf{0.396}	&\textbf{0.280} \\\cmidrule(lr){2-18} 
&\multicolumn{1}{|c}{$\Delta_{Teacher}$} &\multicolumn{1}{|c}{-2.64\%}	&0.00\%	&6.32\%	&4.74\%	&-0.04\%	 &0.38\% &2.99\%	&4.23\%	&21.78\%	&13.38\%	&37.20\%	&28.35\%	&42.35\%	&29.39\%	&43.10\%	&34.43\%
\\
&\multicolumn{1}{|c}{$\Delta_{MLP}$} &\multicolumn{1}{|c}{10.30\%}		&14.84\%	&10.70\%	&14.91\%	&9.76\%	&13.79\%	&10.07\%	&14.53\%	&9.78\%	&13.89\%	&8.92\%	&12.64\%	&7.83\%	&11.54\%	&8.69\%	&12.01\%	 \\
\midrule
\multirow{5}{*}{Weather} & \multicolumn{1}{|c}{Teacher} &\multicolumn{1}{|c}{0.259}	&0.290	&0.238	&0.277	&0.230	&0.271 &0.246	&0.283	&0.240	&0.292	&0.355	&0.398	&0.257	&0.294	&0.471	&0.465\\ 
&\multicolumn{1}{|c}{MLP} &\multicolumn{1}{|c}{0.234}		&0.294	&0.234	&0.294	&0.234	&0.294	&0.234	&0.294	&0.234	&0.294	&0.234	&0.294	&0.234	&0.294	&0.234	&0.294 \\
&\multicolumn{1}{|c}{\textbf{+\method{}}} &\multicolumn{1}{|c}{\textbf{0.220}}	&\textbf{0.270}	&\textbf{0.220}	&\textbf{0.269}	&\textbf{0.219}	&\textbf{0.266} &\textbf{0.220}	&\textbf{0.267}	&\textbf{0.225}	&\textbf{0.272}	&\textbf{0.224}	&\textbf{0.274}	&\textbf{0.222}	&\textbf{0.271}	&\textbf{0.226}	&\textbf{0.278} \\\cmidrule(lr){2-18} 
&\multicolumn{1}{|c}{$\Delta_{Teacher}$} &\multicolumn{1}{|c}{15.06\%}	&6.90\%	&7.37\%	&2.97\%	&4.82\%	&1.77\% &10.57\%	&5.65\%	&6.10\%	&6.61\%	&36.90\%	&31.16\%	&13.38\%	&7.59\%	&52.02\%	&40.22\%
\\
&\multicolumn{1}{|c}{$\Delta_{MLP}$} &\multicolumn{1}{|c}{5.83\%}		&8.02\%	&5.66\%	&8.48\%	&6.16\%	&9.38\%	&5.83\%	&9.04\%	&3.61\%	&7.26\%	&4.12\%	&6.66\%	&4.82\%	&7.57\%	&3.27\%	&5.30\%	 \\
\bottomrule

\end{tabular}%
}
\label{tab:app_different_teacher}
\end{table*}

\section{Different Teachers Analysis}
\label{app:full_different_teacher_results}
In this section, we evaluate the ability of \method{} to distill knowledge from various teacher models (additional teachers compared to Section~\ref{sec:versatility}) into a student MLP. The experiment involves eight teacher models: iTransformer, ModernTCN, TimeMixer, PatchTST, MICN, FEDformer, TimesNet, and Autoformer. The student model is an MLP designed as a channel-independent architecture, making it lightweight and computationally efficient. From Table~\ref{tab:app_different_teacher}, it is evident that iTransformer, ModernTCN, TimeMixer, PatchTST, MICN, FEDformer, TimesNet, and Autoformer improve the performance of MLP on MSE by 10.33\%, 10.88\%, 9.11\%, 7.23\%, 6.24\%, 8.90\%, 8.44\%, and 3.24\% on average across all datasets, respectively. These results demonstrate that \method{} can effectively distill knowledge from all the teacher models to improve student performance. Notably, ModernTCN improves MLP performance by up to 18.61\% on the ETTm2 dataset. 

On Traffic dataset, which involves a large number of channels (861 channels) and is particularly challenging for channel-independent models like the student MLP, \method{} achieves the second-best performance, improving MSE by 10.30\% when the teacher is iTransformer. This is likely due to \method{} can implicitly distill iTransformer's knowledge of modeling multivariate correlations to student MLP. We further explore whether \method{} can implicitly learn multi-variate correlation via distillation in Appendix~\ref{app:multivariate}.

However, when the teacher performs poorly on certain datasets, it can negatively impact the student MLP. For example, MLP achieves much better performance than the teacher Autoformer even without distillation. Autoformer performs exceptionally poorly on the Solar dataset, and after distillation, the MLP’s performance degrades by -29.12\% in terms of MAE. This suggests that poor teacher models may transfer excessive noise to the student MLP, resulting in degraded performance. Furthermore, we also observe that \method{} can effectively learn from various teachers, achieving comparable or even better performance than the teachers themselves. \method{} achieves significant improvements over the teachers themselves, with average gains of 8.13\%, 6.80\%, 7.55\%, 7.35\%, 23.92\%, 29.62\%, 18.22\%, and 48.59\% on MSE across all datasets. 

\section{Different Students Analysis}
\label{app:different_students}
\textbf{Analysis of MLP Architecture.} 
To examine the impact of the student MLP architecture on \method{}, we vary the number of layers and the hidden dimensions of the MLP. From Table~\ref{tab:app_different_student}, we observe that increasing the number of layers and hidden dimensions generally enhances performance. However, an excessive number of layers (e.g., 4 layers) or overly small hidden dimensions (e.g., 64) can lead to performance degradation. Therefore, to balance efficiency and performance, we select 2L-512 as the default MLP configuration.

\textbf{Analysis of Other MLP-Based Student Models.}
From Table~\ref{tab:app_different_student}, we observe that both LightTS and TSMixer outperform the original MLP. Incorporating \method{} further enhances their performance across all datasets, with improvements ranging from a minimum of 2.92\% to a maximum of 43.42\%, highlighting the effectiveness and adaptability of \method{}.

\section{Hyperparameter Sensitivity Analysis}
\label{app:hyperparameter_sensitivity}

\begin{figure*}[htbp]
    \centering
    \begin{tabular}{cccc}
        
        \includegraphics[width=0.23\textwidth]{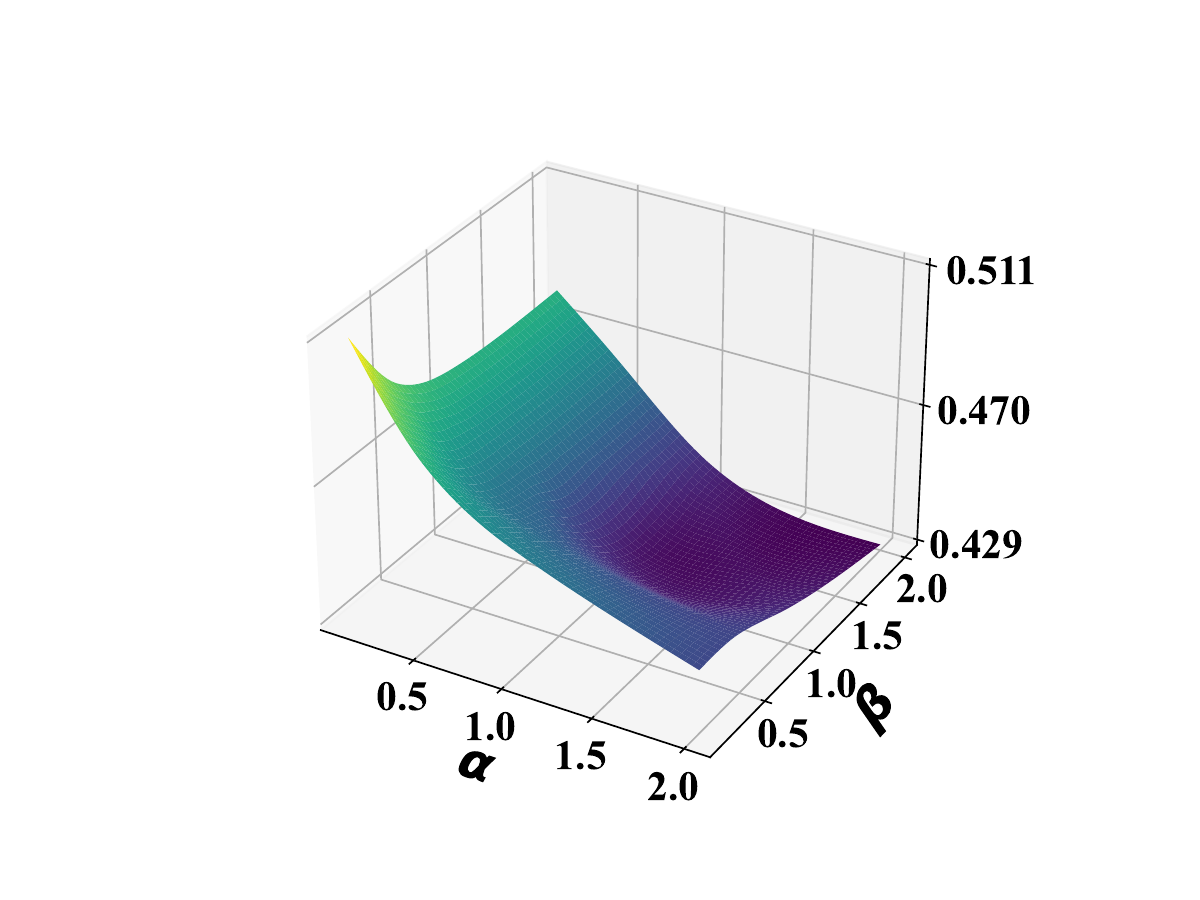} &
        \includegraphics[width=0.23\textwidth]{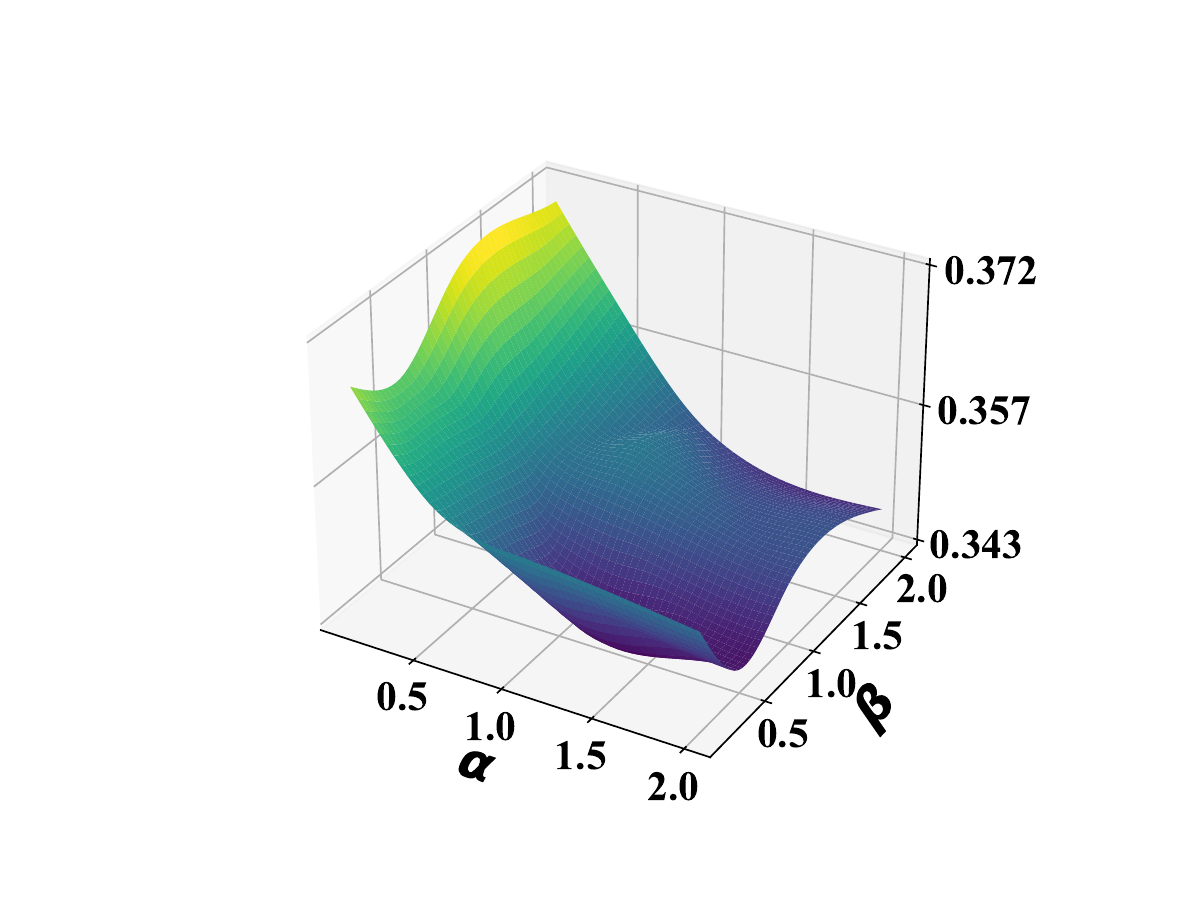} &
        \includegraphics[width=0.23\textwidth]{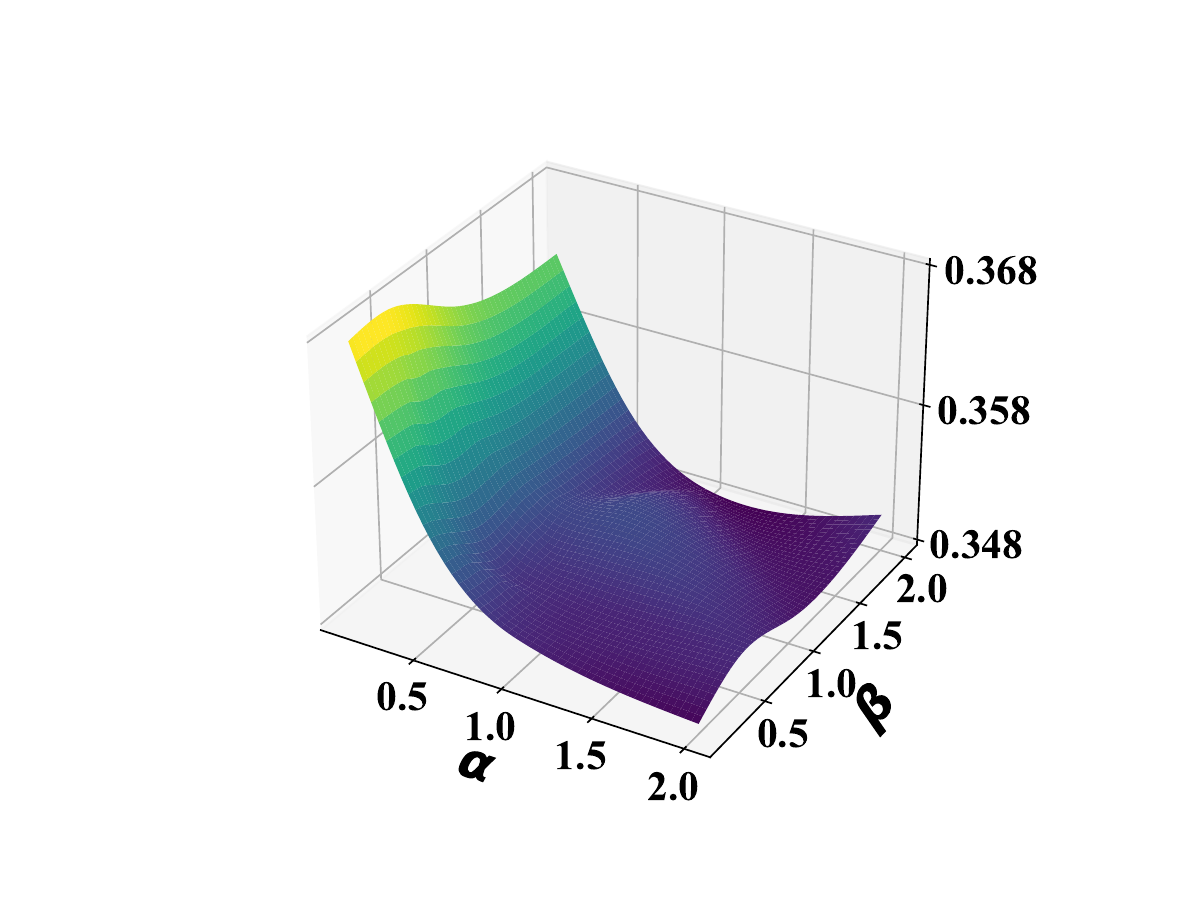} &
        \includegraphics[width=0.23\textwidth]{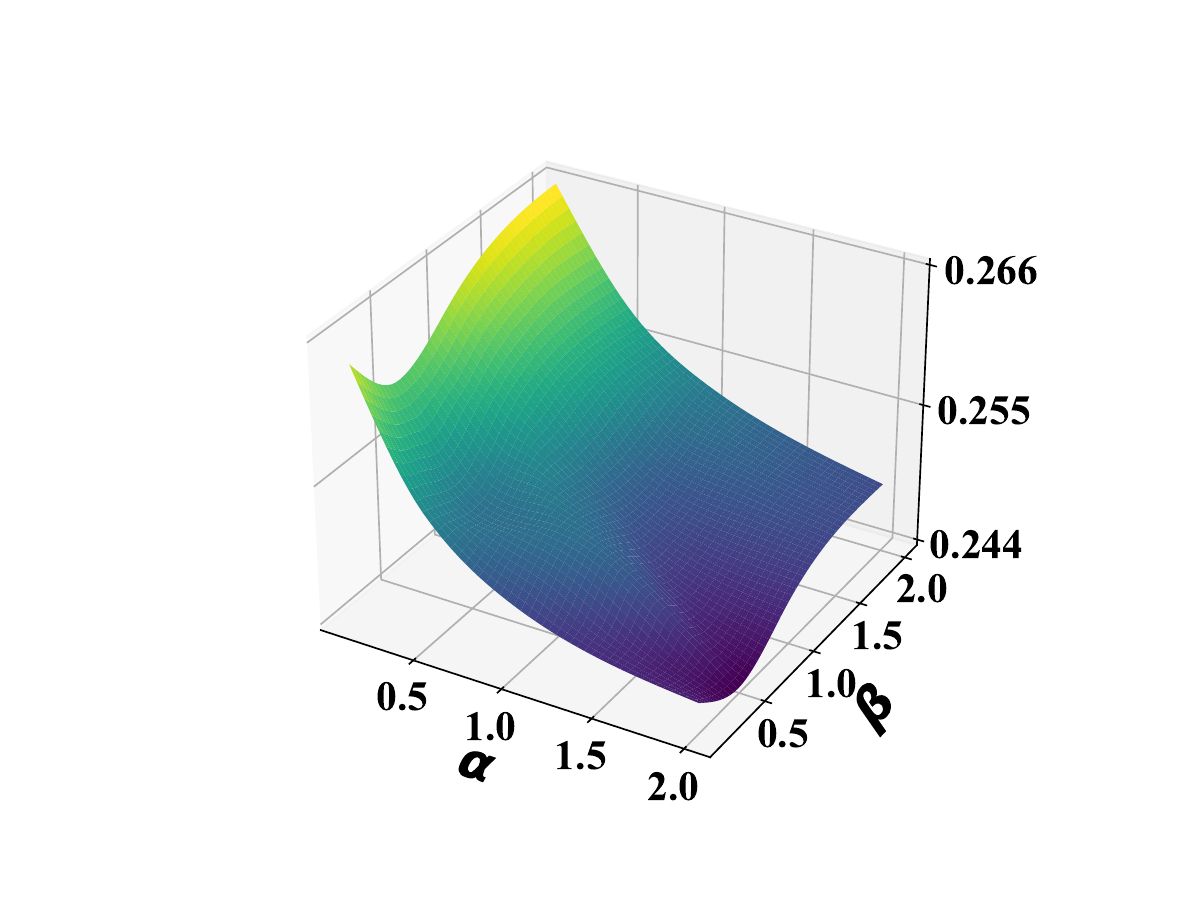} \\
        \textbf{(a) ETTh1} & \textbf{(b) ETTh2} & \textbf{(c) ETTm1} & \textbf{(d) ETTm2} \\
        \includegraphics[width=0.23\textwidth]{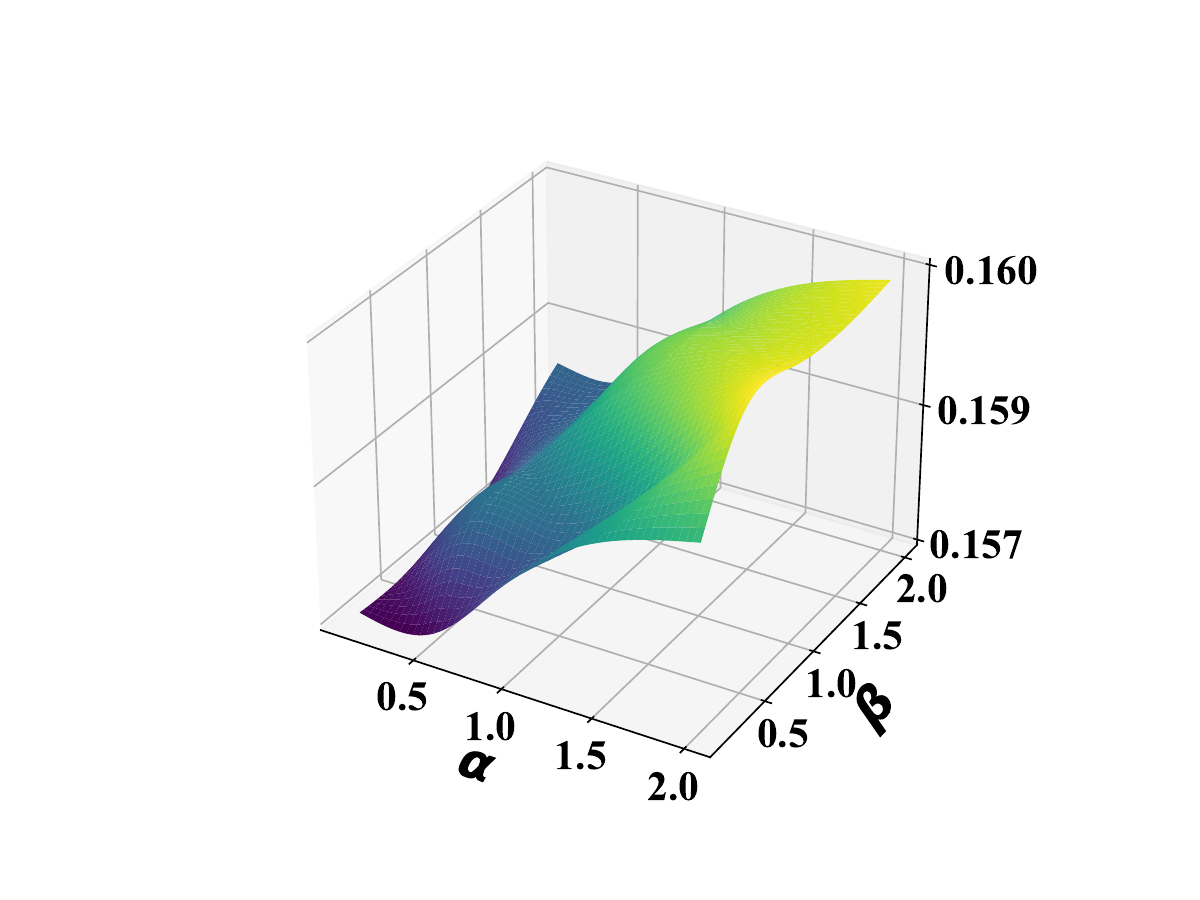} &
        \includegraphics[width=0.23\textwidth]{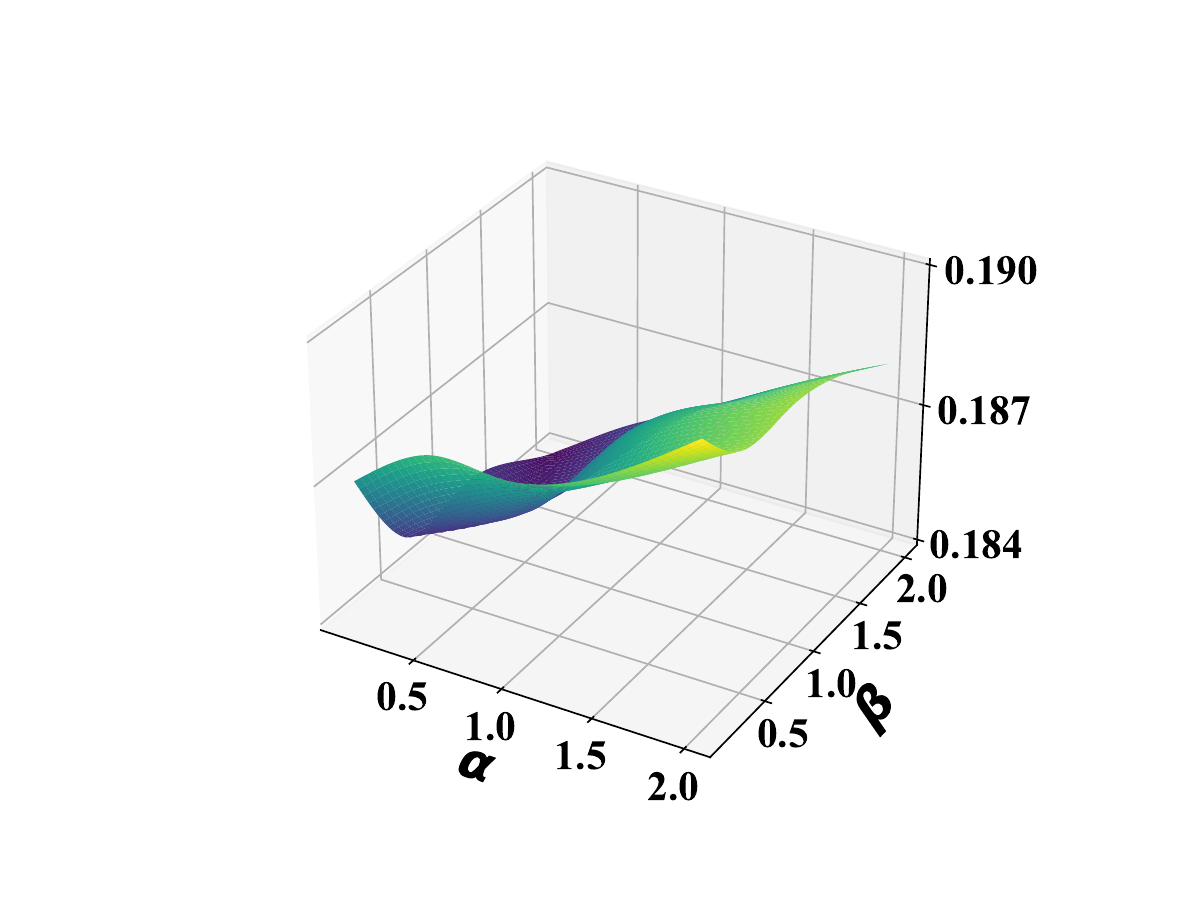} &
        \includegraphics[width=0.23\textwidth]{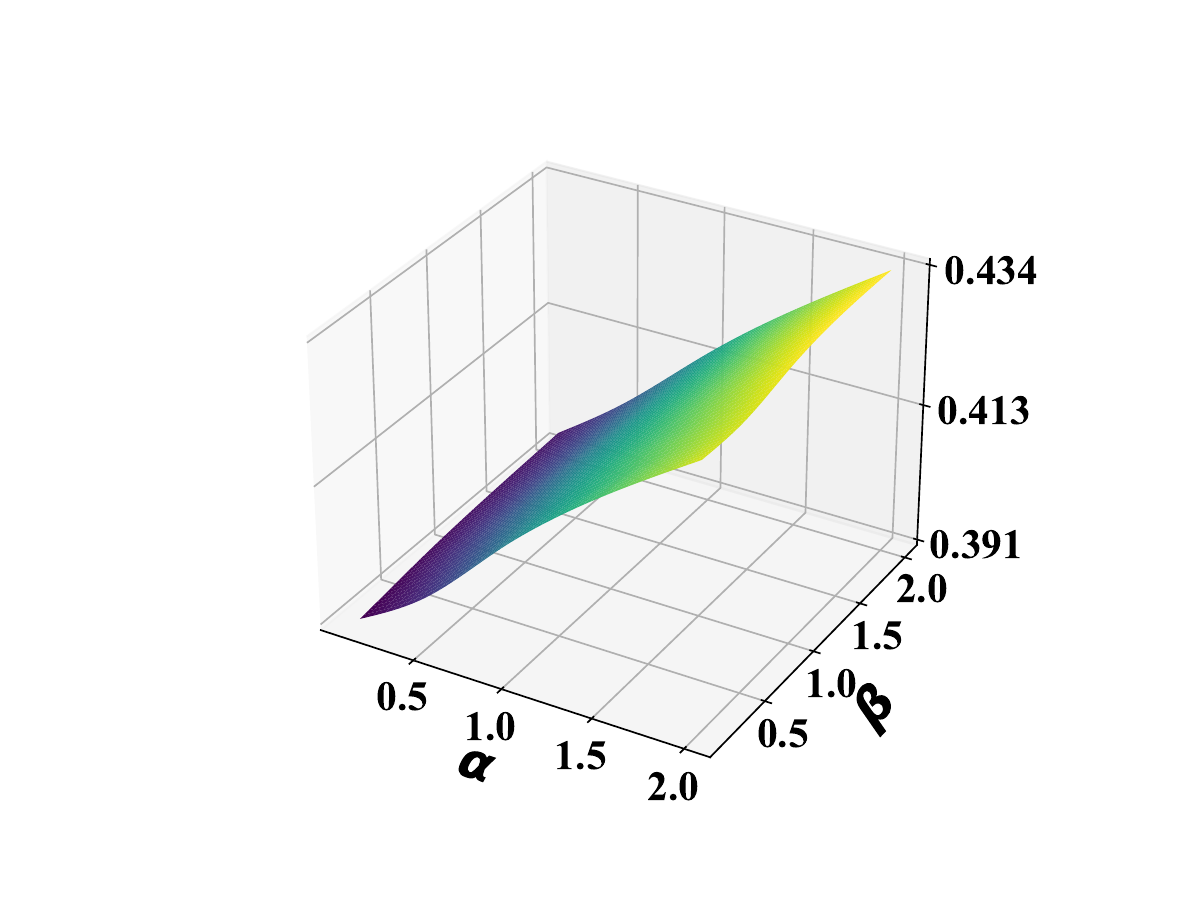} &
        \includegraphics[width=0.23\textwidth]{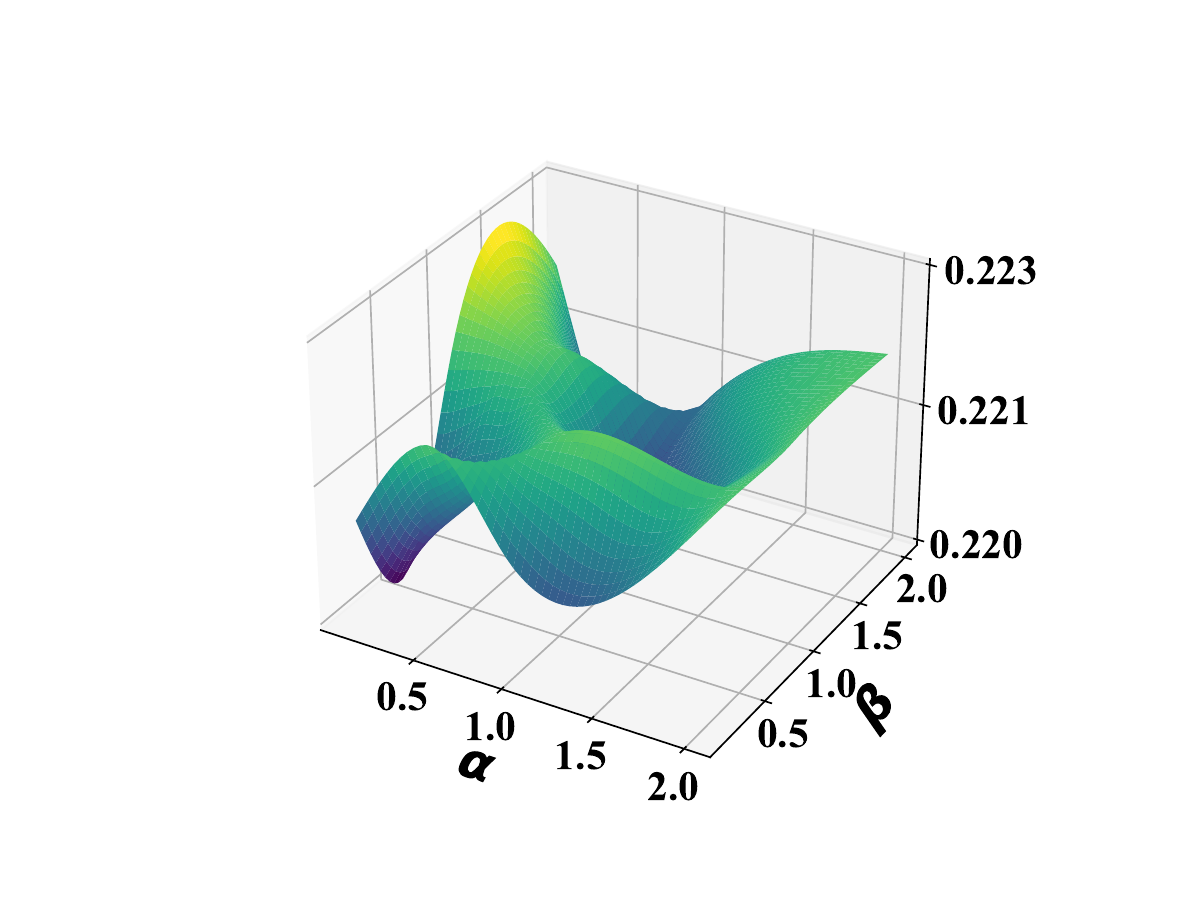} \\
        \textbf{(e) ECL} & \textbf{(f) Solar} & \textbf{(g) Traffic} & \textbf{(h) Weather} \\
    \end{tabular}
    \caption{Sensitivity Analysis Result of Predcition Level $\alpha$ and Feature Level $\beta$ Distillation on Different Datasets.}
    \label{fig:app_sensitivity}
\end{figure*}

\textbf{Sensitivity of Predcition Level $\alpha$ and Feature Level $\beta$ Distillation.}
In this subsection, we analyze the robustness of \method{} by investigating the sensitivity of two key hyperparameters, $\alpha$ and $\beta$, in the final objective function of \method{}. The parameters $\alpha$ and $\beta$ regulate the contributions of the prediction-level and feature-level distillation loss terms, respectively. To assess their effects, we vary both $\alpha$ and $\beta$ over the set $\{0.1, 0.5, 1, 2\}$ while keeping other hyperparameters fixed. We conduct experiments on all 8 datasets. Figure~\ref{fig:app_sensitivity} illustrates that \method{} generally performs better when a smaller $\beta$, such as 0.1 or 0.5, is selected. For $\alpha$, the results show dataset-specific preferences. On the four ETT datasets, a smaller $\alpha$ usually leads to inferior performance, while increasing $\alpha$ improves the results. This is because $\alpha$ controls the contribution of the loss term at the prediction level. As shown in Table~\ref{tab:app_different_teacher}, MLP performs poorly on ETT datasets, leading to a larger gap between MLP and the teacher. Matching predictions directly with a larger $\alpha$ helps reduce this performance gap, resulting in better outcomes on ETT datasets. However, for other datasets such as ECL, Solar, Traffic, and Weather, the performance gap between MLP and the teacher is not as significant, or MLP even outperforms the teacher. In such cases, a large $\alpha$ introduces excessive noise to MLP, negatively impacting the performance of \method{}. Consequently, on these datasets, a smaller $\alpha$ is more beneficial.

\begin{figure*}[htbp]
    \centering
    \begin{tabular}{cccc}
        \includegraphics[width=0.45\textwidth]{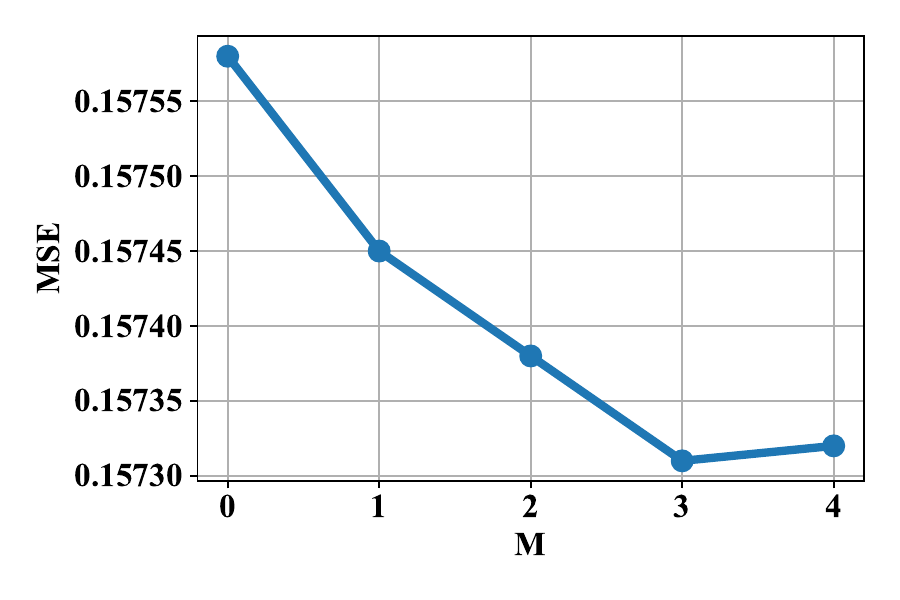} &
        \includegraphics[width=0.45\textwidth]{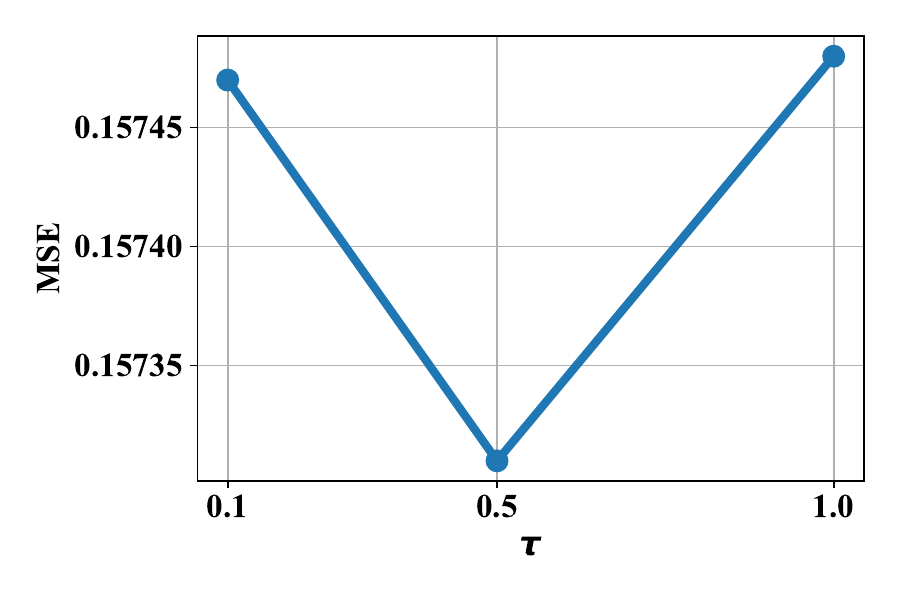}
    \end{tabular}
    \caption{Hyperparameter sensitivity with respect to the number of scales $M$ and temperature $\tau$ on ECL dataset. The results are recorded with the lookback window length T = 720 and averaged across all prediction window lengths $S~\in\{96, 192, 336, 720\}$.}
    \label{tab:app_m_tau}
\end{figure*}

\begin{table*}[htbp]
\centering
\caption{Results of directly match the patterns in ground truth time series on eight datasets. The look-back length is set to be consistent at 720 for fair comparison. The MSE and MAE metrics are averaged from all prediction lengths $S \in \{96, 192, 336, 720\}$.}
\label{tab:app_incremental_experiments}
\small
\resizebox{0.97\textwidth}{!}{
\begin{tabular}{l cc|cc|cc|cc|cc|cc|cc|cc}
\toprule
\multirow{2}{*}{Method} & \multicolumn{2}{c}{ECL} & \multicolumn{2}{c}{ETTh1} & \multicolumn{2}{c}{ETTh2} & \multicolumn{2}{c}{ETTm1} & \multicolumn{2}{c}{ETTm2} & \multicolumn{2}{c}{Solar} & \multicolumn{2}{c}{Traffic} & \multicolumn{2}{c}{Weather} \\
\cmidrule(lr){2-3}\cmidrule(lr){4-5}\cmidrule(lr){6-7}\cmidrule(lr){8-9}\cmidrule(lr){10-11}\cmidrule(lr){12-13}\cmidrule(lr){14-15}\cmidrule(lr){16-17}
 & MSE & MAE & MSE & MAE & MSE & MAE & MSE & MAE & MSE & MAE & MSE & MAE & MSE & MAE & MSE & MAE \\
\midrule
\method{} 
& \textcolor{red}{\textbf{0.157}} & \textcolor{red}{\textbf{0.254}}
& \textcolor{red}{\textbf{0.429}} & \textcolor{red}{\textbf{0.441}}
& \textcolor{red}{\textbf{0.345}} & \textcolor{red}{\textbf{0.395}}
& \textcolor{red}{\textbf{0.348}} & \textcolor{red}{\textbf{0.379}}
& \textcolor{red}{\textbf{0.244}} & \textcolor{red}{\textbf{0.311}}
& \textcolor{red}{\textbf{0.184}} & \textcolor{red}{\textbf{0.241}}
& \textcolor{red}{\textbf{0.391}} & \textcolor{red}{\textbf{0.275}}
& \textcolor{red}{\textbf{0.220}} & \textcolor{red}{\textbf{0.269}} \\
$\mathcal{L}^\mathbf{H} + \mathcal{L}^\mathbf{Y} + \mathcal{L}'_{\text{sup}}$ 
& 0.159 & 0.256 
& 0.436 & 0.449
& 0.348 & 0.397
& 0.349 & 0.380
& 0.247 & 0.314
& 0.185 & 0.243
& 0.393 & 0.277
& 0.222 & 0.270 \\
$\mathcal{L}^\mathbf{H} + \mathcal{L}'_{\text{sup}}$
& 0.159 & 0.257
& 0.478 & 0.479
& 0.383 & 0.427
& 0.374 & 0.404
& 0.271 & 0.332
& 0.185 & 0.243
& 0.394 & 0.278
& 0.223 & 0.275 \\
$\mathcal{L}^\mathbf{Y} + \mathcal{L}'_{\text{sup}}$
& 0.163 & 0.261
& 0.440 & 0.447
& 0.357 & 0.404
& 0.355 & 0.384
& 0.251 & 0.317
& 0.188 & 0.251
& 0.396 & 0.279
& 0.224 & 0.268 \\
$\mathcal{L}'_{\text{sup}}$
& 0.165 & 0.265
& 0.509 & 0.496
& 0.398 & 0.439
& 0.402 & 0.418
& 0.278 & 0.335
& 0.188 & 0.250
& 0.397 & 0.281
& 0.228 & 0.280 \\
\bottomrule
\end{tabular}
}
\end{table*}

\begin{table}[htbp]
\centering
\caption{Static and runtime metrics of \method{}, MLP, and other mainstream models on the ECL dataset. For fair comparison, the look-back length for each model is set to be consistent at 720 and the batch size is set to be 16. The static and runtime metrics are averaged from all prediction lengths $S~\in\{96, 192, 336, 720\}$.}
\label{tab:app_efficiency}
\resizebox{0.49\textwidth}{!}{
\begin{tabular}{c|cccc}
\hline
\multirow{2}{*}{Model} & \multirow{2}{*}{MSE} & \multirow{2}{*}{MAE} & Inference Time & \multirow{2}{*}{Parameters}\\
&&&\textit{(ms/batch)}& \\
\hline
\textbf{\method{}} & \textcolor{red}{\textbf{0.157}} & \textcolor{red}{\textbf{0.254}} & \textcolor{red}{\textbf{1.0956}}  & \textcolor{red}{\textbf{1,084,966}}  \\
\textbf{MLP}         & 0.173 & 0.276 & \textcolor{red}{\textbf{1.0956}}  & \textcolor{red}{\textbf{1,084,966}}  \\
iTransformer~\cite{itransformer} & 0.163 & 0.258 & 3.6377 & 5,276,496  \\
ModernTCN~\cite{moderntcn}   & 0.167 & 0.262 & 6.2233 & 132,075,892  \\
TimeMixer~\cite{timemixer}   & 0.165 & 0.259 & 7.7662 & 5,064,993  \\
PatchTST~\cite{patchtst}    & 0.165 & 0.266 & 2.9802 & 24,949,584 \\
MICN~\cite{micn}        & 0.181 & 0.293 & 2.715 & 60,215,726 \\
Crossformer~\cite{crossformer} & 0.203 & 0.299 & 8.7768 & 2,264,036  \\
Fedformer~\cite{fedformer}   & 0.274 & 0.376 & 32.2642 & 16,827,399 \\
Autoformer~\cite{autoformer}  & 0.238 & 0.347 & 196.5266 & 14,914,398 \\
\hline
\end{tabular}
}
\end{table}

\textbf{Sensitivity Analysis of the Number of Scales $M$ in Multi-Scale Distillation}
$M$ determines the number of scales used for matching in multi-scale distillation. To evaluate the robustness of \method{} with respect to $M$, we vary $M$ from 0 to 5 while keeping other hyperparameters fixed. The left panel of Figure~\ref{tab:app_m_tau} illustrates the forecasting performance (measured by MSE) on the ECL dataset as $M$ increases. When $M = 0$, \method{} does not downsample the time series and directly matches them. Consequently, \method{} relies solely on the noisy guidance signals at the finest scale from the teacher, resulting in suboptimal performance. As $M$ increases, the performance improves, demonstrating that multi-scale signals provide informative hierarchical knowledge to the MLP. To balance performance and computational efficiency, we set $M = 3$.

\textbf{Sensitivity Analysis of the Temperature $\tau$ in Multi-Period Distillation}
The temperature $\tau$ determines the extent to which the frequency distribution is softened. A larger $\tau$ results in a softer distribution. As shown in the right panel of Figure~\ref{tab:app_m_tau}, there is an optimal value of $\tau$ that delivers the best MSE. When $\tau = 1$, the distribution is softened, reducing the significance of high-magnitude frequencies. This process also amplifies the magnitudes of noisy frequencies, which should not be learned by the student MLP, leading to inferior performance. Conversely, when $\tau = 0.1$, only a few high-magnitude frequencies are retained, resulting in a frequency distribution that is less informative. To balance informativeness and noise, we set $\tau = 0.5$.

\section{Directly Match the Patterns in Ground Truth Time Series} 
\label{app:direct_match}
The ground truth usually contains rich information for model learning. Based on our preliminary analysis, the multi-scale and multi-period patterns should also exist in ground truth time series that can benefit MLP. Thus, in this section, we try to directly distill the multi-scale and multi-period patterns in supervised ground truth signals using the following loss: 
\begin{equation} 
    \mathcal{L}'_{sup} = \mathcal{L}_{scale}^{sup} + \mathcal{L}_{period}^{sup}. \label{eq:loss_sup} 
\end{equation}
We perform several incremental experiments: (1) $\mathcal{L}^\mathbf{H} + \mathcal{L}^\mathbf{Y} + \mathcal{L}'_{\text{sup}}$ means that we replace $\mathcal{L}_{sup}$ in Equation~\ref{eq:overall_optimization} with $\mathcal{L}'_{\text{sup}}$ defined in Equation~\ref{eq:loss_sup}. (2) $\mathcal{L}^\mathbf{H} + \mathcal{L}'_{\text{sup}}$ and $\mathcal{L}^\mathbf{Y} + \mathcal{L}'_{\text{sup}}$ denote we further remove multi-scale and multi-period distillation at logit level or at feature level, respectively.
(3) $\mathcal{L}'_{\text{sup}}$ means we only use $\mathcal{L}'_{\text{sup}}$ as the overall training loss $\mathcal{L}$ in Equation~\ref{eq:overall_optimization}, which only uses the ground truth for model learning and does not use the teacher.
From Table~\ref{tab:app_incremental_experiments}, we observe that adding $\mathcal{L}'_{\text{sup}}$ to \method{} results in an average MSE reduction of \textbf{0.8\%}, which slightly degrades performance rather than improving it. Notably, when \method{} is removed and only $\mathcal{L}'_{\text{sup}}$ is used to train the MLP, the performance deteriorates by approximately \textbf{9.5\%} on average. This can be attributed to the noise present in the ground truth, making it harder to fit, whereas the teacher model provides simpler and more learnable knowledge. Furthermore, when multi-scale and multi-period distillation at the feature level (i.e., $\mathcal{L}^\mathbf{Y} + \mathcal{L}'_{\text{sup}}$) is removed, performance declines on most datasets. This indicates that higher-dimensional features encapsulate more valuable knowledge compared to lower-dimensional logits or ground truth, highlighting the importance of feature-level distillation.

\section{Efficiency Analysis}
\label{app:efficiency}
In the main text, we have presented the efficiency analysis in Figure~\ref{fig:result_efficiency}. Here, we provide the quantitative results in Table~\ref{tab:app_efficiency}. Latency-sensitive applications require fast inference, making it crucial for models to minimize inference time. However, most works in the time series domain primarily focus on \textit{less time-sensitive} training time comparisons~\cite{itransformer, sparsetsf}. Additionally, edge-device applications demand models with fewer parameters for deployment. Therefore, in our experiments, we emphasize comparing both inference time and model parameters to evaluate practical efficiency. For a fair comparison, we fix the batch size to 16 during inference and record the average inference time per batch. Notably, \method{} (MLP) demonstrates a significant efficiency advantage over Transformer-based models, such as iTransformer, PatchTST, Crossformer, FEDformer, and Autoformer, as well as CNN-based models like ModernTCN and MICN, and the complex MLP-based model TimeMixer, which employs multiple MLPs. In terms of inference time, \method{} achieves up to \textbf{196$\times$} faster inference than other methods. Additionally, it requires up to \textbf{60$\times$} fewer parameters compared to other methods. Beyond the ECL dataset, we also visualize efficiency analyses for other datasets in Figure~\ref{fig:app_efficiency}. Across eight datasets, \method{} consistently resides in the lower-left corner, achieving the best trade-off between efficiency (inference time) and performance (MSE).

\section{Additional Cases}
\label{app:cases}

In this section, we present additional cases to demonstrate how \method{} effectively bridges the gap between the teacher and the student models. Figure~\ref{fig:vis_multiscale_multiperiod_ETTm1} and Figure~\ref{fig:vis_multiscale_multiperiod_ETTm2} showcase examples from the ETTm1 and ETTm2 datasets, respectively. From these figures, we make the following observations: \textbf{First}, in the temporal domain, the MLP struggles to capture the overall trend of the time series at coarser scales, resulting in a significant performance gap compared to the teacher at the finest scale. By effectively capturing the overall trend at coarser scales, \method{} narrows this gap, enabling the MLP to approximate the teacher's performance more closely. \textbf{Second}, in the frequency domain, the MLP fails to effectively capture the multi-periodic patterns of the time series, leading to inaccuracies in the frequency distribution. In contrast, \method{} accurately learns these multi-periodic patterns from the teacher, which helps the MLP improve its overall MSE performance.

\textbf{These observations highlight the importance of multi-scale and multi-period patterns distillation.} In the temporal domain, learning from coarser scales helps the student MLP capture the overall trend in the teacher's predictions without overfitting to noise present at the finest scale. Meanwhile, learning from the finest scale enables the MLP to refine details, resulting in more precise predictions. Consequently, jointly learning across multi-scale achieves a balance between the low accuracy of coarser scales and the high noise sensitivity of finer scales. In the frequency domain, learning from the frequency distribution enables the MLP to effectively capture overlapping periodicities, further enhancing prediction accuracy.

\begin{table}[t!]
\centering
\caption{Comparison of SpectralEntropy and Variance Ratios for different datasets}
\resizebox{0.49\textwidth}{!}{%
\begin{tabular}{lcccc}
\hline
Dataset & SpectralEntropy & VarRatio1H & VarRatio6H & VarRatio1D \\
\hline
ECL     & 7.77 & 1 & 0.84 & 0.76 \\
ETTh1   & 6.38 & 1 & 0.71 & 0.23 \\
ETTm1   & 6.46 & 1 & 0.70 & 0.23 \\
Traffic & 7.45 & 1 & 0.55 & 0.07 \\
Linear  & 2.36 & 1 & 1    & 1 \\
\hline
\end{tabular}
}
\label{tab:spectral_entropy_varratios}
\end{table}

\section{What if there is no Periodic or Multi-Scale Structure in the Dataset?} 
\label{app:limitation}
TimeDistill may perform suboptimally on time series without strong periodicity or multi-scale patterns. However, such cases are rare, as most real-world time series exhibit these properties~\citeyear{timesnet,autoformer}. To verify this, we analyze four datasets using spectral entropy and variance ratio (1H/6H/1D denote different scales). Spectral entropy measures the diversity of periodic components, while variance ratio captures changes across scales. High entropy and decreasing variance ratios indicate strong periodic and multi-scale structure. As shown in the Table~\ref{tab:spectral_entropy_varratios}, real-world datasets show both, unlike synthetic linear series, supporting TimeDistill's broad applicability.

Moreover, TimeDistill does not depend solely on multi-period distillation. As shown in Table~\ref{tab:ablation} in the paper, even without it, the framework still improves student performance. Additionally, the loss weights are adjustable, allowing adaptation to datasets with weak periodicity.

Furthermore, Theorem~\ref{thm:multiscale} and Theorem~\ref{thm:multiperiod} show that the multi-scale and multi-period loss act as mixup data augmentation between the teacher's and ground truth period distributions. This augmentation remains useful even with weak multi-scale or periodic structure, as long as a distributional gap exists.

\begin{table}[t!]
\centering
\caption{Comparison with other KD methods tailored for time series forecasting. Avg.Imp. denotes the average relative improvement over MLP.}
\resizebox{0.49\textwidth}{!}{
\begin{tabular}{lcccc}
\toprule
\textbf{Dataset} & \textbf{MLP} & \textbf{\method{}} & DE-TSMCL & LightCTS* \\
\midrule
ETTh1   & 0.502 & 0.429 & 0.479 & 0.448 \\
ETTh2   & 0.393 & 0.345 & 0.376 & 0.389 \\
ETTm1   & 0.391 & 0.348 & 0.386 & 0.373 \\
ETTm2   & 0.300 & 0.244 & 0.268 & 0.271 \\
Weather & 0.234 & 0.220 & 0.223 & 0.224 \\
\midrule
\textbf{Avg.Imp.} & - & \textbf{12.41\%} & 5.17\% & 5.99\% \\
\bottomrule
\end{tabular}
}
\label{tab:comparison_with_other_KD}
\end{table}

\begin{table*}[ht!]
\centering
\caption{Performance comparison on PEMS datasets (PEMS03, PEMS04, PEMS07, PEMS08, and average).}
\resizebox{0.5\textwidth}{!}{
\begin{tabular}{llccc}
\toprule
\textbf{Dataset} & \textbf{Metric} & \textbf{MLP} & \textbf{TimeMixer} & \textbf{TimeDistill-TimeMixer} \\
\midrule
\multirow{3}{*}{PEMS03}
& MAE  & 16.33 & 15.73 & \textbf{14.86} \\
& MAPE & 16.78 & 15.89 & \textbf{14.14} \\
& RMSE & 26.19 & 25.85 & \textbf{24.03} \\
\midrule
\multirow{3}{*}{PEMS04}
& MAE  & 22.47 & 20.00 & \textbf{19.96} \\
& MAPE & 15.34 & 13.94 & \textbf{12.26} \\
& RMSE & 35.39 & 32.80 & \textbf{32.65} \\
\midrule
\multirow{3}{*}{PEMS07}
& MAE  & 22.64 & 20.88 & \textbf{20.30} \\
& MAPE & 10.03 & 9.88  & \textbf{8.40} \\
& RMSE & 36.16 & 33.57 & \textbf{33.48} \\
\midrule
\multirow{3}{*}{PEMS08}
& MAE  & 17.22 & \textbf{14.89} & 15.02 \\
& MAPE & 11.07 & \textbf{9.39}  & 9.29 \\
& RMSE & 27.83 & \textbf{24.00} & 24.92 \\
\midrule
\multirow{3}{*}{\textbf{Average}}
& MAE  & 19.67 & 17.88 & \textbf{17.53} \\
& MAPE & 13.31 & 12.28 & \textbf{11.02} \\
& RMSE & 31.39 & 29.06 & \textbf{28.77} \\
\bottomrule
\end{tabular}
}
\label{tab:short_term_time_series}
\end{table*}

\section{Comparison with KD Tailored for Time Series Forecasting}
\label{app:Comparison with KD}

To our knowledge, \method{} is the \textbf{first} KD framework for time series forecasting with a time series-aware design (e.g., multi-scale, multi-period) and theoretical support from a data augmentation view. In contrast, LightCTS*~\cite{lai2024lightcts} and DE-TSMCL~\cite{gao2024distillation} apply general KD without time series-specific components. As shown in Table~\ref{tab:comparison_with_other_KD}, \method{} outperforms both, achieving nearly 2$\times$ the MLP improvement.

\section{Short-term Time Series Forecasting Performance}
\label{app:short_term}
\method{} can be directly adapted to the short-term time series forecasting task. Following TimeMixer~\cite{timemixer}, we provide some preliminary results in Table~\ref{tab:short_term_time_series}. We report the results over four PEMS datasets. \method{} significantly improves MLP’s performance and surpasses its teacher on the short-term forecasting task.

\section{Future Directions}
\label{app:future}
\paragraph{Distilling from advanced time--series models.}
Section~\ref{sec:versatility} shows that \method{} can already learn from a wide spectrum of teachers—ModernTCN, iTransformer, PatchTST, and TimeMixer—yielding MSE reductions of up to \(13.9\%\) for the student and up to \(10.4\%\) over the teachers themselves (Table~\ref{app:full_different_teacher_results}).  
The conclusion, therefore, proposes “\emph{distilling from advanced time-series (foundation) models}” as the next milestone.  
Concrete directions include: (i) treating recently released foundation checkpoints (e.g., TimesFM) as teachers and analysing how their broad temporal representations propagate through the multi-scale/multi-period objectives in Figure~\ref{fig:method}; and (ii) leveraging heterogeneous teacher ensembles (Appendix~\ref{app:full_different_teacher_results}, Table~\ref{tab:app_different_teacher}) to let a single student interpolate knowledge from climate, finance, and health domains.

\paragraph{Incorporating multivariate patterns explicitly.}
Although the current student is channel–independent, Section~\ref{sec:deeper_analysis} reveals that distillation already aligns its inter-variable correlation matrix with that of a channel-dependent teacher (Figure~\ref{fig:correlation_heatmaps}).  
Future work can (a) add a covariance-matching loss to reinforce this implicit transfer, and (b) draw on teachers specialised in cross-channel modelling, such as iTransformer.  
A realistic test-bed is the \textsc{Traffic} dataset, where \method{} still lowers MSE by \(10.3\%\) despite 861 variables (Table~\ref{app:full_different_teacher_results}).

\paragraph{Extending the framework to imputation and classification.}
Knowledge distillation is task-agnostic: Appendix~\ref{app:short_term} confirms that the same student–teacher pair handles short-term forecasting.  
Because the multi-scale and multi-period losses operate on reconstructed values and spectra, replacing the forecasting objective with a masked reconstruction loss would adapt \method{} to missing-value \emph{imputation}.  
Moreover, the mixup-based view in Theorem~\ref{thm:multiscale} naturally generalises the framework to classification by softening class targets.  
Validating these ideas on UCR/UEA (classification) and M4C (imputation) benchmarks—while re-using the qualitative analyses of Figure~\ref{fig:vis_multiscale_multiperiod_ETTh1} to track temporal fidelity—is an immediate next step.

\begin{table*}[ht!]
\centering
\caption{Comparison of methods on different datasets under three settings: Original, Non-stationary, and REVIN.}
\label{tab:normalization}
\resizebox{\textwidth}{!}{
\begin{tabular}{l|cccccccc|cccccccc|cccccccc}
\toprule
& \multicolumn{8}{c|}{None} & \multicolumn{8}{c|}{Non-stationary} & \multicolumn{8}{c}{REVIN} \\
\cmidrule(lr){2-9} \cmidrule(lr){10-17} \cmidrule(lr){18-25}
\multirow{2}{*}{Dataset} 
& \multicolumn{2}{c}{TimeDistill} & \multicolumn{2}{c}{iTransformer} & \multicolumn{2}{c}{ModernTCN} & \multicolumn{2}{c|}{TimeMixer}
& \multicolumn{2}{c}{TimeDistill} & \multicolumn{2}{c}{iTransformer} & \multicolumn{2}{c}{ModernTCN} & \multicolumn{2}{c|}{TimeMixer}
& \multicolumn{2}{c}{TimeDistill} & \multicolumn{2}{c}{iTransformer} & \multicolumn{2}{c}{ModernTCN} & \multicolumn{2}{c}{TimeMixer} \\
& MSE & MAE & MSE & MAE & MSE & MAE & MSE & MAE
& MSE & MAE & MSE & MAE & MSE & MAE & MSE & MAE
& MSE & MAE & MSE & MAE & MSE & MAE & MSE & MAE \\
\midrule
ECL   & 0.160 & 0.259 & 0.166 & 0.267 & 0.166 & 0.268 & 0.174 & 0.270
      & 0.157 & 0.254 & 0.163 & 0.259 & 0.166 & 0.268 & 0.162 & 0.257
      & 0.161 & 0.255 & 0.160 & 0.257 & 0.159 & 0.256 & 0.165 & 0.259 \\
ETTh1 & 0.443 & 0.450 & 0.514 & 0.521 & 0.508 & 0.507 & 0.460 & 0.472
      & 0.429 & 0.441 & 0.468 & 0.476 & 0.508 & 0.507 & 0.440 & 0.451
      & 0.433 & 0.448 & 0.468 & 0.477 & 0.443 & 0.447 & 0.459 & 0.465 \\
ETTh2 & 0.452 & 0.460 & 1.209 & 0.826 & 0.783 & 0.606 & 0.862 & 0.645
      & 0.345 & 0.395 & 0.398 & 0.426 & 0.783 & 0.606 & 0.382 & 0.412
      & 0.366 & 0.407 & 0.406 & 0.431 & 0.347 & 0.396 & 0.422 & 0.444 \\
ETTm1 & 0.362 & 0.392 & 0.426 & 0.453 & 0.535 & 0.508 & 0.374 & 0.397
      & 0.348 & 0.379 & 0.372 & 0.402 & 0.535 & 0.508 & 0.392 & 0.409
      & 0.354 & 0.391 & 0.374 & 0.403 & 0.365 & 0.388 & 0.367 & 0.388 \\
ETTm2 & 0.258 & 0.332 & 0.371 & 0.410 & 1.641 & 0.953 & 0.322 & 0.370
      & 0.244 & 0.311 & 0.276 & 0.337 & 1.641 & 0.953 & 0.281 & 0.332
      & 0.270 & 0.324 & 0.275 & 0.336 & 0.267 & 0.330 & 0.279 & 0.339 \\
Solar & 0.184 & 0.242 & 0.207 & 0.264 & 0.192 & 0.246 & 0.238 & 0.288
      & 0.184 & 0.241 & 0.214 & 0.270 & 0.192 & 0.246 & 0.238 & 0.288
      & 0.193 & 0.248 & 0.203 & 0.273 & 0.191 & 0.259 & 0.238 & 0.288 \\
Traffic & 0.395 & 0.277 & 0.661 & 0.407 & 0.414 & 0.289 & 0.426 & 0.282
      & 0.404 & 0.278 & 0.379 & 0.271 & 0.414 & 0.289 & 0.414 & 0.289
      & 0.387 & 0.271 & 0.385 & 0.274 & 0.413 & 0.284 & 0.391 & 0.275 \\
Weather & 0.236 & 0.299 & 0.301 & 0.345 & 0.244 & 0.310 & 0.235 & 0.287
        & 0.220 & 0.269 & 0.259 & 0.290 & 0.244 & 0.310 & 0.232 & 0.273
        & 0.226 & 0.270 & 0.260 & 0.291 & 0.229 & 0.269 & 0.230 & 0.271 \\
\midrule
Average & 0.311 & 0.339 & 0.482 & 0.437 & 0.560 & 0.461 & 0.386 & 0.376
        & 0.292 & 0.321 & 0.316 & 0.341 & 0.560 & 0.461 & 0.318 & 0.339
        & 0.299 & 0.327 & 0.316 & 0.343 & 0.302 & 0.329 & 0.319 & 0.341 \\
\bottomrule
\end{tabular}
}
\end{table*}

\section{Effect of Normalization on Performance Across Methods}
\label{app:effect_normalization}
Our experiment setup follows standard practice~\cite{itransformer,non-stationary,sparsetsf} to ensure fairness: each model uses its best-performing normalization based on validation. To reduce confounding factors, we also evaluate all models under the same normalization. As shown in Tables~\ref{tab:normalization} (Non-stationary and REVIN), \method{} consistently outperforms prior methods, confirming that its gains come from the distillation method rather than preprocessing differences.

We also study the impact of normalization by removing it from both \method{} and baselines in Table~\ref{tab:normalization} (None). Results show that \method{} consistently outperforms all baselines. This confirms that its gains are not due to preprocessing, but to the method itself, improving interpretability.

\section{Showcases}
For clear comparison, we present test set showcases in Appendix~\ref{app:show_cases}, where \method{} shows better performance.

\clearpage
\begin{figure*}[h!]
    \centering
    \begin{minipage}{0.45\textwidth}
        \centering
        \includegraphics[width=\textwidth]{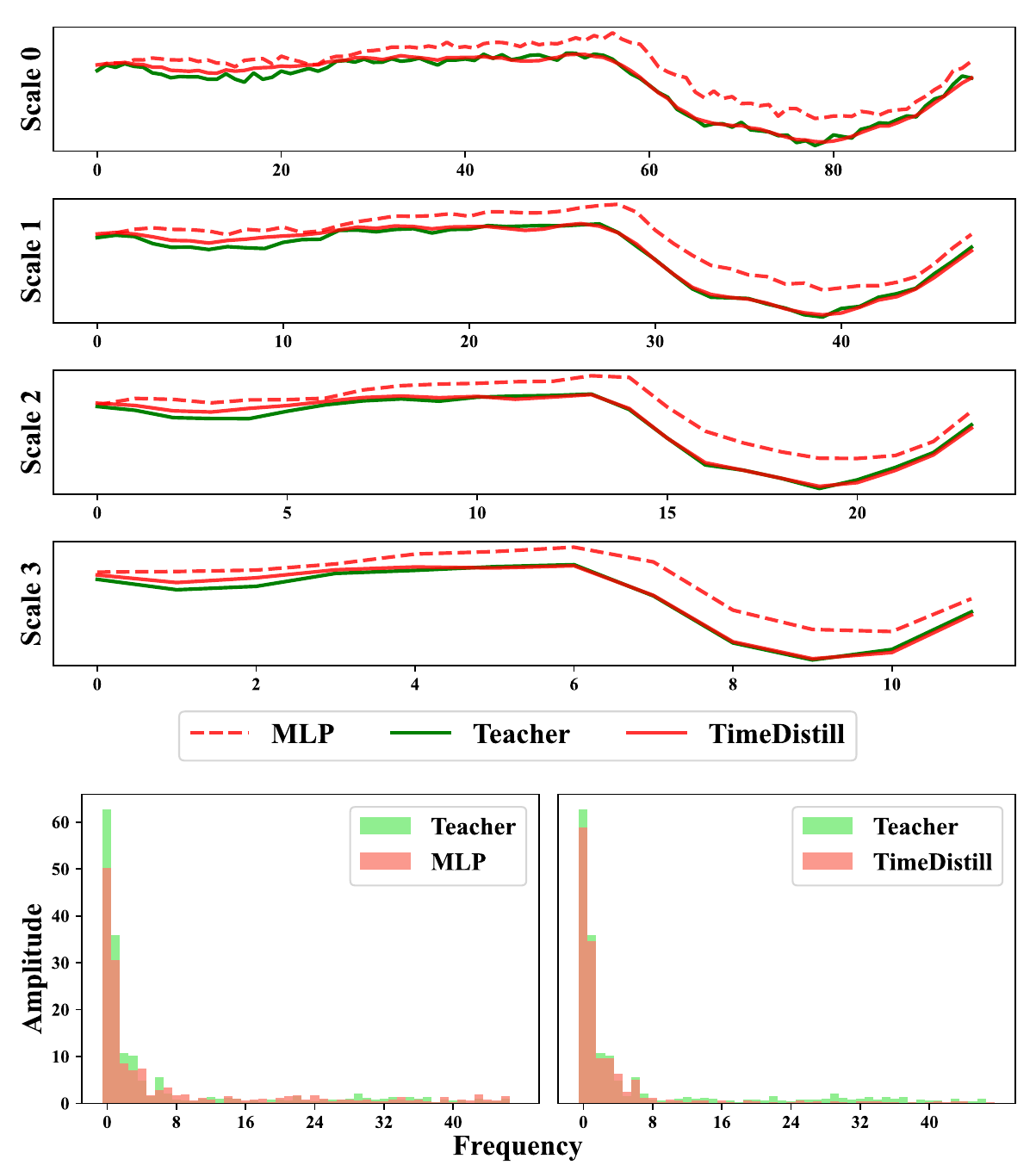}
        \caption{{Comparison of model predictions across temporal scales and corresponding spectrograms before and after distillation on the ETTm1 dataset.} MSE values for MLP, Teacher (ModernTCN~\cite{moderntcn}), and \method{} are 2.16, 0.73, and 0.74.}
        \label{fig:vis_multiscale_multiperiod_ETTm1}
    \end{minipage}
    \hspace{0.05\textwidth}
    \begin{minipage}{0.45\textwidth}
        \centering
        \includegraphics[width=\textwidth]{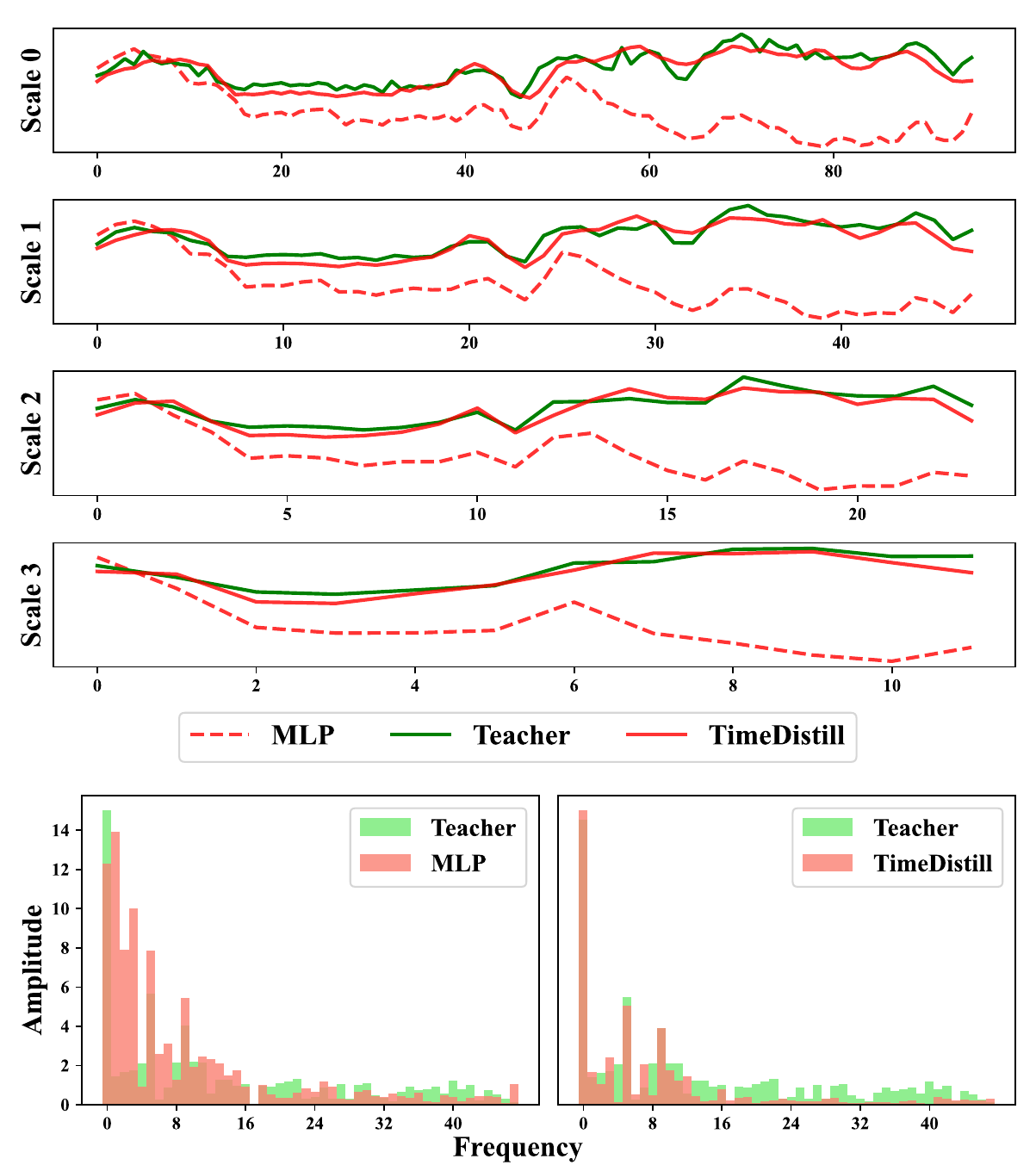}
        \caption{{Comparison of model predictions across temporal scales and corresponding spectrograms before and after distillation on the ETTm2 dataset.} MSE values for MLP, Teacher (ModernTCN~\cite{moderntcn}), and \method{} are 3.35, 1.19, and 1.28.}
        \label{fig:vis_multiscale_multiperiod_ETTm2}
    \end{minipage}
\end{figure*}

\begin{table*}[ht]
\centering
\caption{Performance promotion obtained by our \textbf{\method{}} framework with different student models. 
The notation nL-d represents the structure of an MLP model, where nL specifies the number of layers and d denotes the dimension of the hidden units in each layer. 
For instance, 2L-512 refers to a model with 2 layers and a hidden dimension of 512. 
We report the average performance from all prediction lengths. 
$\Delta_{Student}$ represents the performance promotion between \textbf{Student+\method{}} and original trained student.}
\label{tab:app_different_student}
\resizebox{0.99\textwidth}{!}{%
\begin{tabular}{cc cc|cc|cc|cc|cc|cc|cc}
\toprule
\multicolumn{2}{c}{\multirow{3}{*}{Student Models}} 
& \multicolumn{2}{c}{MLP} 
& \multicolumn{2}{c}{MLP} 
& \multicolumn{2}{c}{MLP} 
& \multicolumn{2}{c}{MLP} 
& \multicolumn{2}{c}{LightTS} 
& \multicolumn{2}{c}{TSMixer}
& \multicolumn{2}{c}{FITS}\\
& & \multicolumn{2}{c}{(2L-512)} & \multicolumn{2}{c}{(3L-512)} & \multicolumn{2}{c}{(4L-512)} & \multicolumn{2}{c}{(2L-1024)} &  & & & \\
& & MSE & MAE & MSE & MAE & MSE & MAE & MSE & MAE & MSE & MAE & MSE & MAE & MSE & MAE\\
\midrule
\multirow{3}{*}{ETTh1} 
& Student
& 0.502 & 0.489
& 0.487 & 0.497
& 0.468 & 0.481
& 0.495 & 0.500
& 0.465 & 0.471
& 0.471 & 0.474 
&0.428	&0.443\\
& \textbf{+\method{}} 
& \textbf{0.428} & \textbf{0.445}
& \textbf{0.442} & \textbf{0.448}
& \textbf{0.443} & \textbf{0.453}
& \textbf{0.442} & \textbf{0.446}
& \textbf{0.436} & \textbf{ 0.445}
& \textbf{0.433} & \textbf{0.446} 
&\textbf{0.411}	&\textbf{0.430}\\
\cmidrule{2-16}
& $\Delta_{Student}$
& 14.74\% & 9.00\%
& 9.11\%  & 9.86\%
& 5.29\%  & 5.70\%
& 10.79\% & 10.83\%
& 6.26\%  & 5.57\%
& 8.02\%  & 5.92\% 
&3.96\%	&2.75\%\\
\midrule
\multirow{3}{*}{ETTh2} 
& Student
& 0.393 & 0.438
& 0.742 & 0.601
& 0.908 & 0.668
& 0.639 & 0.563
& 0.675 & 0.582
& 0.376 & 0.420 
&0.338	&0.388\\
& \textbf{+\method{}} 
& \textbf{0.345} & \textbf{0.397}
& \textbf{0.341} & \textbf{0.390}
& \textbf{0.362} & \textbf{0.406}
& \textbf{0.344} & \textbf{0.390}
& \textbf{0.382} & \textbf{0.423}
& \textbf{0.351} & \textbf{0.399} 
&\textbf{0.333}	&\textbf{0.382}\\
\cmidrule{2-16}
& $\Delta_{Student}$
& 12.16\% & 9.46\%
& 53.97\% & 35.05\%
& 60.12\% & 39.19\%
& 46.20\% & 30.72\%
& 43.42\% & 27.26\%
& 6.83\%  & 5.01\% 
&1.52\%	&1.70\%\\
\midrule
\multirow{3}{*}{ETTm1}
& Student
& 0.391 & 0.413
& 0.378 & 0.406
& 0.392 & 0.414
& 0.377 & 0.402
& 0.376 & 0.399
& 0.370 & 0.395 
& 0.357	& 0.379\\
& \textbf{+\method{}} 
& \textbf{0.354} & \textbf{0.390}
& \textbf{0.346} & \textbf{0.380}
& \textbf{0.353} & \textbf{0.389}
& \textbf{0.346} & \textbf{0.380}
& \textbf{0.358} & \textbf{ 0.382}
& \textbf{0.354} & \textbf{0.383} 
&\textbf{0.358}	&\textbf{0.380}\\
\cmidrule{2-16}
& $\Delta_{Student}$
& 9.42\% & 5.65\%
& 8.54\% & 6.34\%
& 9.90\% & 6.12\%
& 8.40\% & 5.56\%
& 4.83\% & 4.29\%
& 4.13\% & 2.92\% 
& -0.45\% &-0.27\%\\
\midrule
\multirow{3}{*}{ETTm2}
& Student
& 0.300 & 0.373
& 0.359 & 0.408
& 0.346 & 0.393
& 0.307 & 0.374
& 0.283 & 0.346
& 0.295 & 0.347 
&0.253	&0.316\\
& \textbf{+\method{}} 
& \textbf{0.252} & \textbf{0.316}
& \textbf{0.259} & \textbf{0.321}
& \textbf{0.254} & \textbf{0.317}
& \textbf{0.245} & \textbf{0.312}
& \textbf{0.258} & \textbf{0.324}
& \textbf{0.253} & \textbf{0.319} 
&\textbf{0.248}	&\textbf{0.311}\\
\cmidrule{2-16}
& $\Delta_{Student}$
& 16.07\% & 15.28\%
& 27.77\% & 21.31\%
& 26.71\% & 19.41\%
& 20.03\% & 16.55\%
& 8.92\%  & 6.38\%
& 14.19\% & 8.23\% 
&2.20\%	&1.40\%\\
\midrule
\multirow{3}{*}{Weather}
& Student
& 0.234 & 0.294
& 0.232 & 0.289
& 0.237 & 0.291
& 0.221 & 0.278
& 0.235 & 0.293
& 0.239 & 0.278 
&0.241	&0.281\\
& \textbf{+\method{}} 
& \textbf{0.220} & \textbf{0.270}
& \textbf{0.222} & \textbf{0.267}
& \textbf{0.225} & \textbf{0.271}
& \textbf{0.220} & \textbf{0.267}
& \textbf{0.221} & \textbf{0.272}
& \textbf{0.222} & \textbf{0.266} 
&\textbf{0.239}	&\textbf{0.279}\\
\cmidrule{2-16}
& $\Delta_{Student}$
& 5.83\% & 8.02\%
& 4.53\% & 7.53\%
& 5.22\% & 7.04\%
& 0.53\% & 4.10\%
& 5.95\% & 7.17\%
& 7.19\% & 4.09\% 
& 0.90\% &0.74\%\\
\bottomrule
\end{tabular}%
}
\end{table*}

\onecolumn
\clearpage
\begin{figure}[htbp]
    \centering
    \begin{tabular}{cccc}
        \includegraphics[width=0.4\textwidth]{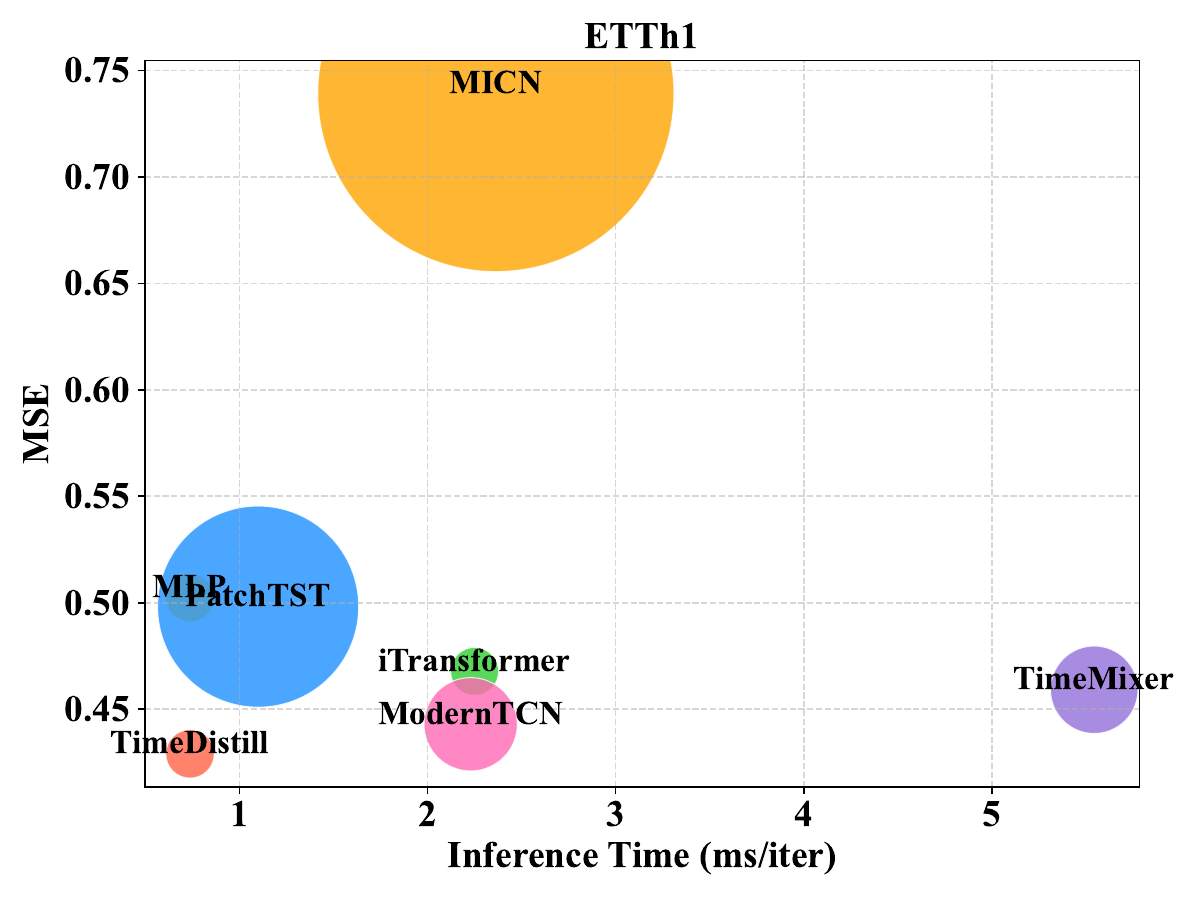} &
        \includegraphics[width=0.4\textwidth]{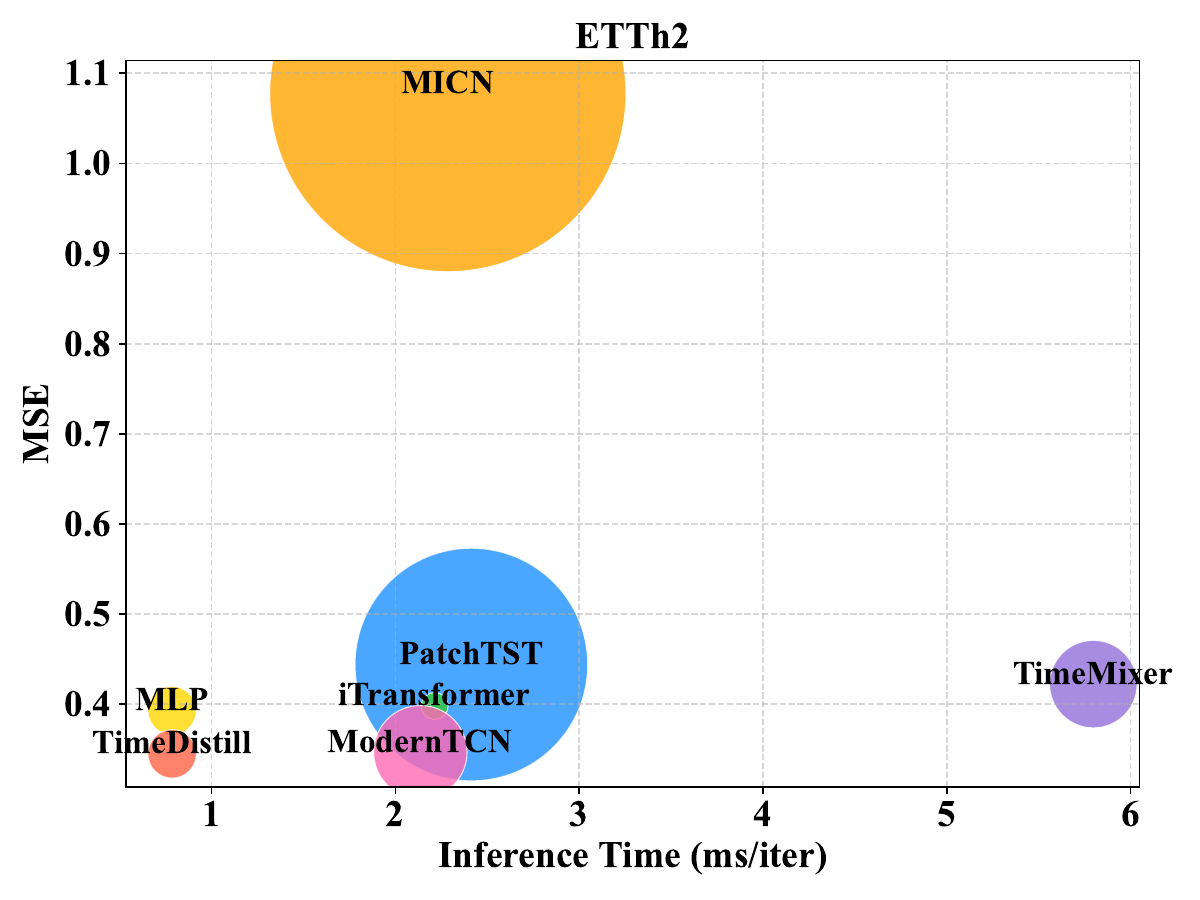}\\
        \includegraphics[width=0.4\textwidth]{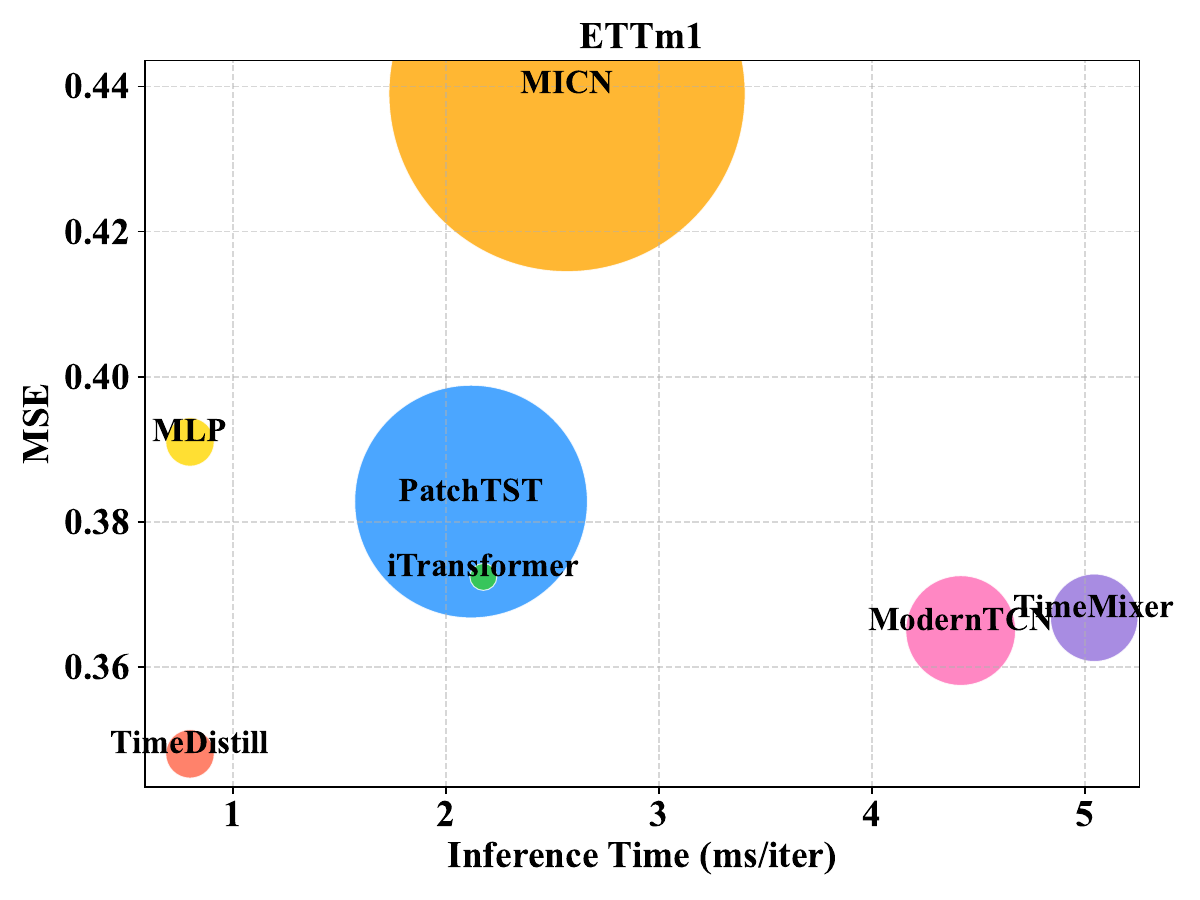}&
        \includegraphics[width=0.4\textwidth]{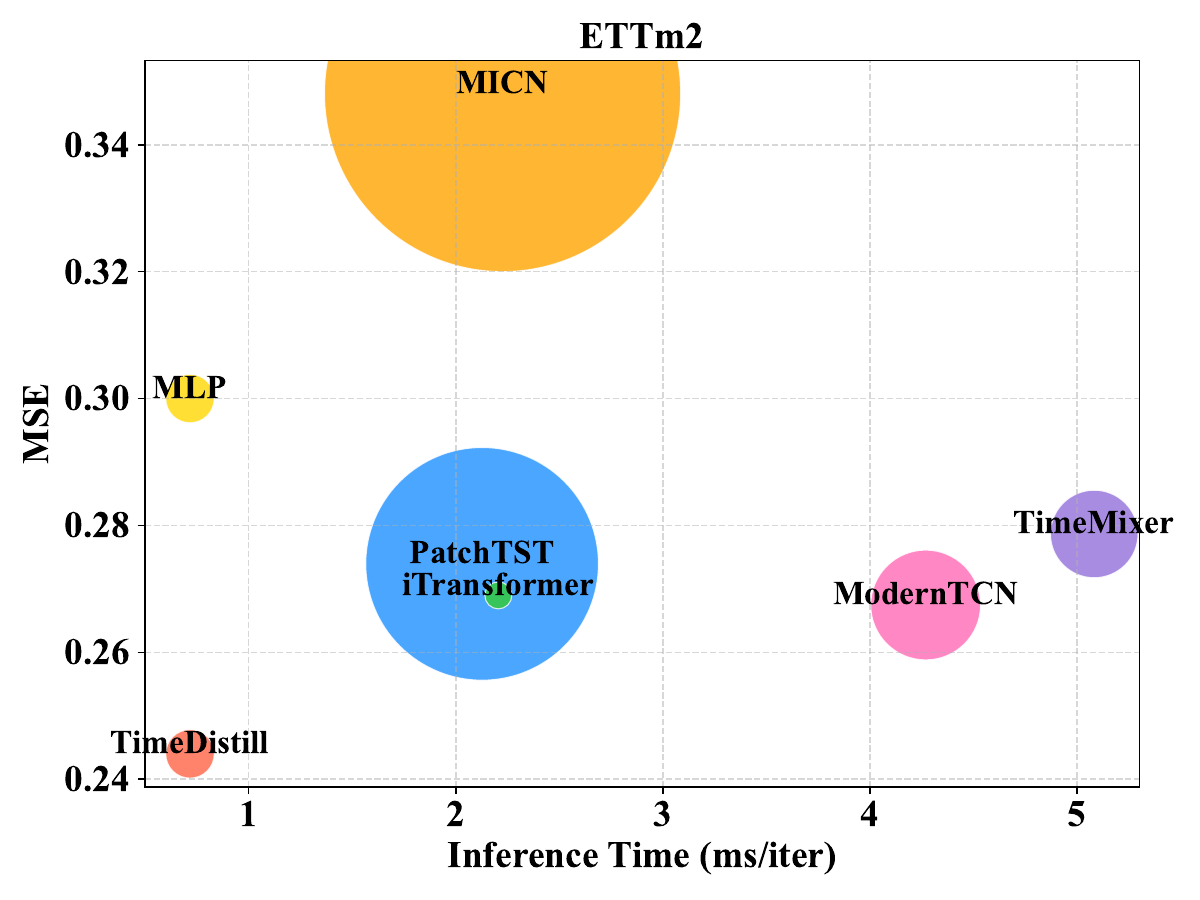} \\
        \includegraphics[width=0.4\textwidth]{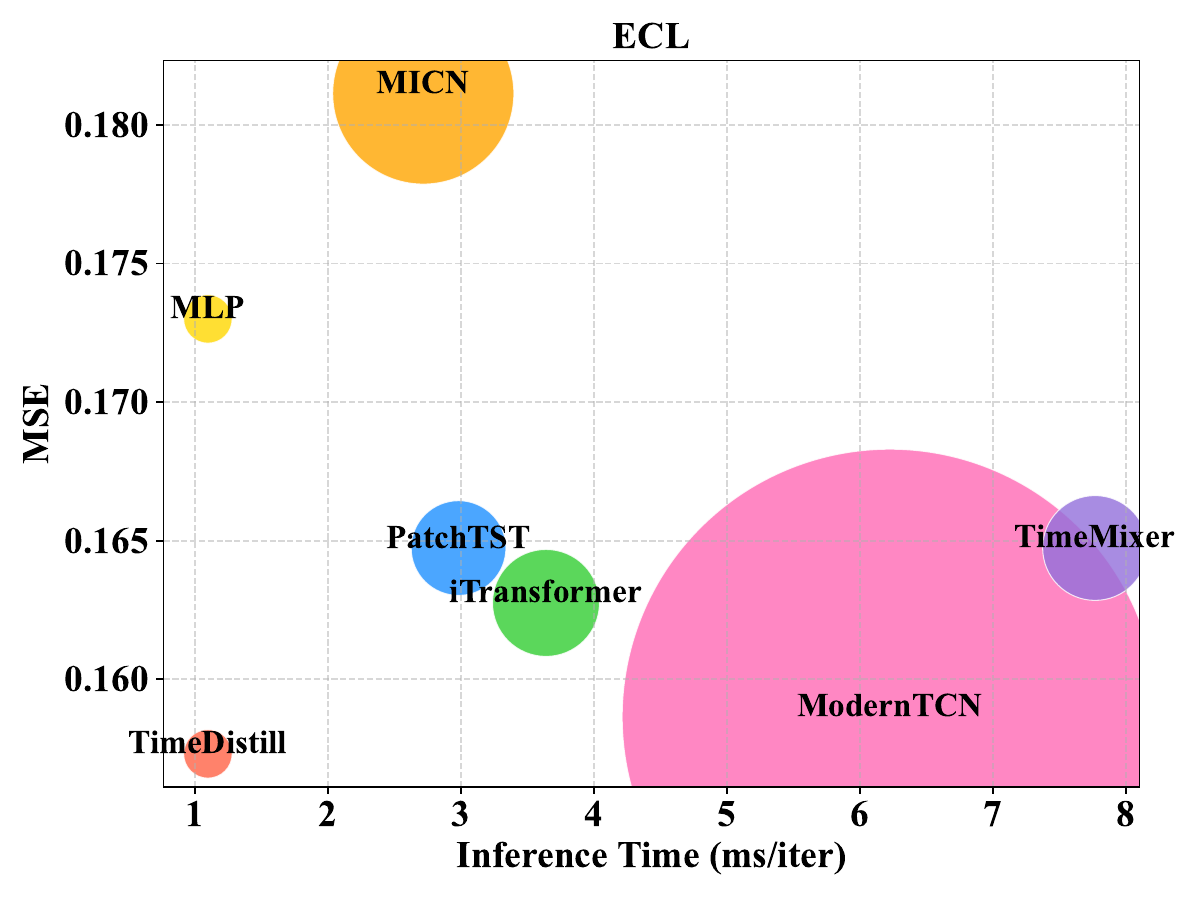}&
        \includegraphics[width=0.4\textwidth]{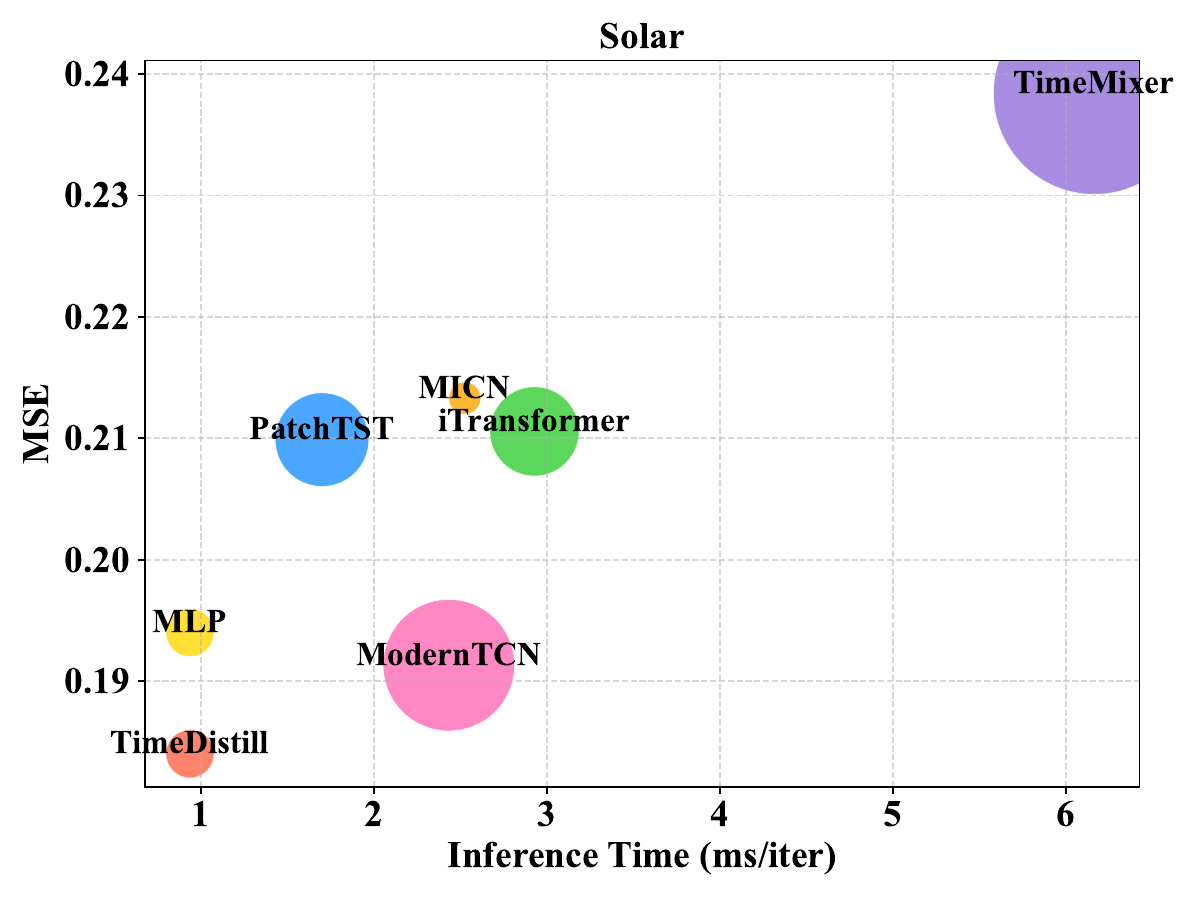} \\
        \includegraphics[width=0.4\textwidth]{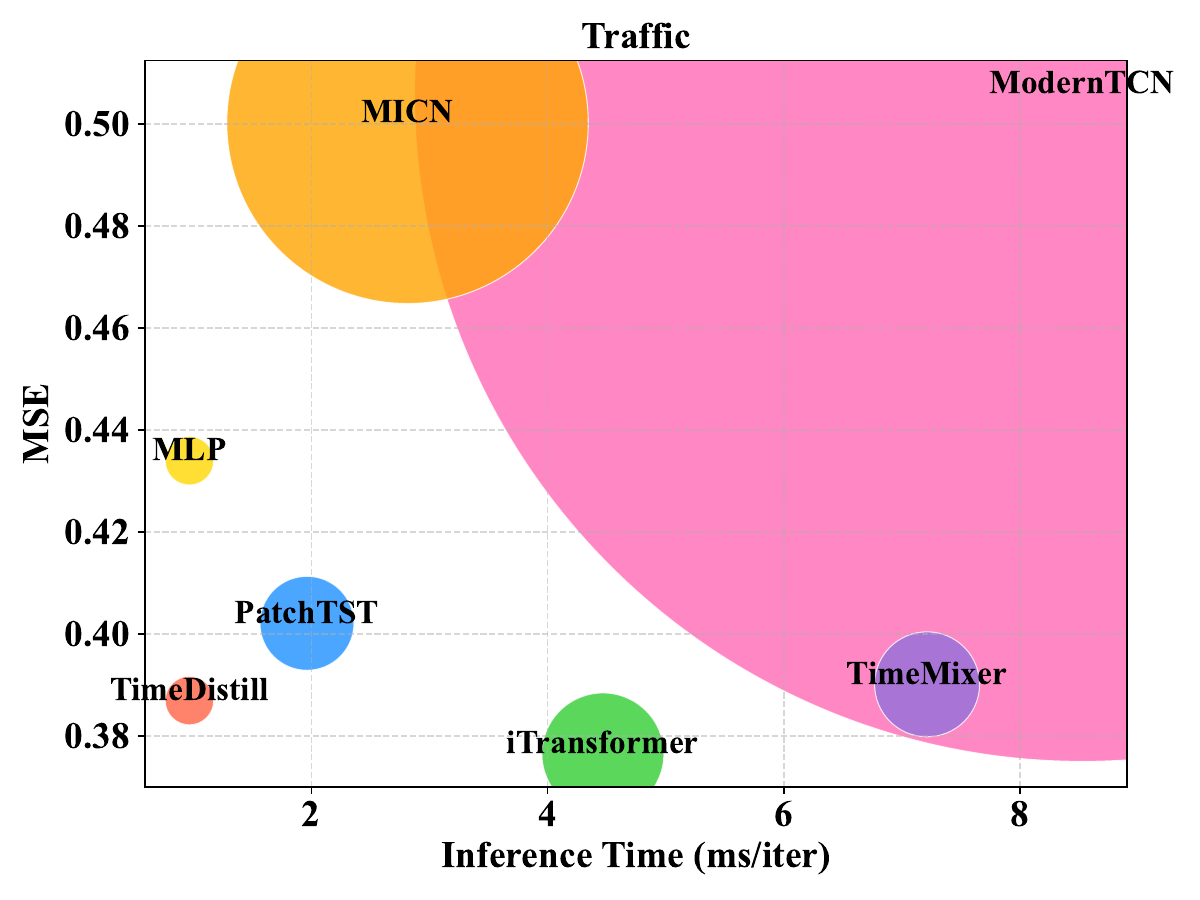}&
        \includegraphics[width=0.4\textwidth]{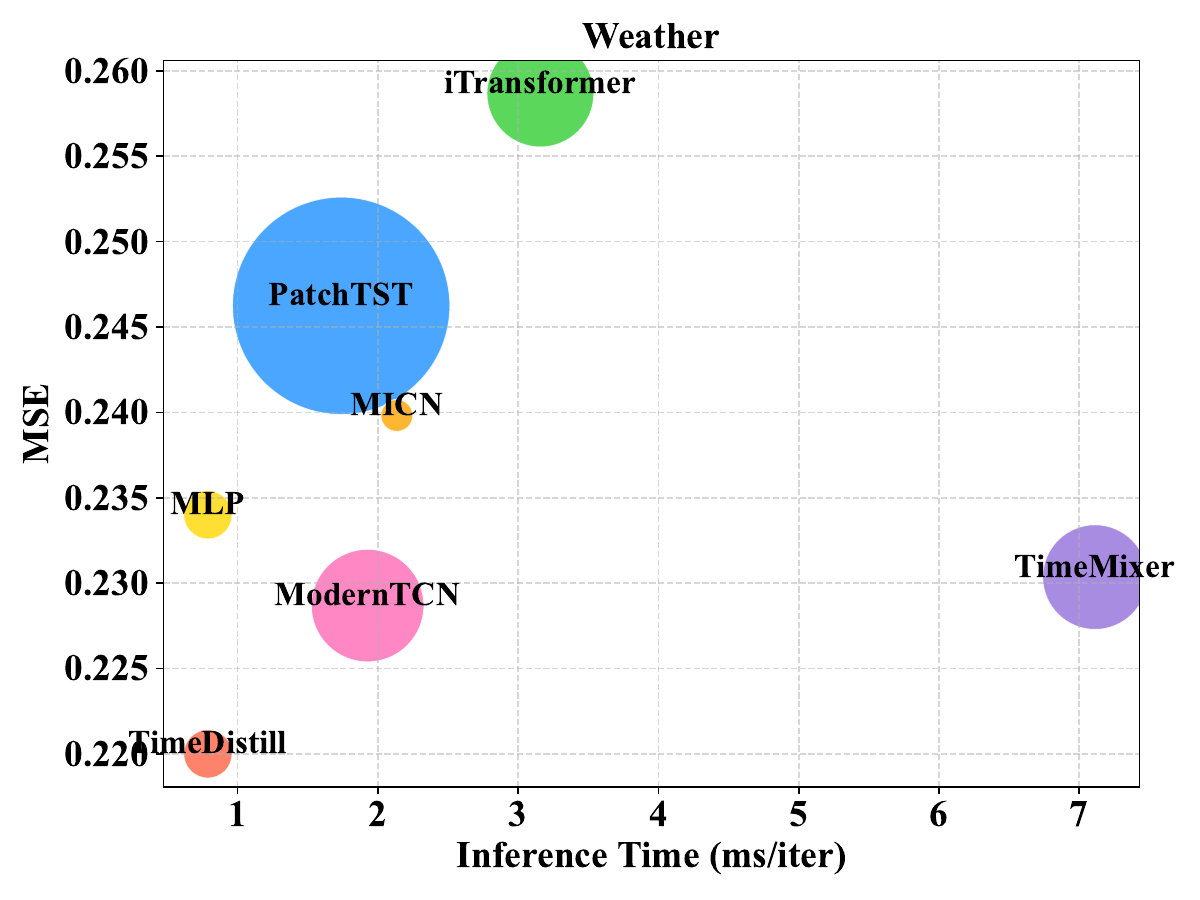} \\
    \end{tabular}
    \caption{Model efficiency comparison averaged across all prediction lengths (96, 192, 336, 720) on eight datasets. The size of each bubble represents the number of parameters in the corresponding method, with larger bubbles indicating more parameters.}
    \label{fig:app_efficiency}
\end{figure}

\clearpage
\section{Full Main Results}
\label{app:full_main_results}

\begin{table*}[h]
\centering
\caption{Long-term time series forecasting results with prediction lengths $S~\in\{96, 192, 336, 720\}$. A lower MSE or MAE indicates a better prediction. For consistency, we maintain a fixed input length of 720 throughout all the experiments. The best performance is highlighted in \textcolor{red}{\textbf{red}}, and the second-best is \textcolor{blue}{\underline{underlined}}.}
\resizebox{\textwidth}{!}{%
\begin{tabular}{c c lr|lr|lr|lr|lr|lr|lr|lr|lr}
\toprule
\multicolumn{2}{c}{Models} & \multicolumn{2}{c}{\textbf{\method{}}} & \multicolumn{2}{c}{iTransformer} & \multicolumn{2}{c}{ModernTCN} & \multicolumn{2}{c}{TimeMixer} & \multicolumn{2}{c}{PatchTST} & \multicolumn{2}{c}{MICN} & \multicolumn{2}{c}{FEDformer} & \multicolumn{2}{c}{TimesNet} & \multicolumn{2}{c}{Autoformer}\\
 & & \multicolumn{2}{c}{(\textbf{Ours})} & \multicolumn{2}{c}{(\citeyear{itransformer})} & \multicolumn{2}{c}{(\citeyear{moderntcn})} & \multicolumn{2}{c}{(\citeyear{timemixer})} &\multicolumn{2}{c}{(\citeyear{patchtst})} & \multicolumn{2}{c}{(\citeyear{micn})} & \multicolumn{2}{c}{(\citeyear{fedformer})} & \multicolumn{2}{c}{(\citeyear{timesnet})} &  \multicolumn{2}{c}{(\citeyear{autoformer})} \\
\cmidrule(lr){3-4} \cmidrule(lr){5-6} \cmidrule(lr){7-8} \cmidrule(lr){9-10} \cmidrule(lr){11-12} \cmidrule(lr){13-14} \cmidrule(lr){15-16} \cmidrule(lr){17-18} \cmidrule(lr){19-20}
 \multicolumn{2}{c}{Metric}& MSE & MAE & MSE & MAE & MSE & MAE & MSE & MAE & MSE & MAE & MSE & MAE & MSE & MAE & MSE & MAE & MSE & MAE \\
\midrule
\multirow{5}{*}{\rotatebox{90}{ECL}} 
& 96  & \textcolor{red}{\textbf{0.128}} & \textcolor{red}{\textbf{0.225}} & 0.134 & 0.230 & 0.140	& 0.239 & \textcolor{blue}{\underline{0.132}} & \textcolor{blue}{\underline{0.227}} & 0.135 & 0.239 & 0.169 & 0.283 & 0.238 & 0.347 & 0.218 & 0.324 & 0.224 & 0.334 \\
& 192 & \textcolor{red}{\textbf{0.145}} & \textcolor{red}{\textbf{0.241}} & 0.153 & \textcolor{blue}{\underline{0.248}} & \textcolor{blue}{\underline{0.153}} & 0.250 & 0.156 & 0.248 & 0.152 & 0.255 & 0.171 & 0.283 & 0.239 & 0.349 & 0.228 & 0.331 & 0.227 & 0.339 \\
& 336 & \textcolor{red}{\textbf{0.161}} & \textcolor{red}{\textbf{0.258}} & 0.171 & 0.268 & \textcolor{blue}{\underline{0.168}} & 0.264 & 0.170 & \textcolor{blue}{\underline{0.264}} & 0.168 & 0.270 & 0.184 & 0.296 & 0.255 & 0.361 & 0.240 & 0.344 & 0.235 & 0.344 \\
& 720 & \textcolor{red}{\textbf{0.195}} & \textcolor{blue}{\underline{0.291}} & 0.196 & \textcolor{red}{\textbf{0.290}} & 0.206 & 0.296 & 0.201 & 0.295 & 0.203 & 0.301 & 0.201 & 0.312 & 0.363 & 0.447 & 0.314 & 0.390 & 0.267 & 0.371 \\
\cmidrule(lr){2-20}
& Avg & \textcolor{red}{\textbf{0.157}} & \textcolor{red}{\textbf{0.254}} & \textcolor{blue}{\underline{0.163}} & \textcolor{blue}{\underline{0.259}} & 0.167 & 0.262 & 0.165 & 0.259 & 0.165 & 0.266 & 0.181 & 0.293 & 0.274 & 0.376 & 0.250 & 0.347 & 0.238 & 0.347 \\
\midrule

\multirow{5}{*}{\rotatebox{90}{ETTh1}} 
& 96  & \textcolor{red}{\textbf{0.373}} & \textcolor{red}{\textbf{0.401}} & 0.392 & 0.423 & \textcolor{blue}{\underline{0.389}} & \textcolor{blue}{\underline{0.412}} & 0.389 & 0.417 & 0.428 & 0.438 & 0.746 & 0.642 & 0.486 & 0.502 & 0.451 & 0.461 & 0.590 & 0.564 \\
& 192 & \textcolor{red}{\textbf{0.411}} & \textcolor{red}{\textbf{0.426}} & \textcolor{blue}{\underline{0.428}} & \textcolor{blue}{\underline{0.448}} & 0.446 & 0.452 & 0.443 & 0.460 & 0.476 & 0.476 & 0.523 & 0.502 & 0.483 & 0.501 & 0.456 & 0.469 & 0.586 & 0.578 \\
& 336 & \textcolor{red}{\textbf{0.439}} & \textcolor{red}{\textbf{0.444}} & \textcolor{blue}{\underline{0.461}} & 0.473 & 0.482 & \textcolor{blue}{\underline{0.469}} & 0.521 & 0.502 & 0.543 & 0.518 & 0.750 & 0.647 & 0.494 & 0.501 & 0.494 & 0.493 & 0.810 & 0.700 \\
& 720 & \textcolor{blue}{\underline{0.495}} & \textcolor{blue}{\underline{0.493}} & 0.590 & 0.562 & 0.557 & 0.528 & \textcolor{red}{\textbf{0.483}} & \textcolor{red}{\textbf{0.481}} & 0.546 & 0.526 & 0.939 & 0.735 & 0.644 & 0.594 & 0.629 & 0.575 & 0.941 & 0.797 \\ 
\cmidrule(lr){2-20}
& Avg & \textcolor{red}{\textbf{0.429}} & \textcolor{red}{\textbf{0.441}} & 0.468 & 0.476 & 0.469 & 0.465 & \textcolor{blue}{\underline{0.459}} & \textcolor{blue}{\underline{0.465}} & 0.498 & 0.490 & 0.739 & 0.631 & 0.527 & 0.524 & 0.507 & 0.500 & 0.731 & 0.660 \\
\midrule

\multirow{5}{*}{\rotatebox{90}{ETTh2}} 
& 96  & \textcolor{red}{\textbf{0.273}} & \textcolor{red}{\textbf{0.336}} & 0.303 & 0.364 & \textcolor{blue}{\underline{0.288}} & \textcolor{blue}{\underline{0.350}} & 0.323 & 0.384 & 0.338 & 0.386 & 0.395 & 0.427 & 0.410 & 0.457 & 0.415 & 0.446 & 1.173 & 0.824 \\
& 192 & \textcolor{red}{\textbf{0.334}} & \textcolor{red}{\textbf{0.381}} & 0.410 & 0.423 & \textcolor{blue}{\underline{0.341}} & \textcolor{blue}{\underline{0.387}} & 0.506 & 0.492 & 0.405 & 0.423 & 0.572 & 0.544 & 0.420 & 0.462 & 0.406 & 0.435 & 1.523 & 0.908 \\
& 336 & \textcolor{red}{\textbf{0.363}} & \textcolor{red}{\textbf{0.415}} & 0.440 & 0.450 & \textcolor{blue}{\underline{0.376}} & \textcolor{blue}{\underline{0.419}} & 0.391 & 0.427 & 0.488 & 0.469 & 1.387 & 0.896 & 0.443 & 0.481 & 0.411 & 0.443 & 2.492 & 1.217 \\
& 720 & \textcolor{red}{\textbf{0.408}} & \textcolor{red}{\textbf{0.446}} & 0.439 & 0.469 & \textcolor{blue}{\underline{0.424}} & \textcolor{blue}{\underline{0.456}} & 0.469 & 0.473 & 0.545 & 0.494 & 1.956 & 1.077 & 0.551 & 0.539 & 0.442 & 0.460 & 1.190 & 0.815 \\
\cmidrule(lr){2-20}
& Avg & \textcolor{red}{\textbf{0.345}} & \textcolor{red}{\textbf{0.395}} & 0.398 & 0.426 & \textcolor{blue}{\underline{0.357}} & \textcolor{blue}{\underline{0.403}} & 0.422 & 0.444 & 0.444 & 0.443 & 1.078 & 0.736 & 0.456 & 0.485 & 0.419 & 0.446 & 1.594 & 0.941 \\
\midrule

\multirow{5}{*}{\rotatebox{90}{ETTm1}}
& 96  & \textcolor{red}{\textbf{0.285}} & \textcolor{red}{\textbf{0.344}} & 0.319 & 0.367 & 0.325 & 0.369 & 0.309 & \textcolor{blue}{\underline{0.357}} & \textcolor{blue}{\underline{0.308}} & 0.368 & 0.356 & 0.404 & 0.363 & 0.422 & 0.333 & 0.374 & 0.475 & 0.485 \\
& 192 & \textcolor{red}{\textbf{0.331}} & \textcolor{red}{\textbf{0.368}} & 0.347 & 0.388 & 0.372 & 0.397 & \textcolor{blue}{\underline{0.339}} & \textcolor{blue}{\underline{0.373}} & 0.363 & 0.395 & 0.428 & 0.454 & 0.401 & 0.444 & 0.367 & 0.398 & 0.504 & 0.495 \\
& 336 & \textcolor{red}{\textbf{0.359}} & \textcolor{red}{\textbf{0.386}} & \textcolor{blue}{\underline{0.387}} & 0.413 & 0.408 & 0.424 & 0.390 & \textcolor{blue}{\underline{0.399}} & 0.414 & 0.430 & 0.465 & 0.483 & 0.422 & 0.447 & 0.417 & 0.429 & 0.670 & 0.559 \\
& 720 & \textcolor{red}{\textbf{0.415}} & \textcolor{red}{\textbf{0.416}} & 0.437 & 0.439 & 0.456 & 0.450 & \textcolor{blue}{\underline{0.429}} & \textcolor{blue}{\underline{0.423}} & 0.446 & 0.452 & 0.507 & 0.502 & 0.505 & 0.492 & 0.477 & 0.474 & 0.635 & 0.567 \\
\cmidrule(lr){2-20}
& Avg & \textcolor{red}{\textbf{0.348}} & \textcolor{red}{\textbf{0.379}} & 0.372 & 0.402 & 0.390 & 0.410 & \textcolor{blue}{\underline{0.367}} & \textcolor{blue}{\underline{0.388}} & 0.383 & 0.412 & 0.439 & 0.461 & 0.423 & 0.451 & 0.398 & 0.419 & 0.571 & 0.527 \\
\midrule

\multirow{5}{*}{\rotatebox{90}{ETTm2}}
& 96  & \textcolor{red}{\textbf{0.163}} & \textcolor{red}{\textbf{0.255}} & 0.180 & 0.273 & \textcolor{blue}{\underline{0.180}} & \textcolor{blue}{\underline{0.269}} & 0.197 & 0.292 & 0.181 & 0.273 & 0.215 & 0.311 & 0.298 & 0.362 & 0.202 & 0.289 & 0.309 & 0.374 \\
& 192 & \textcolor{red}{\textbf{0.220}} & \textcolor{red}{\textbf{0.294}} & 0.243 & 0.316 & 0.240 & 0.310 & 0.240 & \textcolor{blue}{\underline{0.307}} & \textcolor{blue}{\underline{0.230}} & 0.308 & 0.232 & 0.317 & 0.322 & 0.375 & 0.260 & 0.332 & 0.452 & 0.466 \\
& 336 & \textcolor{red}{\textbf{0.269}} & \textcolor{red}{\textbf{0.328}} & 0.299 & 0.352 & 0.288 & 0.344 & \textcolor{blue}{\underline{0.286}} & \textcolor{blue}{\underline{0.340}} & 0.306 & 0.355 & 0.378 & 0.436 & 0.374 & 0.413 & 0.321 & 0.370 & 0.373 & 0.416 \\
& 720 & \textcolor{red}{\textbf{0.326}} & \textcolor{red}{\textbf{0.369}} & 0.382 & 0.405 & \textcolor{blue}{\underline{0.363}} & \textcolor{blue}{\underline{0.395}} & 0.392 & 0.416 & 0.379 & 0.404 & 0.567 & 0.551 & 0.441 & 0.455 & 0.381 & 0.406 & 0.550 & 0.540 \\ 
\cmidrule(lr){2-20}
& Avg & \textcolor{red}{\textbf{0.244}} & \textcolor{red}{\textbf{0.311}} & 0.276 & 0.337 & \textcolor{blue}{\underline{0.267}} & \textcolor{blue}{\underline{0.330}} & 0.279 & 0.339 & 0.274 & 0.335 & 0.348 & 0.404 & 0.359 & 0.401 & 0.291 & 0.349 & 0.421 & 0.449 \\
\midrule

\multirow{5}{*}{\rotatebox{90}{Solar}}
& 96  & \textcolor{red}{\textbf{0.166}} & \textcolor{red}{\textbf{0.229}} & 0.189 & 0.255 & \textcolor{blue}{\underline{0.171}} & \textcolor{blue}{\underline{0.249}} & 0.283 & 0.382 & 0.178 & 0.232 & 0.193 & 0.259 & 0.275 & 0.367 & 0.174 & 0.251 & 0.880 & 0.654 \\
& 192 & \textcolor{red}{\textbf{0.181}} & \textcolor{red}{\textbf{0.239}} & 0.237 & 0.265 & \textcolor{blue}{\underline{0.188}} & \textcolor{blue}{\underline{0.236}} & 0.216 & 0.257 & 0.201 & 0.256 & 0.213 & 0.275 & 0.279 & 0.365 & 0.189 & 0.258 & 1.024 & 0.742 \\
& 336 & \textcolor{red}{\textbf{0.191}} & \textcolor{red}{\textbf{0.246}} & 0.214 & 0.280 & \textcolor{blue}{\underline{0.192}} & \textcolor{blue}{\underline{0.237}}  & 0.232 & 0.267 & 0.230 & 0.266 & 0.209 & 0.278 & 0.306 & 0.384 & 0.206 & 0.271 & 1.258 & 0.858 \\
& 720 & \textcolor{red}{\textbf{0.199}} & \textcolor{red}{\textbf{0.252}} & 0.217 & 0.280 & \textcolor{blue}{\underline{0.214}} & 0.251 & 0.223 & \textcolor{blue}{\underline{0.246}} & 0.231 & 0.275 & 0.238 & 0.297 & 0.341 & 0.416 & 0.214 & 0.267 & 0.987 & 0.718 \\ 
\cmidrule(lr){2-20}
& Avg & \textcolor{red}{\textbf{0.184}} & \textcolor{red}{\textbf{0.241}} & 0.214 & 0.270 & \textcolor{blue}{\underline{0.191}} & \textcolor{blue}{\underline{0.243}} & 0.238 & 0.288 & 0.210 & 0.257 & 0.213 & 0.277 & 0.300 & 0.383 & 0.196 & 0.262 & 1.037 & 0.743 \\
\midrule

\multirow{5}{*}{\rotatebox{90}{Traffic}}
& 96  & \textcolor{blue}{\underline{0.358}} & \textcolor{blue}{\underline{0.256}} &\textcolor{red}{\textbf{0.345}} & \textcolor{red}{\textbf{0.254}} & 0.392 & 0.276 & 0.367 & 0.272 & 0.374 & 0.272 & 0.485 & 0.313 & 0.608 & 0.388 & 0.677 & 0.391 & 0.691 & 0.422 \\
& 192 & \textcolor{blue}{\underline{0.374}} & \textcolor{red}{\textbf{0.264}} & \textcolor{red}{\textbf{0.366}} & 0.266 & 0.400 & 0.277 & 0.377 & \textcolor{blue}{\underline{0.265}} & 0.389 & 0.278 & 0.484 & 0.310 & 0.616 & 0.376 & 0.682 & 0.392 & 0.710 & 0.432 \\
& 336 & \textcolor{blue}{\underline{0.389}} & \textcolor{red}{\textbf{0.271}} & \textcolor{red}{\textbf{0.382}} & 0.272 & 0.412 & 0.283 & 0.389 & \textcolor{blue}{\underline{0.272}} & 0.403 & 0.285 & 0.493 & 0.311 & 0.639 & 0.392 & 0.695 & 0.401 & 0.700 & 0.433 \\
& 720 & \textcolor{blue}{\underline{0.428}} & \textcolor{red}{\textbf{0.292}} & \textcolor{red}{\textbf{0.422}} & \textcolor{blue}{\underline{0.292}} & 0.450 & 0.302 & 0.433 & 0.293 & 0.442 & 0.303 & 0.539 & 0.331 & 0.648 & 0.397 & 0.720 & 0.410 & 0.687 & 0.422 \\
\cmidrule(lr){2-20}
& Avg & \textcolor{blue}{\underline{0.387}} & \textcolor{red}{\textbf{0.271}} & \textcolor{red}{\textbf{0.379}} & \textcolor{blue}{\underline{0.271}} & 0.413 & 0.284 & \textcolor{blue}{\underline{0.391}} & 0.275 & 0.402 & 0.284 & 0.500 & 0.316 & 0.628 & 0.388 & 0.693 & 0.399 & 0.697 & 0.427 \\
\midrule

\multirow{5}{*}{\rotatebox{90}{Weather}}
& 96  & \textcolor{red}{\textbf{0.145}} & \textcolor{red}{\textbf{0.204}} & 0.178 & 0.229 & 0.152 & 0.208 & \textcolor{blue}{\underline{0.147}} & \textcolor{blue}{\underline{0.202}} & 0.158 & 0.214 & 0.171 & 0.231 & 0.333 & 0.395 & 0.174 & 0.234 & 0.372 & 0.419 \\
& 192 & \textcolor{red}{\textbf{0.188}} & \textcolor{blue}{\underline{0.247}} & 0.219 & 0.263 & \textcolor{blue}{\underline{0.198}} & 0.250 & 0.208 & 0.258 & 0.202 & 0.251 & 0.213 & 0.270 & 0.327 & 0.375 & 0.217 & 0.267 & 0.403 & 0.435 \\
& 336 & \textcolor{red}{\textbf{0.238}} & \textcolor{red}{\textbf{0.286}} & 0.280 & 0.307 & 0.257 & 0.294 & \textcolor{blue}{\underline{0.251}} & \textcolor{blue}{\underline{0.289}} & 0.281 & 0.312 & 0.259 & 0.310 & 0.360 & 0.399 & 0.276 & 0.311 & 0.702 & 0.579 \\
& 720 & \textcolor{red}{\textbf{0.310}} & \textcolor{red}{\textbf{0.338}} & 0.358 & 0.361 & 0.345 & 0.355 & \textcolor{blue}{\underline{0.316}} & \textcolor{blue}{\underline{0.334}} & 0.345 & 0.354 & 0.317 & 0.355 & 0.399 & 0.423 & 0.360 & 0.362 & 0.411 & 0.429 \\
\cmidrule(lr){2-20}
& Avg & \textcolor{red}{\textbf{0.220}} & \textcolor{red}{\textbf{0.269}} & 0.259 & 0.290 & 0.238 & 0.277 & \textcolor{blue}{\underline{0.230}} & \textcolor{blue}{\underline{0.271}} & 0.246 & 0.283 & 0.240 & 0.292 & 0.355 & 0.398 & 0.257 & 0.294 & 0.472 & 0.466 \\
\bottomrule
\end{tabular}%
}
\label{tab:full_results}
\end{table*}

\clearpage
\section{Full Different Look-back Window Results}
\label{app:full_diferent_lookback_window_results}

\begin{table*}[h]
\vspace{-1em}
\centering
\caption{Results with look-back length $S=96$.}
\vspace{-1em}
\resizebox{0.68\textwidth}{!}{
\begin{tabular}{l|cc||cc|cc||cc|cc}
\toprule
\multirow{2}{*}{Dataset} & \multicolumn{2}{c||}{MLP} & \multicolumn{2}{c|}{\tiny \method{}  (iTrans)} & \multicolumn{2}{c||}{iTransformer} & \multicolumn{2}{c|}{\tiny \method{}(Modern)} & \multicolumn{2}{c}{ModernTCN} \\ 
& MSE & MAE & MSE & MAE & MSE & MAE & MSE & MAE & MSE & MAE \\
\midrule
ECL     & 0.211 & 0.302 & 0.188 & 0.277 & \textbf{0.180} & \textbf{0.270} & \textbf{0.188} & \textbf{0.277} & 0.197 & 0.282 \\
ETTh1   & 0.499 & 0.481 & \textbf{0.444} & \textbf{0.439} & 0.453 & 0.448 & \textbf{0.443} & \textbf{0.433} & 0.446 & 0.433 \\
ETTh2   & 0.634 & 0.557 & \textbf{0.360} & \textbf{0.392} & 0.383 & 0.407 & \textbf{0.365} & \textbf{0.396} & 0.385 & 0.406 \\
ETTm1   & 0.400 & 0.412 & \textbf{0.376} & \textbf{0.395} & 0.407 & 0.411 &\textbf{ 0.376} & \textbf{0.395} & 0.386 & 0.400 \\
ETTm2   & 0.434 & 0.442 & \textbf{0.275} & \textbf{0.319} & 0.291 & 0.334 & \textbf{0.274} & \textbf{0.319} & 0.279 & 0.322 \\
Solar   & 0.263 & 0.321 & \textbf{0.234} & 0.289 & 0.236 & \textbf{0.262} & \textbf{0.235} & 0.293 & 0.255 & \textbf{0.278} \\
Traffic & 0.583 & 0.379 & 0.488 & 0.313 & \textbf{0.421} & \textbf{0.282} & \textbf{0.499} & \textbf{0.316} & 0.645 & 0.395 \\
Weather & 0.246 & 0.300 & \textbf{0.245} & 0.281 & 0.260 & \textbf{0.280} & \textbf{0.248} & 0.283 & 0.253 & \textbf{0.280} \\
\midrule
Avg     & 0.409 & 0.399 & \textbf{0.326} & 0.338 & 0.329 & \textbf{0.337} & \textbf{0.328} & \textbf{0.339} & 0.356 & 0.350 \\
\bottomrule
\end{tabular}
}
\vspace{-1em}
\end{table*}

\begin{table*}[h!]
\vspace{-1em}
\centering
\caption{Results with look-back length $S=192$.}
\vspace{-1em}
\resizebox{0.68\textwidth}{!}{
\begin{tabular}{l|cc||cc|cc||cc|cc}
\toprule
\multirow{2}{*}{Dataset} & \multicolumn{2}{c||}{MLP} & \multicolumn{2}{c|}{\tiny \method{}  (iTrans)} & \multicolumn{2}{c||}{iTransformer} & \multicolumn{2}{c|}{\tiny \method{}(Modern)} & \multicolumn{2}{c}{ModernTCN} \\ 
& MSE & MAE & MSE & MAE & MSE & MAE & MSE & MAE & MSE & MAE \\
\midrule
ECL     & 0.184 & 0.284 & 0.166 & 0.259 & \textbf{0.164} & \textbf{0.258} & \textbf{0.166} & \textbf{0.259} & 0.171 & 0.263 \\
ETTh1   & 0.491 & 0.484 & \textbf{0.447} & \textbf{0.443} & 0.454 & 0.453 & 0.432 & 0.430 & \textbf{0.429} & \textbf{0.427} \\
ETTh2   & 0.567 & 0.522 & \textbf{0.350} & \textbf{0.391} & 0.384 & 0.409 & \textbf{0.352} & \textbf{0.392} & 0.372 & 0.404 \\
ETTm1   & 0.369 & 0.392 & \textbf{0.346} & \textbf{0.380} & 0.373 & 0.396 & \textbf{0.343} & \textbf{0.376} & 0.365 & 0.386 \\
ETTm2   & 0.395 & 0.424 & \textbf{0.264} & \textbf{0.315} & 0.286 & 0.335 &\textbf{ 0.260} & \textbf{0.314} & 0.265 & 0.318 \\
Solar   & 0.213 & 0.273 & \textbf{0.199} & \textbf{0.258} & 0.224 & 0.260 & \textbf{0.199} & 0.257 & 0.217 & \textbf{0.253} \\
Traffic & 0.478 & 0.338 & 0.422 & 0.284 & \textbf{0.395} & \textbf{0.274} & \textbf{0.426} & \textbf{0.286} & 0.485 & 0.325 \\
Weather & 0.233 & 0.288 & \textbf{0.230} & \textbf{0.272} & 0.246 & 0.275 & \textbf{0.231} & \textbf{0.273} & 0.240 & 0.273 \\
\midrule
Avg     & 0.366 & 0.376 & \textbf{0.303} & \textbf{0.325} & 0.316 & 0.333 & \textbf{0.301} & \textbf{0.323} & 0.318 & 0.331 \\
\bottomrule
\end{tabular}
}
\vspace{-1em}
\end{table*}

\begin{table*}[h!]
\vspace{-1em}
\centering
\caption{Results with look-back length $S=336$.}
\vspace{-1em}
\resizebox{0.68\textwidth}{!}{
\begin{tabular}{l|cc||cc|cc||cc|cc}
\toprule
\multirow{2}{*}{Dataset} & \multicolumn{2}{c||}{MLP} & \multicolumn{2}{c|}{\tiny \method{}  (iTrans)} & \multicolumn{2}{c||}{iTransformer} & \multicolumn{2}{c|}{\tiny \method{}(Modern)} & \multicolumn{2}{c}{ModernTCN} \\ 
& MSE & MAE & MSE & MAE & MSE & MAE & MSE & MAE & MSE & MAE \\
\midrule
ECL     & 0.177 & 0.279 &\textbf{ 0.161} & \textbf{0.256} & 0.164 & 0.258 & \textbf{0.161} & \textbf{0.255} & 0.166 & 0.260 \\
ETTh1   & 0.481 & 0.487 & \textbf{0.429} & \textbf{0.439} & 0.458 & 0.460 & \textbf{0.417} & \textbf{0.427} & 0.418 & 0.427 \\
ETTh2   & 0.555 & 0.521 & \textbf{0.345} & \textbf{0.390} & 0.390 & 0.416 & \textbf{0.345} & \textbf{0.392} & 0.351 & 0.397 \\
ETTm1   & 0.364 & 0.390 & \textbf{0.343} & \textbf{0.380} & 0.368 & 0.395 & \textbf{0.340} & \textbf{0.374} & 0.363 & 0.385 \\
ETTm2   & 0.387 & 0.416 & \textbf{0.254} & \textbf{0.313} & 0.272 & 0.329 & \textbf{0.252} & \textbf{0.311} & 0.266 & 0.321 \\
Solar   & 0.203 & 0.260 & \textbf{0.194} & \textbf{0.248} & 0.235 & 0.270 & \textbf{0.192} & \textbf{0.248} & 0.212 & 0.250 \\
Traffic & 0.450 & 0.326 & 0.402 & 0.277 & \textbf{0.386} & \textbf{0.273} & \textbf{0.407} & \textbf{0.278} & 0.444 & 0.305 \\
Weather & 0.226 & 0.282 & \textbf{0.222} & \textbf{0.264} & 0.239 & 0.273 & \textbf{0.223} & \textbf{0.267} & 0.232 & 0.269 \\
\midrule
Avg     & 0.355 & 0.370 & \textbf{0.294} & \textbf{0.321} & 0.314 & 0.334 & \textbf{0.292} & \textbf{0.319} & 0.307 & 0.327 \\
\bottomrule
\end{tabular}
}
\vspace{-1em}
\end{table*}

\begin{table*}[h!]
\centering
\vspace{-1em}
\caption{Results with look-back length $S=720$.}
\vspace{-1em}
\resizebox{0.68\textwidth}{!}{
\begin{tabular}{l|cc||cc|cc||cc|cc}
\toprule
\multirow{2}{*}{Dataset} & \multicolumn{2}{c||}{MLP} & \multicolumn{2}{c|}{\tiny \method{}  (iTrans)} & \multicolumn{2}{c||}{iTransformer} & \multicolumn{2}{c|}{\tiny \method{}(Modern)} & \multicolumn{2}{c}{ModernTCN} \\ 
& MSE & MAE & MSE & MAE & MSE & MAE & MSE & MAE & MSE & MAE \\
\midrule
ECL     & 0.173 & 0.276 & \textbf{0.157} & \textbf{0.254} & 0.163 & 0.259 & \textbf{0.157} & \textbf{0.254} & 0.167& 0.262\\
ETTh1   & 0.502 & 0.489 & \textbf{0.428} & \textbf{0.445} & 0.468 & 0.476 & \textbf{0.429} &\textbf{ 0.441} & 0.469& 0.465\\
ETTh2   & 0.393 & 0.438 & \textbf{0.345} & \textbf{0.397} & 0.398 & 0.426 & \textbf{0.345} & \textbf{0.395} & 0.357& 0.403\\
ETTm1   & 0.391 & 0.413 & \textbf{0.354} & \textbf{0.390} & 0.372 & 0.402 & \textbf{0.348} & \textbf{0.379} & 0.390& 0.410\\
ETTm2   & 0.300 & 0.373 & \textbf{0.252} & \textbf{0.316} & 0.276 & 0.337 & \textbf{0.244} & \textbf{0.311} & 0.267& 0.330\\
Solar   & 0.194 & 0.255 & \textbf{0.185} & \textbf{0.241} & 0.214 & 0.270 & \textbf{0.184} & \textbf{0.241} & 0.191& 0.243\\
Traffic & 0.434 & 0.318 & 0.389 & \textbf{0.271} & \textbf{0.379} & 0.271 & \textbf{0.387}	&\textbf{ 0.271} & 0.413& 0.284\\
Weather & 0.234 & 0.294 & \textbf{0.220} & \textbf{0.270} & 0.259 & 0.290 & \textbf{0.220} & \textbf{0.269} & 0.238& 0.277\\
\midrule
Avg     & 0.328 & 0.357 & \textbf{0.291} & \textbf{0.323} & 0.316 & 0.341 & \textbf{0.289}	& \textbf{0.320} & 0.312& 0.334\\
\bottomrule
\end{tabular}
}
\vspace{-1em}
\end{table*}

\paragraph{\textbf{Discussion on although ModernTCN and iTransformer differ in performance, their distilled student models perform very consistently.}} There is an interesting observation from the above four tables that although ModernTCN and iTransformer have different performances, the two student models distilled from them as teacher models perform highly consistently. The differences are clear on Traffic, ETTh1/2, Solar, and Weather, but less so on ECL and ETTm1/m2, where students distilled from ModernTCN and iTransformer perform similarly. We attribute this to the limited capacity of the student MLP, leading to performance saturation. Table~\ref{tab:mlp_mse_mae} supports this: as student size increases, performance differences on ETTm1 become more apparent, confirming our hypothesis.

\begin{table}[ht]
\centering
\caption{Performance comparison (MSE and MAE) for different MLP sizes on ModernTCN and iTransformer.}
\resizebox{0.99\textwidth}{!}{%
\begin{tabular}{lcccccc}
\hline
Method & MLP(2L512)MSE & MLP(2L512)MAE & MLP(2L1024)MSE & MLP(2L1024)MAE & MLP(2L2048)MSE & MLP(2L2048)MAE \\
\hline
ModernTCN      & 0.376 & 0.395 & 0.374 & 0.393 & 0.371 & 0.382 \\
iTransformer   & 0.376 & 0.395 & 0.376 & 0.394 & 0.374 & 0.389 \\
\hline
\end{tabular}
}
\label{tab:mlp_mse_mae}
\end{table}

\section{Full Ablation Results}
\label{app:full_ablation_results}

\begin{table}[htbp]
\centering
\caption{Ablation study on different datasets (Teacher: \textbf{iTransformer}~\cite{itransformer}).}
\label{app:tab_ablation_itran}
\vspace{-1em}
\small
\resizebox{0.99\textwidth}{!}{
\begin{tabular}{l cc|cc|cc|cc|cc|cc|cc|cc}
\toprule
\multirow{2}{*}{Method} & \multicolumn{2}{c}{ECL} & \multicolumn{2}{c}{ETTh1} & \multicolumn{2}{c}{ETTh2} & \multicolumn{2}{c}{ETTm1} & \multicolumn{2}{c}{ETTm2} & \multicolumn{2}{c}{Solar} & \multicolumn{2}{c}{Traffic} & \multicolumn{2}{c}{Weather}\\
\cmidrule(lr){2-3}\cmidrule(lr){4-5}\cmidrule(lr){6-7}\cmidrule(lr){8-9}\cmidrule(lr){10-11}\cmidrule(lr){12-13}\cmidrule(lr){14-15}\cmidrule(lr){16-17}
 & MSE & MAE & MSE & MAE & MSE & MAE & MSE & MAE & MSE & MAE & MSE & MAE & MSE & MAE & MSE & MAE\\
\midrule
iTransformer        & 0.163 & 0.259 & 0.468 & 0.476 & 0.398 & 0.426 & 0.372 & 0.402 & 0.276 & 0.337 & 0.214 & 0.270 & \textbf{0.379} & 0.271 & 0.259 & 0.290 \\
MLP                & 0.173 & 0.276 & 0.502 & 0.489 & 0.393 & 0.438 & 0.391 & 0.413 & 0.300 & 0.373 & 0.194 & 0.255 & 0.434 & 0.318 & 0.234 & 0.294 \\
\midrule
\midrule
\method{}        & \textbf{0.157} & \textbf{0.254} & \textbf{0.428} & \textbf{0.445} & \textbf{0.345} & \textbf{0.397} & \textbf{0.354} & \textbf{0.390} & \textbf{0.252} & \textbf{0.316} & \textbf{0.185} & \textbf{0.241} & 0.389 & \textbf{0.271} & \textbf{0.220} & \textbf{0.270} \\
w/o prediction level & 0.157 & 0.254 & 0.480 & 0.472 & 0.365 & 0.413 & 0.372 & 0.403 & 0.261 & 0.321 & 0.186 & 0.242 & 0.392 & 0.274 & 0.221 & 0.271 \\
w/o feature level    & 0.163 & 0.260 & 0.441 & 0.452 & 0.365 & 0.410 & 0.373 & 0.398 & 0.258 & 0.320 & 0.186 & 0.246 & 0.393 & 0.277 & 0.225 & 0.277 \\
w/o multi-scale      & 0.163 & 0.261 & 0.483 & 0.480 & 0.375 & 0.423 & 0.394 & 0.409 & 0.268 & 0.327 & 0.187 & 0.248 & 0.393 & 0.277 & 0.223 & 0.277 \\
w/o multi-period     & 0.159 & 0.255 & 0.507 & 0.487 & 0.376 & 0.422 & 0.381 & 0.399 & 0.268 & 0.323 & 0.195 & 0.256 & 0.392 & 0.273 & 0.222 & 0.270 \\
w/o sup              & 0.161 & 0.258 & 0.425 & 0.446 & 0.353 & 0.396 & 0.368 & 0.396 & 0.257 & 0.320 & 0.205 & 0.269 & 0.394 & 0.277 & 0.223 & 0.270 \\
\hline
\end{tabular}
}
\end{table}

\begin{table}[htbp]
\centering
\vspace{-1em}
\caption{Ablation study on different datasets (Teacher: \textbf{ModernTCN}~\cite{moderntcn}).}
\label{app:tab_ablation_modern}
\vspace{-1em}
\small
\resizebox{0.99\textwidth}{!}{
\begin{tabular}{l cc|cc|cc|cc|cc|cc|cc|cc}
\toprule
\multirow{2}{*}{Method} & \multicolumn{2}{c}{ECL} & \multicolumn{2}{c}{ETTh1} & \multicolumn{2}{c}{ETTh2} & \multicolumn{2}{c}{ETTm1} & \multicolumn{2}{c}{ETTm2} & \multicolumn{2}{c}{Solar} & \multicolumn{2}{c}{Traffic} & \multicolumn{2}{c}{Weather}\\
\cmidrule(lr){2-3}\cmidrule(lr){4-5}\cmidrule(lr){6-7}\cmidrule(lr){8-9}\cmidrule(lr){10-11}\cmidrule(lr){12-13}\cmidrule(lr){14-15}\cmidrule(lr){16-17}
 & MSE & MAE & MSE & MAE & MSE & MAE & MSE & MAE & MSE & MAE & MSE & MAE & MSE & MAE & MSE & MAE \\
\midrule
ModernTCN & 0.167& 0.262& 0.469& 0.465& 0.357& 0.403& 0.390& 0.410& 0.267 & 0.330 & 0.191& 0.243& 0.413& 0.284& 0.238& 0.277\\
MLP & 0.173 & 0.276 & 0.502 & 0.489 & 0.393 & 0.438 & 0.391 & 0.413 & 0.300 & 0.373 & 0.194 & 0.255 & 0.434 & 0.318 & 0.234 & 0.294 \\
\midrule
\midrule
\method{} & \textbf{0.157} & \textbf{0.254} & \textbf{0.429} & \textbf{0.441} & \textbf{0.345} & \textbf{0.395} & \textbf{0.348} & \textbf{0.379} & \textbf{0.244} & \textbf{0.311} & \textbf{0.184} & \textbf{0.241} & \textbf{0.391} & \textbf{0.275} & \textbf{0.220} & \textbf{0.269} \\
w/o prediction level & 0.157 & 0.254 & 0.490 & 0.480 & 0.370 & 0.419 & 0.370 & 0.401 & 0.263 & 0.325 & 0.184 & 0.241 & 0.392 & 0.275 & 0.221 & 0.273 \\
w/o feature level & 0.161 & 0.258 & 0.442 & 0.447 & 0.354 & 0.401 & 0.353 & 0.382 & 0.248 & 0.315 & 0.188 & 0.250 & 0.393 & 0.277 & 0.224 & 0.271 \\
w/o multi-scale & 0.162 & 0.260 & 0.480 & 0.476 & 0.380 & 0.430 & 0.379 & 0.402 & 0.267 & 0.327 & 0.187 & 0.249 & 0.393 & 0.278 & 0.224 & 0.277 \\
w/o multi-period & 0.157 & 0.254 & 0.430 & 0.442 & 0.346 & 0.395 & 0.348 & 0.379 & 0.245 & 0.311 & 0.184 & 0.241 & 0.391 & 0.274 & 0.221 & 0.267 \\
w/o sup & 0.165 & 0.261 & 0.423 & 0.438 & 0.345 & 0.394 & 0.356 & 0.381 & 0.251 & 0.317 & 0.192 & 0.260 & 0.506 & 0.351 & 0.225 & 0.269 \\
\bottomrule
\end{tabular}
}
\end{table}

\begin{table*}[h!]
\centering
\vspace{-1em}
\caption{Ablation study measured by MSE on different components of \method{}. Teacher is \textbf{iTransformer};}
\vspace{-1em}
\label{tab:side_by_side_ablation1}
\centering
\resizebox{0.6\textwidth}{!}{
\begin{tabular}{l c c c c c}
\toprule
Method & ECL & ETT(avg) & Solar & Traffic & Weather \\
\midrule
\textbf{iTransformer}        & 0.163 & 0.379 & 0.214 & \textcolor{red}{\textbf{0.379}} & 0.259 \\
MLP                & 0.173 & 0.396 & 0.194 & 0.434 & 0.234 \\
\midrule
\method{}           & \textcolor{red}{\textbf{0.157}} & \textcolor{red}{\textbf{0.345}} & \textcolor{red}{\textbf{0.185}} & \textcolor{blue}{\underline{0.389}} & \textcolor{red}{\textbf{0.220}} \\
\textit{w/o} multi-scale      & 0.163 & 0.380 & 0.187 & 0.393 & 0.223 \\
\textit{w/o} multi-period     & 0.159 & 0.383 & 0.195 & 0.392 & 0.222 \\
\textit{w/o} prediction level & \textcolor{blue}{\underline{0.157}} & 0.370 & 0.186 & 0.392 & \textcolor{blue}{\underline{0.221}} \\
\textit{w/o} feature level    & 0.163 & 0.359 & \textcolor{blue}{\underline{0.186}} & 0.393 & 0.225 \\
\textit{w/o} sup              & 0.161 & \textcolor{blue}{\underline{0.351}} & 0.205 & 0.394 & 0.223 \\
\bottomrule
\end{tabular}
}
\end{table*}

\begin{table*}[h!]
\centering
\vspace{-1em}
\caption{Ablation study measured by MSE on different components of \method{}. Teacher is \textbf{ModernTCN}.}
\vspace{-1em}
\label{tab:side_by_side_ablation2}
\centering
\resizebox{0.6\textwidth}{!}{
\begin{tabular}{l c c c c c}
\toprule
Method & ECL & ETT(avg) & Solar & Traffic & Weather \\
\midrule
Teacher            & 0.167 & 0.371 & 0.191 & 0.413 & 0.238 \\
MLP                 & 0.173 & 0.397 & 0.194 & 0.434 & 0.234 \\
\midrule
\method{}         & \textcolor{red}{\textbf{0.157}} & \textcolor{red}{\textbf{0.342}} & \textcolor{red}{\textbf{0.184}} & \textcolor{red}{\textbf{0.387}} & \textcolor{red}{\textbf{0.220}} \\
w/o prediction level & \textcolor{blue}{\underline{0.157}} & 0.373 & \textcolor{blue}{\underline{0.184}} & 0.392 & \textcolor{blue}{\underline{0.221}} \\
w/o feature level    & 0.161 & 0.349 & 0.188 & 0.393 & 0.224 \\
w/o multi-scale      & 0.162 & 0.377 & 0.187 & 0.393 & 0.224 \\
w/o multi-period     & 0.157 & \textcolor{blue}{\underline{0.342}} &0.184 & \textcolor{blue}{\underline{0.391}} & 0.221 \\
w/o sup              & 0.165 & 0.344 & 0.192 & 0.506 & 0.225 \\
\bottomrule
\end{tabular}
}
\end{table*}

\paragraph{\textbf{Discussion on removing multi-period loss can still yield acceptable results.}}
We observe that removing multi-period distillation causes a small drop when teacher is ModernTCN, but a large drop with iTransformer (Table~\ref{app:tab_ablation_itran} and Table~\ref{tab:side_by_side_ablation1}).

Theorem~\ref{thm:multiperiod} shows that multi-period loss optimizes a mixup distribution $\mathbf{Q}_y + \lambda\,\mathbf{Q}_t$. When $\mathbf{Q}_t$ is already close to 
$\mathbf{Q}_y$, the mixup offers limited gain. We confirm this on ETTh1 by computing FFT-based period distributions and KL divergence. As shown in Table~\ref{tab:comparison_of_kl}, ModernTCN's $\mathbf{Q}_t$ is closer to $\mathbf{Q}_y$ than iTransformer's, supporting the hypothesis.

\begin{table}[h]
\centering
\caption{Comparison of KL values}
\label{tab:comparison_of_kl}
\begin{tabular}{lcc}
\hline
 & iTransformer & ModernTCN \\
\hline
KL & 0.4606 & 0.3843 \\
\hline
\end{tabular}
\end{table}

\section{Show Cases}
\label{app:show_cases}

\begin{figure*}[h!]
    \centering
    \includegraphics[width=0.99\textwidth]{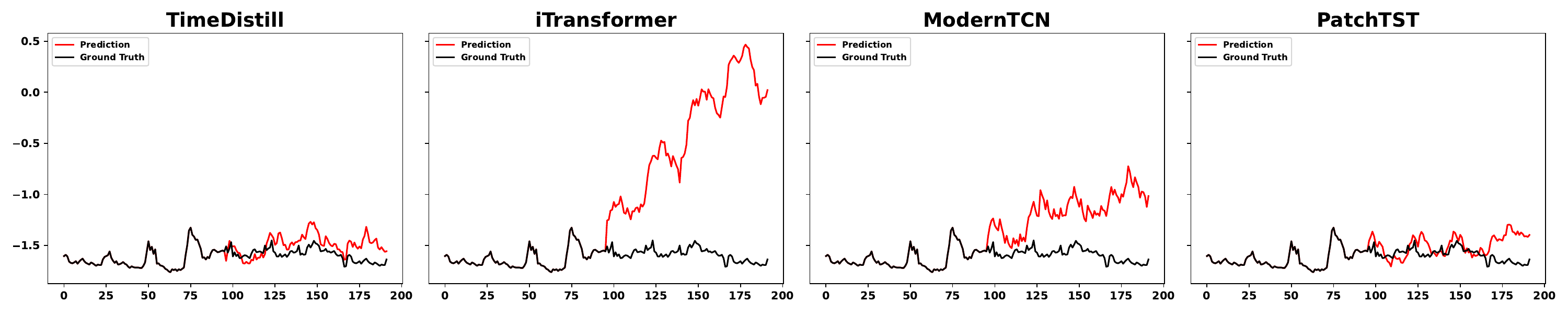}
    \caption{Prediction cases from ECL by different models under the input-720-predict-96 settings. \textcolor{black}
    {\textbf{Black}} lines are the ground truths and \textcolor{red}{\textbf{Red}} lines are the model predictions. Due to space constraints, we only retained the last 96 time steps of input for plotting.}
    \label{fig:ECL_case}
\end{figure*}

\begin{figure*}[h!]
    \centering
    \includegraphics[width=0.99\textwidth]{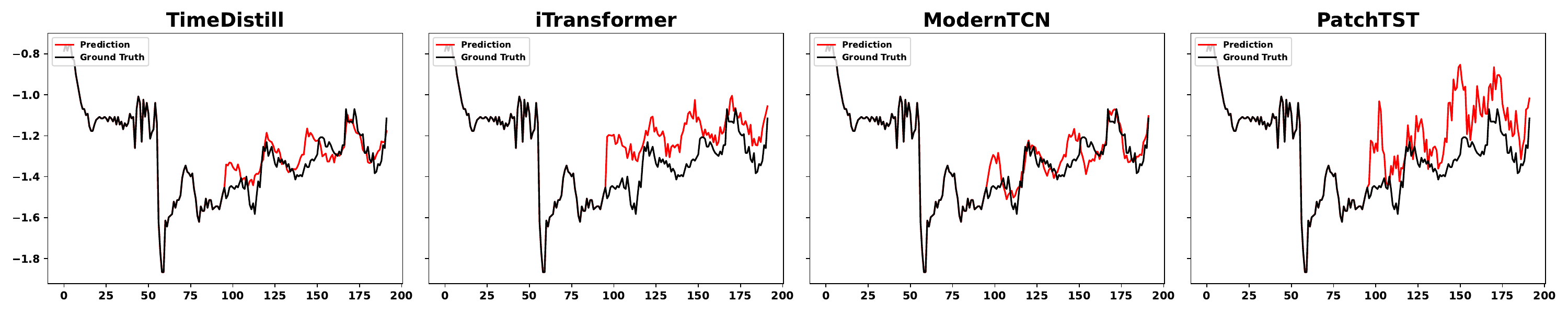}
    \caption{Prediction cases from ETTh1 by different models under the input-720-predict-96 settings.}
    \label{fig:ECL_case}
\end{figure*}

\begin{figure*}[h!]
    \centering
    \includegraphics[width=0.99\textwidth]{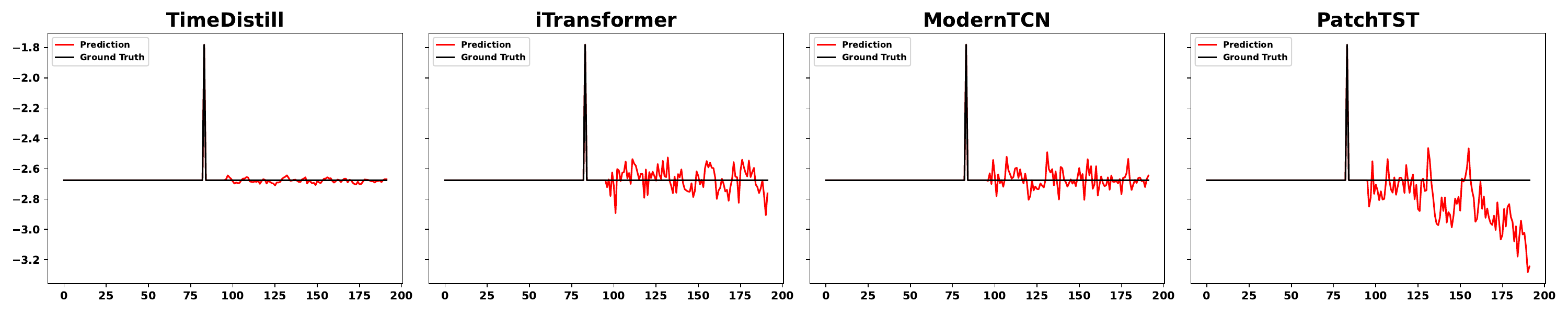}
    \caption{Prediction cases from ETTh2 by different models under the input-720-predict-96 settings.}
    \label{fig:ECL_case}
\end{figure*}

\begin{figure*}[h!]
    \centering
    \includegraphics[width=0.99\textwidth]{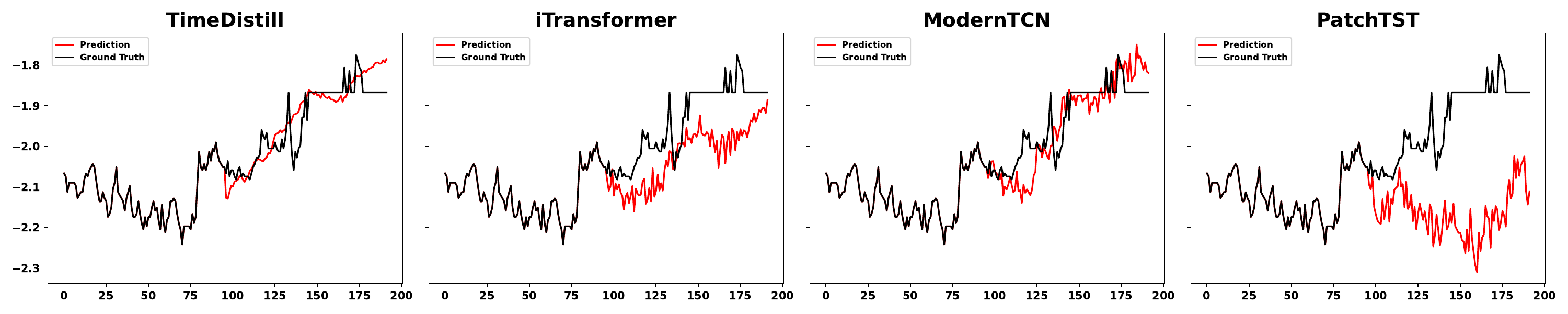}
    \caption{Prediction cases from ETTm1 by different models under the input-720-predict-96 settings.}
    \label{fig:ECL_case}
\end{figure*}

\begin{figure*}[h!]
    \centering
    \includegraphics[width=0.99\textwidth]{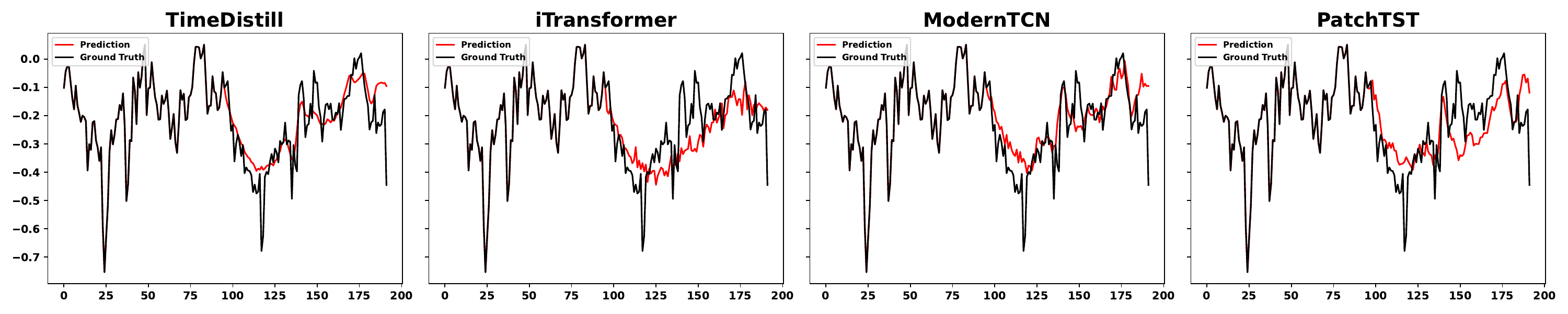}
    \caption{Prediction cases from ETTm2 by different models under the input-720-predict-96 settings.}
    \label{fig:ECL_case}
\end{figure*}

\begin{figure*}[h!]
    \centering
    \includegraphics[width=0.99\textwidth]{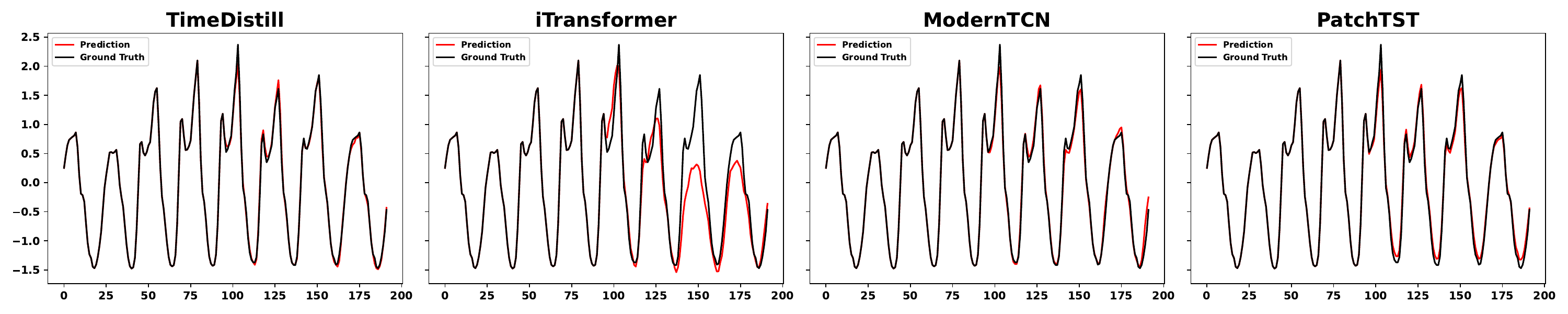}
    \caption{Prediction cases from Traffic by different models under the input-720-predict-96 settings.}
    \label{fig:ECL_case}
\end{figure*}

\begin{figure*}[h!]
    \centering
    \includegraphics[width=0.99\textwidth]{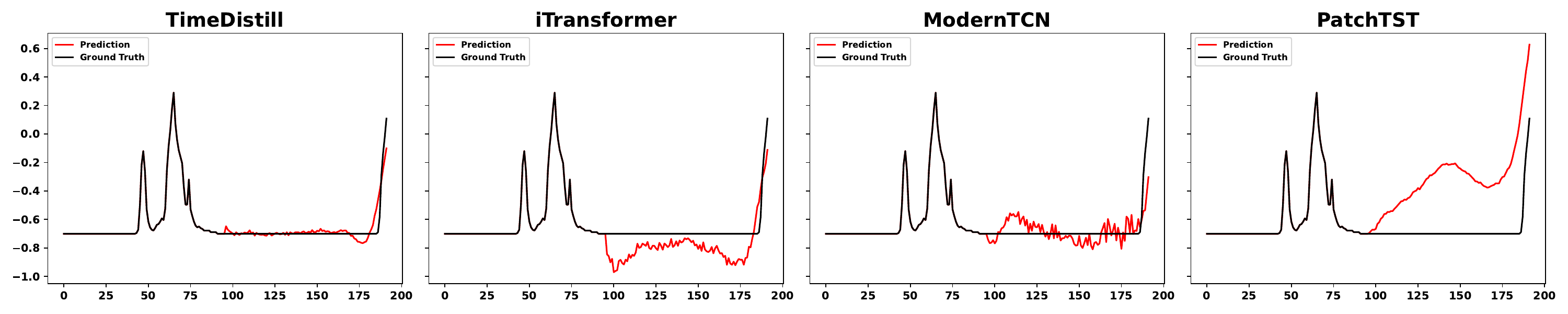}
    \caption{Prediction cases from Solar by different models under the input-720-predict-96 settings.}
    \label{fig:ECL_case}
\end{figure*}

\begin{figure*}[h!]
    \centering
    \includegraphics[width=0.99\textwidth]{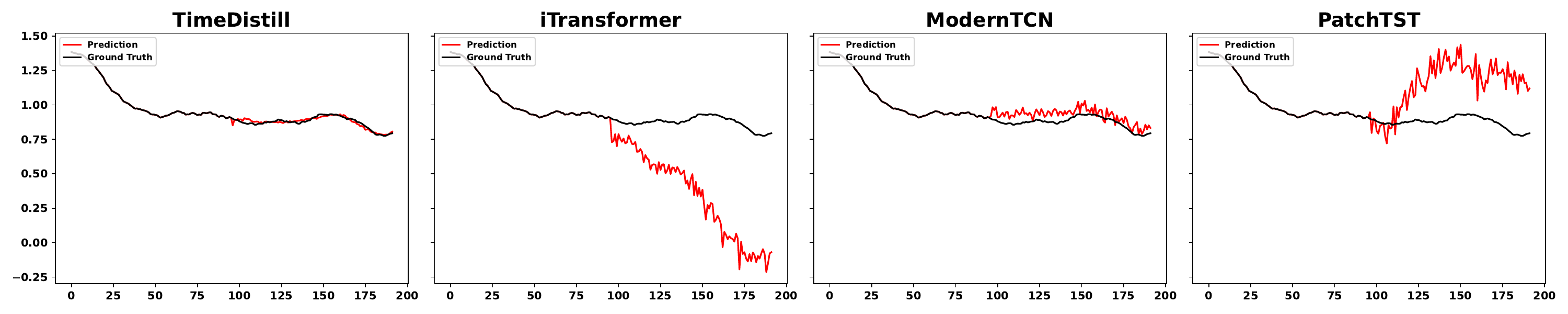}
    \caption{Prediction cases from Weather by different models under the input-720-predict-96 settings.}
    \label{fig:ECL_case}
\end{figure*}

\end{document}